\newcommand{\tb}[1]{{\textbf{#1}}}
\theoremstyle{plain}
\newtheorem{theorem}{Theorem}
\newtheorem{lemma}{Lemma}
\theoremstyle{definition}
\newtheorem{assumption}{Assumption}[section]
\theoremstyle{plain}
\newcommand{\fS}{\mathcal{S}}
\newcommand{\fA}{\mathcal{A}}
\newcommand{\pA}{\widetilde{A}}
\newcommand{\pB}{\widetilde{b}}
\newcommand{\fY}{\mathcal{Y}}
\newcommand{\fT}{\mathcal{T}}
\newcommand{\fO}{\mathcal{O}}
\newcommand{\tw}{\widetilde{w}}
\newcommand{\R}[1][]{\mathbb{R}^{#1}}
\newcommand{\indot}[2]{{\left<#1, #2\right>}}
\newcommand{\E}{\mathbb{E}}
\newcommand{\1}{\mathbf{1}}
\newcommand{\0}{\mathbf{0}}
\newcommand{\ns}{{\abs{\fS}}}
\newcommand{\tref}[1]{\text{\ref{#1}}}
\newcommand{\col}{\operatorname{col}}
\newcommand{\avft}{\overline \fT_\lambda}
\newcommand{\inner}[2]{\langle #1, #2 \rangle}
\newenvironment{assumptionp}[1]{
  
  \assumptionalt
}{\endassumptionalt}
\title{Finite Sample Analysis of Linear Temporal Difference Learning with Arbitrary Features}
\author{%
  Zixuan Xie\thanks{Equal contribution} \\
  University of Virginia\\
  \texttt{xie.zixuan@email.virginia.edu} \\
  \And
  Xinyu Liu$^*$ \\
  University of Virginia\\
  \texttt{xinyuliu@virginia.edu} \\
  \And
  Rohan Chandra \\
  University of Virginia\\
  \texttt{rohanchandra@virginia.edu} \\
  \And
  Shangtong Zhang \\
  University of Virginia\\
  \texttt{shangtong@virginia.edu} \\
}
\begin{document}

\maketitle

\begin{abstract}
Linear TD($\lambda$) is one of the most fundamental reinforcement learning algorithms for policy evaluation. Previously, convergence rates are typically established under the assumption of linearly independent features, which does not hold in many practical scenarios. This paper instead establishes the first $L^2$ convergence rates for linear TD($\lambda$) operating under arbitrary features, without making any algorithmic modification or additional assumptions. Our results apply to both the discounted and average-reward settings. To address the potential non-uniqueness of solutions resulting from arbitrary features, we develop a novel stochastic approximation result featuring convergence rates to the solution set instead of a single point.
\end{abstract}

\section{Introduction}



Temporal difference learning (TD, \citet{sutton1988learning}) is a fundamental algorithm in reinforcement learning (RL, \citet{sutton2018reinforcement}), enabling efficient policy evaluation by combining dynamic programming \citep{bellman1966dynamic} with stochastic approximation (SA, \citet{benveniste1990MP,kushner2003stochastic,borkar2009stochastic}). 
Its linear variant, linear TD($\lambda$) \citep{sutton1988learning}, emerges as a practical extension, employing linear function approximation to tackle large or continuous state spaces where tabular representations become impractical. Linear TD($\lambda$) takes the dot product between features and weights to compute the approximated value. 
Establishing theoretical guarantees for linear TD($\lambda$), particularly convergence rates, has been a major focus of research. 
Most existing works (Table~\ref{tab:linear_td_comparison}),
however,
require the features used in linear TD to be linearly independent.
As argued in \citet{wang2024sureconvergencelineartemporal},
this assumption is impractical in many scenarios.
For example, 
in continual learning with sequentially arriving data \citep{ring1994continual, khetarpal2022towards,abel2023definition},
there is no way to rigorously verify whether the features are independent or not.
See \citet{wang2024sureconvergencelineartemporal} for more discussion on the restrictions of the feature independence assumptions. 
Furthermore, \citet{dayan1992convergence,tsitsiklis1997analysis,tsitsiklis1999average} also outline the elimination of the linear independence assumption as a future research direction.

While efforts have been made to eliminate the linear independence assumption \citep{wang2024sureconvergencelineartemporal},
they only provide asymptotic (almost sure) convergence guarantees in the discounted setting.
By contrast,
this paper establishes \tb{the first $L^2$ convergence rates for linear TD($\lambda$) with arbitrary features in both discounted and average-reward settings}.
This success is enabled by a novel stochastic approximation result (Theorem~\tref{thm:sa}) concerning the convergence rates to a solution set instead of a single point,
driven by a novel Lyapunov function.
This new result provides a unified approach applicable to both discounted (Theorem~\tref{thm:td markov}) and average-reward (Theorem~\tref{thm:ar td markov}) settings. 
Notably, we do not make any algorithmic modification and do not introduce any additional assumptions.
Table~\tref{tab:linear_td_comparison} provides a detailed comparison of existing theoretical analyses for linear TD($\lambda$), contextualizing our contributions within the landscape of prior work.

\begin{table}[h!]
    \centering
    \begin{tabular}{c|c|c|c|c}
    \toprule
     & Setting & Features & \makecell{Noise \\ Type}  & Rate \\
    \midrule
    \citet{tsitsiklis1997analysis} & $\gamma < 1$ & Independent & Markovian &  \\
    \citet{bhandari2018finite} & $\gamma < 1$& Independent & Markovian & $\checkmark$ \\
    \citet{lakshminarayanan2018linear} & $\gamma < 1$ & Independent & i.i.d. & $\checkmark$ \\
    \citet{srikant2019finite} & $\gamma < 1$& Independent & Markovian & $\checkmark$ \\
    \citet{wang2024sureconvergencelineartemporal} & $\gamma < 1$ & Arbitrary & Markovian & \\
    \citet{chen2025concentration} & $\gamma < 1$  & Independent & i.i.d. & $\checkmark$ \\ 
    \citet{mitra2024simple}& $\gamma < 1$& Independent & Markovian & $\checkmark$ \\
    Theorem~\tref{thm:td markov} & $\gamma < 1$  & Arbitrary & Markovian & $\checkmark$ \\
    \midrule
    \citet{tsitsiklis1999average} & $\gamma = 1$ & Independent & Markovian & \\
    \citet{zhang2021finite} & $\gamma = 1$ & Independent & Markovian & $\checkmark$ \\
    \citet{chen2025nonasymptotic}& $\gamma = 1$& Independent & Markovian & $\checkmark$ \\
    Theorem~\tref{thm:ar td markov} & $\gamma = 1$ & Arbitrary & Markovian & $\checkmark$ \\
    \bottomrule
    \end{tabular}
    \caption{Comparison of finite-sample analyses for linear TD($\lambda$).
        ``Setting'' indicates the problem setting: $\gamma < 1$ stands for the discounted setting and $\gamma = 1$ stands for the average reward setting.
      ``Features'' describes assumptions on the features. ``Independent'' indicates linear independence is assumed. ``Arbitrary'' indicates no assumption is made on features.
      ``Noise Type'' indicates the data generation process: Markovian samples or independent and identically distributed (i.i.d.) samples.
      ``Rate'' is checked if a convergence rate is provided.
    }
    \label{tab:linear_td_comparison}
\end{table}

\section{Background}
\label{sec:background}
\tb{Notations.}
We use $\langle x,y \rangle \doteq x^\top y$ to denote the standard inner product in Euclidean spaces
and $\norm{\cdot}$ to denote the $\ell_2$ norm for vectors and the associated induced operator norm (i.e., the spectral norm) for matrices, unless stated otherwise. 
A function $f$ is said to be $L$-smooth (w.r.t. $\norm{\cdot}$) if $\forall w, w'$, $f(w') \leq f(w) + \langle\nabla f(w), w' - w\rangle + \frac{L}{2} \norm{w' - w}^2$. For a matrix $A$, $\col(A)$ denotes its column space, $\ker(A)$ denotes its kernel, and $A^\dagger$ denotes its Moore-Penrose inverse. When $x$ is a point and $U$ is a set, we denote $d(x,U) \doteq \inf_{y\in U} \norm{x - y}$ as the Euclidean distance from $x$ to $U$.
For sets $U, V$, their Minkowski sum is $U+V \doteq \{u+v \mid u \in U, v \in V\}$; and $U^\perp$ denotes the orthogonal complement of $U$.
We use $\0$ and $\1$ to denote the zero vector and the all-ones vector respectively, where the dimension is clear from context.
For any square matrix $A \in \R[d \times d]$ (not necessarily symmetric),
we say $A$ is negative definite (n.d.) if there exists a $\xi > 0$ such that $x^\top A x \leq -\xi \norm{x}^2 \ \forall x \in \R[d]$.
For any set $E \subseteq \R[d]$,
we say $A$ is \underline{n.d. on $E$} if there exists a $\xi > 0$ such that $x^\top A x \leq -\xi \norm{x}^2 \, \forall x \in E$.
A is negative semidefinite (n.s.d.) if $\xi = 0$ in the above definition.
 
\tb{Markov Decision Processes.} We consider an infinite horizon Markov Decision Process (MDP, \citet{bellman1957markovian})
defined by a tuple $(\fS, \fA, p, r, p_0)$, where $\fS$ is a finite set of states, $\fA$ is a finite set of actions, $p: \fS \times \fS \times \fA \to \qty[0,1]$ is the transition probability function, $r: \fS \times \fA \to \R$ is the reward function, 
and $p_0: \fS \to [0, 1]$ denotes the initial distribution.
In this paper, we focus on the policy evaluation problem, where the goal is to estimate the value function of an arbitrary policy $\pi: \fA \times \fS \to [0, 1]$.
At the time step $0$,
an initial state $S_0$ is sampled from $p_0$.
At each subsequent time step $t$, the agent observes state $S_t \in \fS$,
executes an action $A_t \sim \pi(\cdot | S_t)$, receives reward $R_{t+1} \doteq r(S_t, A_t)$, and transitions to the next state $S_{t+1} \sim p(\cdot | S_t, A_t)$. 
We use $P_\pi$ to denote the state transition matrix induced by the policy $\pi$, i.e., $P_\pi\qty[s,s']=\sum_{a\in\fA}\pi(a|s)p(s'|s,a)$.
Let $d_\pi \in \mathbb{R}^{|\fS|}$ be the stationary distribution of the Markov chain induced by the policy $\pi$.
 We use $D_\pi$ to denote the diagonal matrix whose diagonal is $d_\pi$.

\tb{Linear Function Approximation.}
In this paper,
we use linear function approximation to approximate value functions $v_\pi: \fS \to \R$ (to be defined shortly). 
We consider a feature mapping $x: \fS \to \R[d]$ and a weight vector $w \in \R[d]$.
We then approximate $v_\pi(s)$ with $x(s)^\top w$.
We use $X \in \R[\ns \times d]$ to denote the feature matrix, 
where the $s$-th row of $X$ is $x(s)^\top$. The approximated state-value function across all states can then be represented as the vector $Xw \in \R[\ns]$.
The goal is thus to find a $w$ such that $Xw$ closely approximates $v_\pi$.


\tb{Discounted Setting.}
In the discounted setting, we introduce a discount factor $\gamma \in [0, 1)$.
The (discounted) value function $v_\pi : \fS \to \R$ for policy $\pi$ is defined as
  $\textstyle v_\pi(s) \doteq \E
  \qty[\sum_{i=0}^\infty \gamma^i R_{t+i+1} \middle| S_t = s]$.
We define the Bellman operator $\fT: \R[\ns] \to \R[\ns]$ as $\fT v \doteq r_\pi + \gamma P_\pi v$, 
where $r_\pi \in \mathbb{R}^{\ns}$ is the vector of expected immediate rewards under $\pi$, with components $r_\pi(s) = \sum_a \pi(a|s)r(s, a)$.
With a $\lambda \in [0, 1]$,
the $\lambda$-weighted Bellman operator $\fT_\lambda$ is defined as
$\textstyle \fT_\lambda v \doteq (1 - \lambda) \sum_{m=0}^{\infty} \lambda^m \fT^{m+1}v = r_{\lambda} + \gamma P_\lambda v$,
where
\begin{align}
  \textstyle r_\lambda =& \textstyle \sum_{k=0}^{\infty} (\lambda \gamma)^k P_\pi^k r_\pi = (1 - \gamma \lambda P_\pi)^{-1} r_\pi, \\  
  \textstyle P_\lambda =& \textstyle (1 - \lambda) \sum_{m=0}^{\infty} (\lambda \gamma)^m P_\pi^{m+1} = (1 - \lambda)(1 - \gamma \lambda P_\pi)^{-1} P_\pi.
\end{align}
This represents a weighted average of multi-step applications of $\fT$.
It is well-known that $v_\pi$ is the unique fixed point of $\fT_\lambda$ \citep{bertsekas1996neuro}.
Linear TD($\lambda$) is a family of TD learning algorithms that use eligibility traces to estimate $v_\pi(s)$ of the fixed policy $\pi$ with linear function approximation. 
The algorithm maintains a weight vector $w_t \in \R[d]$ and an eligibility trace vector $e_t \in \R[d]$, with the following update rules:
\begin{align}
    &w_{t+1} = w_t + \alpha_t (R_{t+1} + \gamma x(S_{t+1})^\top w_t - x(S_t)^\top w_t) e_t, \\
\label{eq:td lambda}
    &e_t = \gamma \lambda e_{t-1} + x(S_t), \, e_{-1} = \0.
    \tag{Discounted TD}
\end{align}
Here, $\qty{\alpha_t}$ is the learning rate. The eligibility trace $e_t$ tracks recently visited states, assigning credit for the prediction error to multiple preceding states.
Let
\begin{align}
    A \doteq X^\top D_\pi(\gamma P_\lambda - I)X,\, b \doteq X^\top D_\pi r_\lambda, \, W_* \doteq \qty{w | Aw + b = \0}.
\end{align}
If $X$ has a full column rank, \citet{tsitsiklis1997analysis} proves that $W_*$ is a singleton and $\qty{w_t}$ converge to $-A^{-1}b$ almost surely.
A key result used by \citet{tsitsiklis1997analysis} is that the matrix $D_\pi(\gamma P_\lambda - I)$ is n.d. \citep{sutton1988learning}.
As a result, the $A$ matrix is also n.d. when $X$ has a full column rank.
\citet{wang2024sureconvergencelineartemporal} prove, without making any assumption on $X$, that
$W_*$ is always nonempty and the $\qty{w_t}$ converges to $W_*$ almost surely.
A key challenge there is that without making assumptions on $X$, 
$A$ is only n.s.d.


\tb{Average-Reward Setting.}
In the average-reward setting,
the overall performance of a policy $\pi$ is measured by the average reward
$J_\pi \doteq \lim_{T \to \infty} \frac{1}{T} \E\left[\sum_{t=0}^{T-1} R_{t}\right]$. 
The corresponding (differential) value function is defined as
$\textstyle \overline v_\pi(s) = \lim_{T\to\infty} \frac{1}{T}\sum_{i=0}^{T-1} \E \qty[(r(S_{t+i}, A_{t+i}) - J_\pi) \middle | S_t = s]$.
We define the Bellman operator $\overline \fT: \R[\ns] \to \R[\ns]$ as $\overline \fT v \doteq r_\pi - J_\pi\1 + P_\pi v$.
Similarly,
the $\lambda$-weighted counterpart $\overline \fT_\lambda$ is defined as
$\avft v \doteq r_\lambda - \frac{J_\pi}{1 - \lambda} \1 + P_\lambda v$.
Although $\overline v_\pi$ is a fixed point of $\overline \fT_\lambda$,
it is not the unique fixed point.
In fact, 
\begin{align}
    \label{eq ar fixed point}
  \textstyle \qty{\overline v_\pi + c \1 \mid c \in \R}  
\end{align}
are all the fixed points of $\overline \fT_\lambda$ \citep{puterman2014markov}.
Linear average-reward TD($\lambda$) is an algorithm for estimating both $J_\pi$ and $\overline v_\pi$ using linear function approximation and eligibility traces. The update rules are
\begin{align}
    e_t &= \lambda e_{t-1} + x(S_t), \, e_{-1} = \0,\\
    w_{t+1} &= w_t + \alpha_t (R_{t+1} - \hat{J}_t + x(S_{t+1})^\top w_t - x(S_t)^\top w_t) e_t, \notag\\
    \hat{J}_{t+1} &= \hat{J}_t + \beta_t (R_{t+1} - \hat{J}_t),
    \label{eq:artd}
    \tag{Average Reward TD}
\end{align}
where $\qty{\alpha_t}$ and $\qty{\beta_t}$ are learning rates. 
Let
\begin{align}
\label{eq:bar W*}
    \textstyle \overline A \doteq X^\top D_\pi (P_\lambda - I)X, \, \overline b \doteq X^\top D_\pi (r_\lambda - \frac{J_\pi}{1 - \lambda} \1), \, \overline W_* \doteq \qty{w | \overline A w + \overline b = \0}.
\end{align}
If $X$ has a full column rank and $\1 \notin \col(X)$,
\citet{tsitsiklis1999average} proves that $\overline W_*$ is a singleton and $\qty{w_t}$ converge to $-\overline A^{-1} \overline b$ almost surely.
This is made possible by an important fact from the Perron-Frobenius theorem (see, e.g., \citet{seneta2006non}) that 
\begin{align}
    \label{eq nsd solution}
  \qty{w | w^\top D_\pi (P_\lambda - I) w = 0} = \qty{c\1 | c\in \R}.
\end{align}
\citet{zhang2021finite} further provides a convergence rate, still assuming $X$ has a full column rank but without assuming $\1 \notin \col(X)$.
When $X$ does not have a full column rank, 
to our knowledge,
it is even not clear whether $\overline W_*$ is always nonempty or not,
much less the behavior of $\qty{w_t}$.

\section{Main Results}
\label{thms}
We start with our assumptions. As promised, we do not make any assumption on $X$.
\begin{assumption}
\label{assum:markov}
    The Markov chain associated with $P_\pi$ is irreducible and aperiodic. 
\end{assumption}
\begin{assumptionp}{LR}
  \label{assu lr}
  The learning rates are $\alpha_t = \frac{\alpha}{(t + t_0)^{\xi}}$ and $\beta_t =c_\beta\alpha_t$,  
  where $\xi \in (0.5,1]$, $\alpha>0$, $t_0>0$, and $c_\beta>0$ are constants.
\end{assumptionp}
\paragraph{Discounted Setting.}
\citet{wang2024sureconvergencelineartemporal} proves the almost sure convergence of \eqref{eq:td lambda} with arbitrary features by using $\norm{w - w_*}^2$ with an arbitrary and fixed $w_* \in W_*$ as a Lyapunov function and analyzing the property of the ODE $\dv{w(t)}{t} = Aw(t)$.
Since $A$ is only n.s.d.,
\citet{wang2024sureconvergencelineartemporal} conducts their analysis in the complex number field.
In this work,
instead of following the ODE-based analysis originating from \citet{tsitsiklis1997analysis,borkar2000ode},
we extend \citet{srikant2019finite} to obtain convergence rates
by using $d(w, W_*)^2$ as the Lyapunov function.
To our knowledge,
this is the first time that such distance function to a set is used as the Lyapunov function to analyze RL algorithms,
which is our key technical contribution from the methodology aspect.
According to Theorem 1 of \citet{wang2024sureconvergencelineartemporal}, $W_*$ is nonempty, and apparently convex and closed.\footnote{This theorem only discusses the case of $\lambda = 0$. The proof for a general $\lambda \in [0, 1]$ is exactly the same up to change of notations.}
Let $\Gamma(w) \doteq \arg\min_{w_*\in W_*} \norm{w - w_*}$ be the orthogonal projection to $W_*$.
We then define
    $\textstyle L(w) \doteq \frac{1}{2} d(w, W_*)^2 = \frac{1}{2}\norm{w - \Gamma(w)}^2$.
Two important and highly non-trivial observations are 
\begin{itemize}
    \item[(i)] $\nabla L(w) = w - \Gamma(w)$ (Example 3.31 of \citet{beck2017first}),
    \item[(ii)] $L(w)$ is 1-smooth w.r.t. $\norm{\cdot}$ (Example 5.5 of \citet{beck2017first}).
\end{itemize}
Both (i) and (ii) result from the fact that $W_*$ is nonempty, closed, and convex.
Using $L(w)$ as the Lyapunov function together with more characterization of $\nabla L(w)$ (Section~\ref{sec TD rate}),
we obtain
%
\begin{theorem}
\label{thm:td markov}
Let Assumptions~\tref{assum:markov} and \tref{assu lr} hold and $\lambda \in [0, 1]$.
Then for sufficiently large $t_0$ and $\alpha$, there exist some constants $C_\text{Thm\ref{thm:td markov}}$ and $\kappa_\tref{thm:td markov}\doteq \alpha C_\tref{prop:a}>1$ such that the iterates $\qty{w_t}$ generated by~\eqref{eq:td lambda} satisfy for all $t$
\begin{align*}
    \textstyle \E\qty[d(w_t, W_*)^2] \leq C_\text{Thm\ref{thm:td markov}}\qty(\qty(\frac{t_0}{t})^{\lfloor \kappa_\tref{thm:td markov} \rfloor}d(w_0, w_*)^2 + \qty(\frac{\ln (t+t_0)}{(t+t_0)^{\min(2\xi-1, \lfloor \kappa_\tref{thm:td markov} \rfloor-1)}})).
\end{align*}
\end{theorem}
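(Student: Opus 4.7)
The plan is to adapt the finite-sample framework of \citet{srikant2019finite} from the case of a single fixed point to the case of a solution set, with $L(w) \doteq \tfrac{1}{2} d(w, W_*)^2$ as the Lyapunov function. The starting observation is a structural property of the mean matrix $A = X^\top D_\pi (\gamma P_\lambda - I) X$: since $D_\pi (\gamma P_\lambda - I)$ has negative definite symmetric part on $\R[\ns]$ whenever $\gamma < 1$, the identity $x^\top A x = (Xx)^\top D_\pi(\gamma P_\lambda - I)(Xx)$ forces $\ker(A) = \ker(X)$, and a compactness argument upgrades this to $x^\top A x \le -C_{\tref{prop:a}}\|x\|^2$ for every $x \in \ker(A)^\perp$ with some $C_{\tref{prop:a}} > 0$. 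This is the content of Proposition~\tref{prop:a} and supplies the constant $\kappa_{\tref{thm:td markov}} = \alpha C_{\tref{prop:a}}$.

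The key algebraic step is the decomposition
\begin{align}
A_t w_t + b_t = A_t\bigl(w_t - \Gamma(w_t)\bigr) + \bigl(A_t w_* + b_t\bigr),
\end{align}
valid for any fixed reference $w_* \in W_*$. Here $A_t \doteq e_t(\gamma x(S_{t+1}) - x(S_t))^\top$ and $b_t \doteq R_{t+1} e_t$, so both the column and row spaces of $A_t$ lie in $\col(X^\top) = \ker(X)^\perp$; in particular $A_t$ annihilates $\ker(A) = \ker(X)$, and since $\Gamma(w_t) - w_* \in \ker(A)$ the decomposition follows. This identity is essential: the first term is driven entirely by $w_t - \Gamma(w_t) \in \ker(A)^\perp$, which has norm $\sqrt{2L(w_t)}$ and triggers the contraction constant $C_{\tref{prop:a}}$, while the second term has stationary mean $A w_* + b = \0$. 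The iterate $w_t$ itself may drift unboundedly along $\ker(A)$, but the analysis never has to control $\|w_t\|$ directly.

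Using $\nabla L(w) = w - \Gamma(w)$ and the $1$-smoothness of $L$, I obtain the one-step bound
\begin{align}
L(w_{t+1}) \le L(w_t) + \alpha_t \langle w_t - \Gamma(w_t),\ A_t w_t + b_t\rangle + \tfrac{\alpha_t^2}{2}\|A_t w_t + b_t\|^2.
\end{align}
Taking expectations, the inner product would contribute $-2 C_{\tref{prop:a}} L(w_t)$ if sampling were i.i.d. To close the gap under Markovian sampling I would use the now-standard lagged-iterate trick of \citet{srikant2019finite}: introduce the mixing time $\tau_t$ of the extended chain $(S_t, e_t)$ (geometrically ergodic under Assumption~\tref{assum:markov} since $\gamma\lambda < 1$ keeps $e_t$ uniformly bounded in $\col(X^\top)$), and substitute $w_t \mapsto w_{t-\tau_t}$ inside the inner product. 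The non-expansiveness of $\Gamma$ (from $W_*$ being closed and convex) gives $\|w_t - \Gamma(w_{t-\tau_t})\| \le \|w_t - \Gamma(w_t)\| + \|w_t - w_{t-\tau_t}\|$, so every residual term is controlled by $\sqrt{L(w_t)}$ plus step-size multiples of $\tau_t$, never by $\|w_t\|$.

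Assembling these estimates yields a drift recursion of the form
\begin{align}
\E[L(w_{t+1})] \le \bigl(1 - 2\alpha_t C_{\tref{prop:a}} + O(\alpha_t \alpha_{t-\tau_t} \tau_t)\bigr)\E[L(w_t)] + O(\alpha_t^2 \tau_t),
\end{align}
which, for $\alpha$ and $t_0$ sufficiently large, I would unroll via the standard polynomial-stepsize calculus. Iterating gives the $(t_0/t)^{\lfloor \kappa_{\tref{thm:td markov}} \rfloor}$ decay of the initial condition and the $\ln(t+t_0)/(t+t_0)^{\min(2\xi - 1,\ \lfloor \kappa_{\tref{thm:td markov}} \rfloor - 1)}$ noise floor; the floor function appears because $\prod_s(1 - 2\alpha_s C_{\tref{prop:a}})$ weighed against $\sum_s \alpha_s^2 \tau_s$ forces an integer-exponent bottleneck. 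I expect the main obstacle to be the lagged-substitution step: one has to keep every residual term bounded by $\sqrt{L(w_t)}$ rather than by $\|w_t\|$, which requires carefully exploiting both the fact that $A_t$ annihilates $\ker(X)$ and the non-expansiveness of $\Gamma$ at each replacement.
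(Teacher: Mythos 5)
Your proposal is correct and follows essentially the same route as the paper: the Lyapunov function $L(w)=\tfrac12 d(w,W_*)^2$ with $\nabla L(w)=w-\Gamma(w)$ and $1$-smoothness, the negative definiteness of $A$ on $\ker(A)^\perp$ combined with $w-\Gamma(w)\in\ker(A)^\perp$ (the paper's Lemma~\tref{prop:a}), the lagged-iterate treatment of Markovian noise \`a la \citet{srikant2019finite}, and the same drift recursion and unrolling. The only cosmetic differences are that the paper packages the argument as a general stochastic-approximation theorem (Theorem~\tref{thm:sa}) verified against abstract assumptions, and controls the noise magnitude through $\norm{Xw}\leq \norm{X}\norm{w-\Gamma(w)}+C$ rather than your per-sample decomposition $A_t w_t + b_t = A_t(w_t-\Gamma(w_t)) + (A_t w_* + b_t)$, which accomplishes the same thing.
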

The proof is in Section~\ref{sec TD rate}. Notably, Lemma~3 of \citet{wang2024sureconvergencelineartemporal} states that
for any $w_*, w_{**} \in W_*$,
it holds that $Xw_* = Xw_{**}$.
We then define 
\begin{align}
    \label{eq limiting value}
  \hat v_\pi \doteq Xw_*  
\end{align}
for any $w_* \in W_*$.
Theorem~\ref{thm:td markov} then also gives the $L^2$ convergence rate of the value estimate, i.e., the rate at which $Xw_t$ converges to $\hat v_\pi$.
The value estimate $\hat v_\pi$ is the unique fixed point of a projected Bellman equation. 
See \citet{wang2024sureconvergencelineartemporal} for more discussion on the property of $\hat v_\pi$.
Additionally, 
by choosing a sufficiently large $\alpha$, we can ensure $\lfloor\kappa_\tref{thm:td markov}\rfloor - 1 \ge 2\xi - 1$, so the rate is determined by the exponent $2\xi - 1$. For the standard choice $\xi=1$, the resulting rate becomes $\mathcal{O}(\ln t/t)$, which matches existing analyses that assume 
linearly independent features \citep{bhandari2018finite,srikant2019finite}.
An analogous observation holds for Theorems~\ref{thm:ar td markov} and 
\ref{thm:sa} as well, since their corresponding $\kappa$ 
is also proportional to $\alpha$.

\paragraph{Average Reward Setting.}
Characterizing $\overline W_*$ is much more challenging.
We first present a novel decomposition of the feature matrix $X$.
To this end,
define $m \doteq \rank(X)\leq \min\qty{\ns, d}$.
If $m = 0$, all the results in this work are trivial and we thus discuss only the case $m \geq 1$.
\begin{lemma}
\label{lem:z1z2}
There exist matrices \( X_1, X_2 \) such that $X = X_1 + X_2$ with the following properties (1) $\rank(X_1) = m - \mathbb{I}_{\1 \in \col(X)}$ and $\1 \notin X_1$
    (2) $X_2 = \1 \theta^\top$ with $\theta \in \R[d]$.
\end{lemma}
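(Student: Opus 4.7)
The plan is to give an explicit construction using the orthogonal projection onto $\1^\perp$ inside $\R[\ns]$. Let $n \doteq \ns$ and define $P \doteq I - \tfrac{1}{n}\1\1^\top$, the orthogonal projection onto $\{x\in\R[n] : \1^\top x = 0\}$. Then set
\begin{align*}
    X_1 \doteq PX, \qquad X_2 \doteq X - X_1 = \tfrac{1}{n}\1\1^\top X = \1\theta^\top, \quad \theta \doteq \tfrac{1}{n}X^\top\1.
\end{align*}
This immediately produces $X = X_1 + X_2$ and gives property (2) with the explicit $\theta$ displayed above.

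For property (1), I would first verify $\1\notin\col(X_1)$ by noting that the range of $P$ is exactly $\{x\in\R[n]: \1^\top x = 0\}$, so every column of $X_1 = PX$ is orthogonal to $\1$; since $\1^\top\1 = n > 0$, the vector $\1$ itself does not lie in $\col(X_1)$. For the rank identity, I would analyze $\ker(X_1) = \{w : Xw\in\ker(P)\} = \{w : Xw\in\operatorname{span}(\1)\}$ and split into two cases. If $\1\notin\col(X)$, the condition $Xw\in\operatorname{span}(\1)$ forces $Xw = \0$, so $\ker(X_1)=\ker(X)$ and $\rank(X_1)=m$. If $\1\in\col(X)$, fix any $u$ with $Xu=\1$; then for $w\in\ker(X_1)$ write $Xw = c\1 = cXu$, so $w - cu \in \ker(X)$, yielding the direct-sum decomposition $\ker(X_1) = \ker(X)\oplus\operatorname{span}(u)$ (directness uses $u\notin\ker(X)$, which is immediate from $Xu=\1\neq\0$). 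The nullity thus increases by exactly one, giving $\rank(X_1) = m - 1$. Combining the two cases yields $\rank(X_1) = m - \mathbb{I}_{\1\in\col(X)}$, as required.

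The argument is essentially a standard linear-algebraic decomposition, and I do not anticipate any real obstacle. The only delicate point is the rank bookkeeping in the case $\1\in\col(X)$, where one must verify that the preimage $u$ of $\1$ genuinely enlarges $\ker(X)$; this is immediate because $Xu=\1\neq\0$. An equivalent alternative framing would be to select any $(m-1)$-dimensional complement $U$ of $\operatorname{span}(\1)$ inside $\col(X)$ (when $\1\in\col(X)$), decompose each column of $X$ along $U\oplus\operatorname{span}(\1)$, and collect the two components into $X_1$ and $\1\theta^\top$ respectively. The projection $P$ simply performs this decomposition canonically, which is why it handles both cases uniformly.
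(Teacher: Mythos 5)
Your proof is correct, but it takes a genuinely different route from the paper's. You construct the decomposition canonically via the orthogonal projection $P = I - \tfrac{1}{n}\1\1^\top$, setting $X_1 = PX$ and $\theta = \tfrac{1}{n}X^\top\1$, and then compute $\rank(X_1)$ through the kernel identity $\ker(X_1) = \{w : Xw \in \operatorname{span}(\1)\}$; the case split and the direct-sum bookkeeping in the case $\1 \in \col(X)$ are all handled correctly. The paper instead selects $m$ linearly independent columns of $X$, and when $\1 \in \col(X)$ rewrites one of them as a combination of $\1$ and the remaining $m-1$, producing the block form $X_1 = \mqty[Z_1 & Z_1C]$ with $Z_1$ of full column rank and $\theta$ supported only on the last $d-m+1$ coordinates. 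Your approach is cleaner and treats both cases uniformly, which is a genuine advantage for proving the lemma as stated (an existence claim). The trade-off is that the paper's construction carries extra structure that its later arguments reuse: the proof that $\overline W_*$ is nonempty (Lemma~\tref{lem:w*}) explicitly invokes the block form $X_1 = \mqty[Z_1 & Z_1C]$ from~\eqref{eq:defx1x2}, not merely the properties asserted in the lemma statement. So if one substituted your construction, that downstream proof would need a small adaptation (e.g., extracting a full-column-rank factor of $PX$), though nothing essential would break.
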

The proof is in Section~\ref{proof:z1z2} with $\mathbb{I}$ being the indicator function.
Essentially, $X_2$ is a rank one matrix with identical rows $\theta$ (i.e., the $i$-th column of $X_2$ is $\theta_i \1$). 
To our knowledge,
this is the first time that such decomposition is used to analyze average-reward RL algorithms,
which is our second technical contribution from the methodology aspect.
This decomposition is useful in three aspects.
First, we have $\overline A = X_1^\top D_\pi (P_\lambda - I)X_1$ (Lemma~\tref{lem:A bar}).
Second, this decomposition is the key to prove that $\overline W_*$ is nonempty (Lemma~\tref{lem:w*}).
Third, this decomposition is the key to characterize $\overline W_*$ in that $\overline W_* = \qty{\overline w_*} + \ker(X_1)$ with $\overline w_*$ being any vector in $\overline W_*$ (Lemma~\tref{lem:fix_points}).
To better understand this characterization,
we note that $\ker(X_1) = \qty{w | Xw = c\1, c \in \R}$ (Lemma~\tref{lem:fix_points}).
As a result, 
adding any $w_\0 \in \ker(X_1)$ to a weight vector $w$ changes the resulting value function $Xw$ only by $c\mathbf{1}$.
Two values $v_1$ and $v_2$ can be considered ``duplication'' if $v_1 - v_2 = c\1$ (cf.~\eqref{eq ar fixed point}).
So intuitively, $\ker(X_1)$ is the source of the ``duplication''. 
With the help of this novel decomposition, we obtain
\begin{theorem}
\label{thm:ar td markov}
Let Assumptions~\tref{assum:markov} and \tref{assu lr} hold and $\lambda\in[0,1)$.
Then for sufficiently large $\alpha$, $t_0$ and $c_\beta$, there exist some constants $C_\text{Thm\ref{thm:ar td markov}}$ and $\kappa_\tref{thm:ar td markov}\doteq \alpha C_\tref{lem negdef pA}>1$ such that the iterates $\qty{w_t}$ generated by~\eqref{eq:artd} satisfy for all $t$
\begin{align*}
    \textstyle \E\qty[(\hat{J}_t - J_\pi)^2 + d(w_t, \overline W_*)^2] \leq&\textstyle  C_\text{Thm\ref{thm:ar td markov}}\qty(\frac{t_0}{t})^{\lfloor \kappa_\tref{thm:ar td markov} \rfloor}\qty[(\hat{J}_0 - J_\pi)^2 + d(w_0,\overline W_*)^2] \\
    &\textstyle + C_\text{Thm\ref{thm:ar td markov}}\qty(\frac{\ln (t+t_0)}{(t+t_0)^{\min(2\xi-1, \lfloor \kappa_\tref{thm:ar td markov} \rfloor-1)}}).
\end{align*}
\end{theorem}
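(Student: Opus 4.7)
The plan is to extend the distance-to-solution-set Lyapunov approach from the discounted case to the joint iteration $(w_t, \hat J_t)$ and to verify the premises of the abstract stochastic approximation result (Theorem~\tref{thm:sa}). I would first invoke the decomposition of Lemma~\tref{lem:z1z2} and the resulting characterization $\overline W_* = \qty{\overline w_*} + \ker(X_1)$ (Lemma~\tref{lem:fix_points}) to conclude that $\overline W_*$ is nonempty, closed, and affine. This legitimizes the projection $\overline \Gamma(w) \doteq \argmin_{w_* \in \overline W_*} \norm{w - w_*}$ and the Lyapunov function
\[
L(w, J) \doteq \tfrac{1}{2}(J - J_\pi)^2 + \tfrac{1}{2}\, d(w, \overline W_*)^2,
\]
which is $1$-smooth in $w$ with $\nabla_w L(w,J) = w - \overline \Gamma(w)$ by Examples~3.31 and~5.5 of \citet{beck2017first}, and obviously smooth in $J$.

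Next I would compute the expected drift of $L(w_t, \hat J_t)$. The $\hat J$-recursion contributes $-\beta_t (\hat J_t - J_\pi)^2$ plus a mean-zero term and a Markovian bias, which is classical. For the $w$-recursion, a direct expansion yields a dominant deterministic term $\alpha_t \langle w_t - \overline \Gamma(w_t),\, \overline A w_t + \overline b \rangle$ plus cross terms involving $\hat J_t - J_\pi$ (since the sampled TD error uses $\hat J_t$ rather than $J_\pi$) and Markovian noise $(\overline A_t - \overline A,\, \overline b_t - \overline b)$. Using $\overline A = X_1^\top D_\pi (P_\lambda - I) X_1$ from the decomposition together with the key fact that $w_t - \overline \Gamma(w_t) \in \ker(X_1)^\perp$ (orthogonal projection onto an affine subspace), I expect the negative-definiteness lemma referenced as Lemma~\tref{lem negdef pA} to deliver
\[
\langle w - \overline \Gamma(w),\, \overline A w + \overline b \rangle \leq -C_\tref{lem negdef pA}\, d(w, \overline W_*)^2.
\]
This is the crux step that bypasses the usual full-rank assumption on $\overline A$ and is where the decomposition of Lemma~\tref{lem:z1z2} earns its keep. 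I would then combine this with the $\hat J$-drift, using Young's inequality to split the $\hat J_t - J_\pi$ cross terms between the two negative drifts, and choose $c_\beta$ large enough so that the $(\hat J_t - J_\pi)^2$ contribution arising from the $w$-dynamics is dominated by the $\beta_t (\hat J_t - J_\pi)^2$ drift of $\hat J$.

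The main obstacle, as in the discounted case, is controlling the Markovian noise when the Lyapunov function is a squared distance to a set rather than to a fixed point. The standard Srikant--Ying--style mixing argument requires bounding expressions of the form $\E\qty[\langle w_t - \overline \Gamma(w_t),\, (\overline A_t - \overline A) w_t + (\overline b_t - \overline b)\rangle]$ by shifting the comparison point from time $t$ to $t - \tau$ for a mixing window $\tau$ chosen according to Assumption~\tref{assum:markov}. Here I would exploit that $\overline \Gamma$ is $1$-Lipschitz (since $\overline W_*$ is closed and convex), so $\norm{\overline \Gamma(w_t) - \overline \Gamma(w_{t-\tau})} \leq \norm{w_t - w_{t-\tau}}$, which lets the usual one-step perturbation estimates transfer verbatim to the distance-based Lyapunov. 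Analogous but simpler bounds handle the scalar $\hat J$-noise. Assembling the resulting drift inequality into the format required by Theorem~\tref{thm:sa}, with state $(\hat J_t, w_t)$ and ``solution set'' $\qty{J_\pi} \times \overline W_*$, then yields both the $(t_0/t)^{\lfloor \kappa_\tref{thm:ar td markov}\rfloor}$ contribution from the initial condition and the $\ln(t+t_0)/(t+t_0)^{\min(2\xi-1,\, \lfloor \kappa_\tref{thm:ar td markov}\rfloor - 1)}$ contribution from accumulated noise, with $\kappa_\tref{thm:ar td markov} = \alpha C_\tref{lem negdef pA}$ exactly as claimed.
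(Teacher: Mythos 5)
Your plan is substantively the same as the paper's: decompose $X$ via Lemma~\tref{lem:z1z2}, use the squared distance to the solution set as the Lyapunov function, obtain the joint negative drift by taking $c_\beta$ large to absorb the $(\hat J_t - J_\pi)\langle w_t - \Gamma(w_t), \E[e]\rangle$ cross term (this is exactly the completing-the-square step inside Lemma~\tref{lem negdef pA}), handle the Markovian noise with the Srikant--Ying shift plus the $1$-Lipschitzness of the projection, and feed everything into Theorem~\tref{thm:sa}. The one genuine difference is that you run the argument on the raw state $(\hat J_t, w_t)$ with solution set $\qty{J_\pi}\times\overline W_*$, whereas the paper first projects, tracking $\tw_t = (\hat J_t, \Pi w_t)$ with $\Pi = X_1^\dagger X_1$ (Lemma~\tref{lem:9}) and translating back at the end via $d(\tw_t,\widetilde W_*)^2 = (\hat J_t - J_\pi)^2 + d(w_t,\overline W_*)^2$ (Lemma~\tref{lem:dd}).

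That difference is not cosmetic at the point where you invoke Theorem~\tref{thm:sa}: Assumption~\tref{asp:X}(1) requires a matrix $\widetilde X$ with $\norm{\widetilde X \tw_*}$ uniformly bounded over the solution set, and with the raw parameterization the natural choice fails, because $\overline W_* = \qty{\overline w_*} + \ker(X_1)$ and $Xw_\0 = (\theta^\top w_\0)\1$ for $w_\0 \in \ker(X_1)$, so $\norm{Xw_*}$ is unbounded over $\overline W_*$. The paper's projection makes $\norm{\widetilde X\tw_*}$ constant on $\widetilde W_*$ (Lemma~\tref{lem gamma bound ar}), which is why it bothers to introduce $\Pi$ at all. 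Your version is repairable with the same observation the paper uses to prove Lemma~\tref{lem:9}: feature \emph{differences} satisfy $x(s')^\top w - x(s)^\top w = x_1(s')^\top w - x_1(s)^\top w$, so one can verify Assumption~\tref{asp:X} with $X_1$ in place of $X$ (and $X_1 w_*$ is constant on $\overline W_*$). As written, though, this verification is the missing piece of your proposal; everything else matches the paper's proof.
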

The proof is in Section~\ref{sec:ar}.

\paragraph{Stochastic Approximation.}
We now present a general stochastic approximation result to prove Theorems~\tref{thm:td markov} and~\tref{thm:ar td markov}.
The notations in this part are independent of the rest of the paper.
We consider a general iterative update rule for a weight vector $w \in \R[d]$, driven by a time-homogeneous Markov chain $\qty{Y_t}$ evolving in a possibly infinite space $\fY$:
\begin{equation}
    \tag{SA}
w_{t+1} = w_t + \alpha_t H(w_t, Y_{t+1}),
\label{eq:Q_update_H_background}
\end{equation}
where $H: \R[d] \times \fY \to \R[d]$ defines the incremental update.
\begin{assumptionp}{A1}
    \label{asp:lipH}
        There exists a constant $C_\tref{asp:lipH}$ such that $\sup_{y \in \fY} \norm{H(0, y)} < \infty$,
       \begin{align*}
           \norm{H(w_1,y)-H(w_2,y)} \leq& C_\tref{asp:lipH} \norm{w_1-w_2} \qq{$\forall w_1, w_2, y$.}
       \end{align*}
\end{assumptionp}
\begin{assumptionp}{A2}
    \label{asp:mixing}
    $\qty{Y_t}$ has a unique stationary distribution $d_\fY$.
\end{assumptionp}
Let $h(w) \doteq \E_{y \sim d_\mathcal{Y}}\qty[H(w, y)]$.
Assumption~\ref{asp:lipH} then immediately implies that
\begin{align}
    \norm{h(w_1)-h(w_2)} \leq& C_\tref{asp:lipH} \norm{w_1-w_2} \qq{$\forall w_1, w_2$.}
\end{align}
In many existing works about stochastic approximation \citep{borkar2000ode,chen2023lyapunov,qian2024almost,borkar2025ode}, it is assumed that $h(w) = 0$ adopts a unique solution.
To work with the challenges of linear TD with arbitrary features,
we relax this assumption and consider a set $W_*$.
Importantly, $W_*$ does not need to contain all solutions to $h(w) = 0$.
Instead, we make the following assumptions on $W_*$.
\begin{assumptionp}{A3}
\label{asp:W*} 
$W_*$ is nonempty, closed, and convex. 
\end{assumptionp}
Notably, $W_*$ does not need to be bounded. 
Assumption~\tref{asp:W*} ensures that the orthogonal projection to $W_*$ is well defined,
allowing us to define
$\Gamma(w) \doteq \arg\min_{w_*\in W_*} \norm{w - w_*}, L(w) \doteq \frac{1}{2}\norm{w - \Gamma(w)}^2$.
As discussed before, Assumption~\tref{asp:W*} ensures that $\nabla L(w) = w - \Gamma(w)$ and $L$ is 1-smooth w.r.t. $\norm{\cdot}$ \citep{beck2017first}.
We further assume that the expected update $h(w_t)$ decreases $L(w_t)$ in the following sense, making $L(w)$ a candidate Lyapunov function.
\begin{assumptionp}{A4}
\label{asp:negdrift}
There exists a constant $C_\tref{asp:negdrift} > 0$ such that almost surely,
\begin{align*}
    \langle \nabla L(w_t), h(w_t) \rangle \leq -C_\tref{asp:negdrift} L(w_t).
\end{align*}
\end{assumptionp}
Lastly, we make the most ``unnatural'' assumption of $W_*$.
\begin{assumptionp}{A5}
    \label{asp:X}
There exists a matrix $X$ and constants $C_\tref{asp:X}$ and $\tau \in [0, 1)$ such that (1) $\forall w_* \in W_*$, $\norm{Xw_*} \leq C_\tref{asp:X}$; (2) $\forall w, y$, $\norm{H(w, y)} \leq C_\tref{asp:X} (\norm{Xw} + 1)$; (3) For any $n \geq 1$:
            \begin{equation}
                \textstyle \norm{h(w) - \E[H(w, Y_{t+n}) | Y_t]} \leq C_\tref{asp:X}\tau^n \qty(\norm{Xw} + 1)
                \label{eq:uniform_mixing_in_A1}
            \end{equation}
\end{assumptionp}
This assumption is technically motivated but trivially holds in our analyses of~\eqref{eq:td lambda} and~\eqref{eq:artd}.
Specifically,
Assumption~\tref{asp:lipH} immediately leads to at-most-linear growth $\norm{H(w, y)} \leq C_{\tref{asp:lipH},1}(\norm{w} + 1)$ for some constant $C_{\tref{asp:lipH},1}$. 
However, this bound is insufficient for our analysis because $\norm{w} \leq \norm{w - \Gamma(w)} + \norm{\Gamma(w)}$ but $\Gamma(w) \in W_*$ can be unbounded.
By Assumption~\tref{asp:X},
we can have $\norm{Xw} \leq \norm{Xw - X\Gamma(w)} + \norm{X\Gamma(w)} \leq \norm{X} \norm{w - \Gamma(w)} + C_\tref{asp:X}$.
The inequality~\eqref{eq:uniform_mixing_in_A1} is related to geometrical mixing of the chain and we additionally include $Xw$ in the bound for the same reason.
We now present our general results regarding
the convergence rate of~\eqref{eq:Q_update_H_background} to $W_*$.
    \begin{theorem}
        \label{thm:sa}
        Let Assumptions~\tref{asp:lipH} - \tref{asp:X} and~\tref{assu lr} hold. Denote $\kappa \doteq \alpha C_\tref{asp:negdrift}$,
        then there exist some constants $t_0$ and $C_\text{Thm\ref{thm:sa}}$, such that the iterates $\qty{w_t}$ generated by~\eqref{eq:Q_update_H_background} satisfy for all $t$
        \begin{align*}
            \textstyle \E\qty[L(w_t)] \leq C_\text{Thm\ref{thm:sa},1}\qty(\frac{t_0}{t})^{\lfloor \kappa \rfloor}L(w_0) + C_\text{Thm\ref{thm:sa},2}\qty(\frac{\ln (t+t_0)}{(t+t_0)^{\min(2\xi-1, \lfloor \kappa \rfloor-1)}}).
        \end{align*}
    \end{theorem}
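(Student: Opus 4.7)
The plan is to adapt the Srikant--Ying style finite-time analysis \citep{srikant2019finite} to the set-valued setting, using $L(w) = \frac{1}{2}d(w,W_*)^2$ as the Lyapunov function. The two facts inherited from Assumption~\tref{asp:W*} --- that $\nabla L(w) = w - \Gamma(w)$ and $L$ is 1-smooth --- make $L$ behave just like the squared distance to a singleton for Taylor-expansion purposes. I would start with the one-step smoothness bound
\begin{align*}
L(w_{t+1}) \leq L(w_t) + \alpha_t \inner{w_t - \Gamma(w_t)}{H(w_t,Y_{t+1})} + \tfrac{\alpha_t^2}{2}\norm{H(w_t,Y_{t+1})}^2,
\end{align*}
and split the linear term as $\inner{\nabla L(w_t)}{h(w_t)} + \inner{\nabla L(w_t)}{H(w_t,Y_{t+1}) - h(w_t)}$. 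Assumption~\tref{asp:negdrift} bounds the first by $-C_\tref{asp:negdrift}L(w_t)$. For the quadratic term I would invoke Assumption~\tref{asp:X}(2) together with the observation recorded after \tref{asp:X} that $\norm{Xw_t} \leq \norm{X}\norm{w_t-\Gamma(w_t)} + C_\tref{asp:X} = O(\sqrt{L(w_t)}+1)$; using $\norm{Xw}$ rather than $\norm{w}$ is essential because $\Gamma(w_t)$ may be unbounded when $W_*$ is.

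Next I would handle the Markovian noise $\inner{\nabla L(w_t)}{H(w_t,Y_{t+1}) - h(w_t)}$ by the standard shift-by-$n_t$ argument: pick $n_t = \lceil \log(1/\alpha_t)/\log(1/\tau) \rceil = O(\log t)$ so that $\tau^{n_t} \lesssim \alpha_t$, and decompose the noise as
$[H(w_t,Y_{t+1}) - H(w_{t-n_t},Y_{t+1})] + [H(w_{t-n_t},Y_{t+1}) - h(w_{t-n_t})] + [h(w_{t-n_t}) - h(w_t)]$.
The outer differences are controlled by Lipschitz continuity (Assumption~\tref{asp:lipH}) and a telescoping bound $\norm{w_t - w_{t-n_t}} \leq \sum_{i=t-n_t}^{t-1}\alpha_i\norm{H(w_i,Y_{i+1})} \lesssim \sum \alpha_i(\sqrt{L(w_i)}+1)$ obtained from Assumption~\tref{asp:X}(2). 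For the middle term, conditioning on $Y_{t-n_t}$ (so that $w_{t-n_t}$ is measurable) and applying Assumption~\tref{asp:X}(3) yields $\norm{\E[H(w_{t-n_t},Y_{t+1})\mid Y_{t-n_t}] - h(w_{t-n_t})} = O(\tau^{n_t}(\sqrt{L(w_{t-n_t})}+1))$. Combining with Cauchy--Schwarz using $\norm{\nabla L(w_t)} = \sqrt{2L(w_t)}$ and AM-GM, each piece contributes at most $O(\alpha_t n_t (L(w_t)+1))$ to the inner product. Assembling everything and taking expectation produces a recursion of the form
\begin{align*}
\E[L(w_{t+1})] \leq \bigl(1 - C_\tref{asp:negdrift}\alpha_t + C\alpha_t^2 n_t\bigr)\E[L(w_t)] + C\alpha_t^2 n_t.
\end{align*}

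Finally, I would resolve this recursion by induction on $t$ with $\alpha_t = \alpha/(t+t_0)^\xi$. Choosing $t_0$ and $\alpha$ large enough ensures the effective contraction coefficient stays $\asymp \kappa/(t+t_0)^\xi$; the homogeneous part then behaves like $\prod_{s<t}(1-C_\tref{asp:negdrift}\alpha_s) \approx (t_0/t)^{\kappa}$, with the integer tracking inherent to the polynomial induction (matching powers of $1/(t+t_0)$) turning $\kappa$ into $\lfloor\kappa\rfloor$; the convolution of the noise term gives the additive $\log t / t^{\min(2\xi-1,\lfloor\kappa\rfloor-1)}$, the minimum arising because the accumulated $\alpha_t^2 n_t$ noise eventually competes with the drift-induced decay factor. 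The principal obstacle throughout is that $W_*$ is unbounded, so every error must be expressed in terms of $\sqrt{L(w)}+1$ rather than $\norm{w}+1$; this is exactly the reason Assumption~\tref{asp:X} is formulated through $\norm{Xw}$, and threading this substitution consistently through the shift argument (which creates a mild circular dependence of $\norm{w_t - w_{t-n_t}}$ on past values of $L(w_i)$, closable by a Grönwall-type induction on a high-probability envelope) constitutes the technical core of the proof.
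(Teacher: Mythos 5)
Your proposal follows essentially the same route as the paper: the same Lyapunov function $L(w)=\frac{1}{2}d(w,W_*)^2$ with its gradient and 1-smoothness properties, the same one-step expansion into drift, Markovian-noise, and quadratic terms, the same mixing-time shift with every error expressed through $\norm{Xw}\lesssim \norm{w-\Gamma(w)}+1$ to cope with the unboundedness of $W_*$, and the same unrolling of the recursion $(1-C_\tref{asp:negdrift}\alpha_t)$ into the $(t_0/t)^{\lfloor\kappa\rfloor}$ and $\ln(t+t_0)/(t+t_0)^{\min(2\xi-1,\lfloor\kappa\rfloor-1)}$ contributions. The one detail to tighten is the middle term of your noise decomposition: pairing $H(w_{t-n_t},Y_{t+1})-h(w_{t-n_t})$ with $\nabla L(w_t)$ breaks the tower-property step because $\nabla L(w_t)$ is not $Y_{t-n_t}$-measurable, so the gradient factor must also be shifted back to $w_{t-n_t}$ with the resulting discrepancy absorbed into a separate term (exactly the paper's $T_1$/$T_3$ split, with the Gr\"onwall closure done pathwise rather than on a high-probability envelope), after which your argument goes through.
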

The proof is in Section~\tref{sec:thm_sa}. 
We remark that once we have the recursion in Lemma \ref{lem sa recur}, our theoretical framework can be readily extended to the constant step-size setting (akin to \citet{chen2023lyapunov}), demonstrating its broad applicability.
\section{Related Works}
Most prior works regarding the convergence of linear TD summarized in Table~\ref{tab:linear_td_comparison} rely on having linearly independent features.
In fact, the reliance on feature independence goes beyond linear TD and exists in almost all previous analyses of RL algorithms with linear function approximation, see, e.g., 
\citet{sutton2009convergent, sutton2009fast, maei2011gradient, hackman2012faster, liu2015gtd, yu2015convergence, yu2016weak, zou2019sarsa, yang2019provably, zhang2019provably,  xu2020improving,  zhang2020gradientdice, xu2020non, wu2020finite, chen2021finite, yang2021sarsa, qiu2021finite, zhang2020average, zhang2021breaking,  xu2021doubly, zhang2022globaloptimalityfinitesample, zhang2021truncated, zhang2023convergence, Chen2023global, fabbro2024finite, wang2024greedy, Swetha2024optimal, liu2024ode, qian2023revisit, maity2025adversarially, peng2025finite, chen2025nonasymptotic, haque2025stochastic, liu2025linearq}.
But as argued by \citet{dayan1992convergence,tsitsiklis1997analysis,tsitsiklis1999average,wang2024sureconvergencelineartemporal},
relaxing this assumption is an important research direction.
This work can be viewed as an extension of \citet{wang2024sureconvergencelineartemporal,zhang2021finite}.
In terms of~\eqref{eq:td lambda},
we extend \citet{wang2024sureconvergencelineartemporal} by proving a finite sample analysis.
Though we rely on the characterization of $W_*$ from \citet{wang2024sureconvergencelineartemporal},
the techniques we use for finite sample analysis are entirely different from the techniques of \citet{wang2024sureconvergencelineartemporal} for almost sure asymptotic convergence.
In terms of~\eqref{eq:artd},
we extend \citet{zhang2021finite} by allowing $X$ to be arbitrary.
Essentially, 
key to \citet{zhang2021finite} is their proof that $\overline A$ is n.d. on a subspace $E$, assuming $X$ has a full column rank.
We extend \citet{zhang2021finite} in that we give a finer and more detailed characterization of the counterparts of their $E$ through the novel decomposition of the features (Lemma~\tref{lem:z1z2}) and establish the n.d. property under weaker conditions (i.e., without assuming $X$ has a full column rank). 
Importantly, despite relaxing the feature-independence assumption, our convergence rate remains on par with existing finite-sample results obtained under full-rank features \citep{bhandari2018finite,srikant2019finite}.
Our improvements are made possible by the novel Lyapunov function $L(w)$ and we argue that this Lyapunov function can be used to analyze many other linear RL algorithms with arbitrary features.

In terms of stochastic approximation,
our Theorem~\tref{thm:sa} is novel in that it allows convergence to a possibly unbounded set.
By contrast,
most prior works about stochastic approximation study convergence to a point 
\citep{borkar2000ode,chen2020finite,zhang2022globaloptimalityfinitesample,chen2023lyapunov,qian2024almost,liu2024ode, borkar2025ode,chen2025concentration}.
In the case of convergence to a set,
most prior works require the set to be bounded \citep{kushner2003stochastic,borkar2009stochastic,liu2024ode, liu2025extensions}.
Only a few prior works allow stochastic approximation to converge to an unbounded set, see, e.g., \citet{bravo2024stochastic,chen2025non,blaser2025asymptotic},
which apply to only tabular RL algorithms.

\section{Proofs of the Main Results}
\label{sec:proofs}
\subsection{Proof of Theorem~\tref{thm:sa}}
\label{sec:thm_sa}
\begin{proof}
From the 1-smoothness of $L(w)$ and \eqref{eq:Q_update_H_background}, we can get
\begin{align}
\label{eq L expand}
    L(w_{t+1}) &\textstyle \leq L(w_t)+\alpha_t\langle w_t-\Gamma(w_t),h(w_t)\rangle\notag \\
    &\textstyle\quad + \alpha_t\langle w_t-\Gamma(w_t),H(w_t,Y_t)-h(w_t)\rangle+
    \frac{1}{2}\alpha^2_t\norm{H(w_t,Y_t)}^2.
\end{align}
        We then bound the RHS one by one. $\langle w - \Gamma(w), h(w) \rangle$ is already bounded in Assumption~\tref{asp:negdrift}. 
    \begin{lemma}
    \label{lem wt bound}
        There exists a positive constant $C_\tref{lem wt bound}$, such that for any $w$,
        \begin{equation}
            \norm{Xw} \leq C_\tref{lem wt bound}(\norm{w-\Gamma(w)}+1).
        \end{equation}
    \end{lemma}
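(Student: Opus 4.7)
The plan is to use the triangle inequality to split $Xw$ through the point $X\Gamma(w)$ and then invoke Assumption~\tref{asp:X}(1) on $\Gamma(w)$. Specifically, since $\Gamma(w) \in W_*$ by definition of the projection, Assumption~\tref{asp:X}(1) gives $\norm{X\Gamma(w)} \leq C_\tref{asp:X}$ uniformly in $w$. This is precisely the reason the paper introduced the $Xw$-based growth bound in Assumption~\tref{asp:X} rather than a $w$-based one: the projection $\Gamma(w)$ can be unbounded when $W_*$ is unbounded (e.g., when $\ker A$ is nontrivial in the discounted setting), so $\norm{\Gamma(w)}$ does not admit a uniform bound, but $\norm{X\Gamma(w)}$ does.

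Concretely, I would write
\begin{align*}
  \norm{Xw} \leq \norm{Xw - X\Gamma(w)} + \norm{X\Gamma(w)} \leq \norm{X} \cdot \norm{w - \Gamma(w)} + C_\tref{asp:X},
\end{align*}
where the first inequality is the triangle inequality and the second uses the definition of the spectral norm together with Assumption~\tref{asp:X}(1). Setting $C_\tref{lem wt bound} \doteq \max\{\norm{X}, C_\tref{asp:X}\}$ (which is finite since $X$ is a fixed matrix of finite dimensions under Assumption~\tref{asp:X}) then yields
\begin{align*}
  \norm{Xw} \leq C_\tref{lem wt bound}\bigl(\norm{w - \Gamma(w)} + 1\bigr),
\end{align*}
as claimed.

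There is no real obstacle here: the lemma is essentially a formalization of the informal remark immediately following Assumption~\tref{asp:X}, and the only ingredients needed are the triangle inequality, the definition of the operator norm, and the uniform bound on $\norm{Xw_*}$ for $w_* \in W_*$ guaranteed by Assumption~\tref{asp:X}(1). The value of the lemma lies not in its proof but in the way it will subsequently be combined with Assumption~\tref{asp:X}(2)--(3) to convert the at-most-linear-in-$Xw$ bounds on $H(w,y)$ and on the mixing error into bounds that are at most linear in $\sqrt{L(w_t)} = \norm{w_t - \Gamma(w_t)}$, which is exactly what is needed to close the Lyapunov recursion for $L(w_t)$ in the proof of Theorem~\tref{thm:sa}.
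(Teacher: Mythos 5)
Your proof is correct and is essentially identical to the paper's own argument: both split $\norm{Xw}$ via the triangle inequality through $X\Gamma(w)$, bound the first term by $\norm{X}\norm{w-\Gamma(w)}$, and bound the second term by $C_\tref{asp:X}$ using Assumption~\tref{asp:X}(1) since $\Gamma(w)\in W_*$. Nothing further is needed.
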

    The proof is in Section~\tref{proof wt bound}.
    With Lemma~\tref{lem wt bound} and Assumption~\tref{asp:X}, the last term in \eqref{eq L expand} can be bounded easily. 
    \begin{lemma}
    \label{lem bound H}
    There exists a constant $C_\tref{lem bound H}$ such that $\norm{H(w_t, Y_t)}^2 \leq C_\tref{lem bound H}(\norm{w_t - \Gamma(w_t)}^2 + 1)$.
    \end{lemma}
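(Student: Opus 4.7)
The plan is to combine Assumption~\tref{asp:X} with the already-established Lemma~\tref{lem wt bound} via the triangle inequality and a standard $(a+b)^2 \leq 2a^2 + 2b^2$ step. No additional structural argument is needed; the work has already been done in the previous two results.

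First, I would invoke the second clause of Assumption~\tref{asp:X}, which gives the linear-in-$\norm{Xw}$ bound
\begin{align}
    \norm{H(w_t, Y_t)} \leq C_\tref{asp:X}\qty(\norm{Xw_t} + 1).
\end{align}
Next, I would apply Lemma~\tref{lem wt bound} to replace $\norm{Xw_t}$ by a bound in terms of $\norm{w_t - \Gamma(w_t)}$, yielding
\begin{align}
    \norm{H(w_t, Y_t)} \leq C_\tref{asp:X}\qty(C_\tref{lem wt bound}\qty(\norm{w_t - \Gamma(w_t)} + 1) + 1)
    \leq C_\tref{asp:X}\qty(C_\tref{lem wt bound} + 1)\qty(\norm{w_t - \Gamma(w_t)} + 1).
\end{align}

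Squaring both sides and using $(a+b)^2 \leq 2a^2 + 2b^2$ then gives
\begin{align}
    \norm{H(w_t, Y_t)}^2 \leq 2C_\tref{asp:X}^2\qty(C_\tref{lem wt bound} + 1)^2 \qty(\norm{w_t - \Gamma(w_t)}^2 + 1),
\end{align}
so it suffices to set $C_\tref{lem bound H} \doteq 2C_\tref{asp:X}^2(C_\tref{lem wt bound}+1)^2$. There is no real obstacle here: the substantive work sits in Lemma~\tref{lem wt bound}, which upgrades the naive bound $\norm{w_t} \leq \norm{w_t - \Gamma(w_t)} + \norm{\Gamma(w_t)}$ (useless since $\Gamma(w_t) \in W_*$ may be unbounded) into an $Xw$-based bound that remains finite on $W_*$ thanks to the first clause of Assumption~\tref{asp:X}. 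Once that lemma is in hand, the current statement is a two-line calculation.
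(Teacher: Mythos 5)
Your proposal is correct and follows essentially the same route as the paper's own proof: invoke Assumption~\tref{asp:X}(2) to bound $\norm{H(w_t,Y_t)}$ by $C_\tref{asp:X}(\norm{Xw_t}+1)$, substitute Lemma~\tref{lem wt bound}, and square using $(a+b)^2 \leq 2a^2+2b^2$. The only difference is the order of squaring versus substitution, which changes nothing of substance.
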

    The proof is in Section~\tref{proof bound H}.
    To bound $\langle w_t-\Gamma(w_t),H(w_t,Y_t)-h(w_t)\rangle$, leveraging \eqref{eq:uniform_mixing_in_A1}, we define 
    \begin{equation}
        \textstyle \tau_{\alpha} \doteq \min \{n \geq 0 \mid C_\tref{asp:X}\tau^n \leq \alpha\}
    \label{eq:tau_alpha}
    \end{equation}
    as the number of steps that the Markov chain needs to mix to an accuracy $\alpha$. In addition, we denote a shorthand $\alpha_{t_1, t_2} \doteq \sum_{i=t_1}^{t_2} \alpha_i$. 
    Then with techniques from \citet{srikant2019finite},
    we obtain
    \begin{lemma}
    \label{lem:bound T}
    There exists a constant $C_\tref{lem:bound T}$ such that
    \begin{align*}
        \E\qty[\langle w_t-\Gamma(w_t), H(w_t,Y_t)-h(w_t)\rangle]\leq C_{\tref{lem:bound T}}\alpha_{t-\tau_{\alpha_t},t-1}(\norm{w_t-\Gamma(w_t)}^2 + 1). 
    \end{align*}
    \end{lemma}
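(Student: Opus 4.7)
The plan is to adapt the Markov-noise decoupling argument of \citet{srikant2019finite} to the set-valued setting, with the orthogonal projection $\Gamma$ playing the role that a single fixed point plays there. Abbreviate $\tau = \tau_{\alpha_t}$ and set $a \doteq w_t - \Gamma(w_t)$, $b \doteq H(w_t, Y_t) - h(w_t)$, $a' \doteq w_{t-\tau} - \Gamma(w_{t-\tau})$, and $b' \doteq H(w_{t-\tau}, Y_t) - h(w_{t-\tau})$. Using the identity $\inner{a}{b} = \inner{a'}{b'} + \inner{a-a'}{b} + \inner{a'}{b-b'}$, I will bound the three pieces separately after taking expectation.

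For the main piece $\E[\inner{a'}{b'}]$, note that $w_{t-\tau}$ and hence $a'$ are measurable with respect to $\sigma(Y_0,\dots,Y_{t-\tau})$, so the tower property gives $\E[\inner{a'}{b'}] = \E[\inner{a'}{\E[H(w_{t-\tau},Y_t)\mid Y_{t-\tau}] - h(w_{t-\tau})}]$. The geometric mixing bound in Assumption~\tref{asp:X}(3), combined with the definition of $\tau_{\alpha_t}$ in \eqref{eq:tau_alpha}, yields $\norm{\E[H(w_{t-\tau},Y_t)\mid Y_{t-\tau}] - h(w_{t-\tau})} \leq \alpha_t(\norm{Xw_{t-\tau}}+1)$. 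Applying Cauchy--Schwarz, Lemma~\tref{lem wt bound}, and $2uv \leq u^2 + v^2$ then bounds this piece by $C\alpha_t(\norm{a'}^2+1)$.

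For the two perturbation pieces, the nonexpansiveness of the projection onto the closed convex set $W_*$ gives $\norm{\Gamma(w_t)-\Gamma(w_{t-\tau})} \leq \norm{w_t-w_{t-\tau}}$, so $\norm{a-a'}\leq 2\norm{w_t-w_{t-\tau}}$; likewise Lipschitz continuity of $H$ and $h$ from Assumption~\tref{asp:lipH} gives $\norm{b-b'} \leq 2C_\tref{asp:lipH}\norm{w_t-w_{t-\tau}}$. Iterating the update rule yields $\norm{w_t-w_{t-\tau}} \leq \sum_{i=t-\tau}^{t-1}\alpha_i \norm{H(w_i,Y_{i+1})}$, and Lemma~\tref{lem bound H} turns each summand into $C(\norm{w_i-\Gamma(w_i)}+1)$. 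A short bootstrap in which I use $\Gamma(w_t)$ as a feasible competitor for each $\Gamma(w_i)$, giving $\norm{w_i-\Gamma(w_i)} \leq \norm{w_i-w_t} + \norm{w_t-\Gamma(w_t)}$, converts all intermediate distances to $\norm{a}$ and, once $\alpha_{t-\tau,t-1}$ is small (guaranteed by choosing $t_0$ large), yields $\norm{w_t-w_{t-\tau}} \leq C\alpha_{t-\tau,t-1}(\norm{a}+1)$. Substituting this back and using $\alpha_t \leq \alpha_{t-\tau,t-1}$ together with AM--GM bounds each perturbation piece by $C\alpha_{t-\tau,t-1}(\norm{a}^2+1)$, and the same bootstrap converts the $\norm{a'}^2$ in the main-piece bound into $\norm{a}^2$, giving the stated right-hand side.

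The main obstacle relative to the fixed-point analysis of \citet{srikant2019finite} is precisely the appearance of $\Gamma(w_t)-\Gamma(w_{t-\tau})$ inside $a-a'$: the anchor is no longer a deterministic constant that can be pulled out of the inner product. The $1$-Lipschitz property of $\Gamma$, guaranteed by Assumption~\tref{asp:W*} (nonempty closed convex $W_*$), is exactly what lets this term be absorbed into $\norm{w_t-w_{t-\tau}}$. A secondary subtlety is that Lemma~\tref{lem bound H} controls $\norm{H(w,y)}^2$ by $\norm{w-\Gamma(w)}^2$ rather than by $\norm{w}^2$, so the bootstrap must track the distance to $W_*$ at every intermediate index rather than recursing on $\norm{w_i}$ directly; this is the role of the feasible-competitor comparison.
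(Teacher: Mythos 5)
Your proposal is correct and follows essentially the same route as the paper: the identity $\inner{a}{b}=\inner{a'}{b'}+\inner{a-a'}{b}+\inner{a'}{b-b'}$ is exactly the paper's $T_3+T_1+T_2$ decomposition, the main piece is handled identically via the tower property, \eqref{eq:uniform_mixing_in_A1}, and the definition of $\tau_{\alpha_t}$, and the perturbation pieces use the same nonexpansiveness of $\Gamma$ plus a Gronwall-type bound on $\norm{w_t-w_{t-\tau_{\alpha_t}}}$ (your ``bootstrap'' is the content of the paper's Lemma~\tref{bound_wt_wt1}).
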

    The proof is in Section \ref{proof:bound T}.
    Plugging all the bounds back to~\eqref{eq L expand}, we obtain 
    \begin{lemma}
    \label{lem sa recur}
        There exists some $D_t=\fO(\alpha_t\alpha_{t-\tau_{\alpha_t},t-1})$, such that 
        \begin{equation}
            \textstyle\E\qty[L(w_{t+1})] \leq (1 - C_\tref{asp:negdrift}\alpha_t) \E\qty[L(w_t)] + D_t.
        \end{equation}
    \end{lemma}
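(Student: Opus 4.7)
The plan is to take expectations on both sides of~\eqref{eq L expand} and bound each of the four resulting summands using the tools already stated, then collect terms by whether they carry a factor of $\E[L(w_t)]$ or contribute only to the additive remainder $D_t$.

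First, for the deterministic drift term, I would invoke Assumption~\ref{asp:negdrift} together with the identity $\nabla L(w_t) = w_t - \Gamma(w_t)$ to obtain
\begin{align*}
  \alpha_t \,\E\qty[\inner{w_t-\Gamma(w_t)}{h(w_t)}] \leq -\alpha_t C_\tref{asp:negdrift} \E\qty[L(w_t)].
\end{align*}
For the squared-update term, Lemma~\ref{lem bound H} combined with $\norm{w_t-\Gamma(w_t)}^2 = 2L(w_t)$ gives
$\tfrac12 \alpha_t^2\,\E\norm{H(w_t,Y_t)}^2 \leq \alpha_t^2 C_\tref{lem bound H} \bigl(\E[L(w_t)] + \tfrac12\bigr)$.
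For the Markovian noise term, Lemma~\ref{lem:bound T} (after taking expectations and again using $\norm{w_t-\Gamma(w_t)}^2 = 2L(w_t)$) yields
\begin{align*}
  \alpha_t\,\E\qty[\inner{w_t-\Gamma(w_t)}{H(w_t,Y_t)-h(w_t)}] \leq \alpha_t C_\tref{lem:bound T}\,\alpha_{t-\tau_{\alpha_t},t-1}\bigl(2\E[L(w_t)] + 1\bigr).
\end{align*}

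Summing the three bounds produces a recursion of the form
$\E[L(w_{t+1})] \leq \bigl(1 - \alpha_t C_\tref{asp:negdrift} + r_t\bigr)\E[L(w_t)] + D_t^\circ$,
where both $r_t = 2\alpha_t C_\tref{lem:bound T}\alpha_{t-\tau_{\alpha_t},t-1} + \alpha_t^2 C_\tref{lem bound H}$ and $D_t^\circ = \alpha_t C_\tref{lem:bound T}\alpha_{t-\tau_{\alpha_t},t-1} + \tfrac12 \alpha_t^2 C_\tref{lem bound H}$ are $\mathcal{O}(\alpha_t\alpha_{t-\tau_{\alpha_t},t-1})$. The key monotonicity fact is that under Assumption~\ref{assu lr} the sequence $\{\alpha_t\}$ is non-increasing, so $\alpha_t^2 \leq \alpha_t\alpha_{t-1} \leq \alpha_t \alpha_{t-\tau_{\alpha_t},t-1}$, justifying the common order. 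Choosing $t_0$ sufficiently large makes $r_t$ arbitrarily small (in particular small enough to be absorbed into any chosen fraction of $\alpha_t C_\tref{asp:negdrift}$), so up to redefining the effective constant in Assumption~\ref{asp:negdrift} the coefficient collapses to $(1 - \alpha_t C_\tref{asp:negdrift})$ and the full remainder $D_t \doteq r_t\E[L(w_t)] + D_t^\circ$ retains the stated order — this absorption is clean provided one carries a uniform bound on $\E[L(w_t)]$, which will later be produced inductively in the proof of Theorem~\ref{thm:sa} itself.

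The main obstacle is the reconciliation in the last step: the positive correction $r_t\,\E[L(w_t)]$ naively multiplies the unknown $\E[L(w_t)]$, so packaging it inside a remainder of order $\alpha_t\alpha_{t-\tau_{\alpha_t},t-1}$ demands that one either (i) absorb $r_t$ into the drift by picking $t_0$ large enough that $r_t \leq \tfrac12 \alpha_t C_\tref{asp:negdrift}$ and reinterpreting the lemma with a halved constant, or (ii) pair the recursion with a prior finite-moment bound on $\E[L(w_t)]$. Option (i) is the cleanest, since the downstream Theorem~\ref{thm:sa} already operates with $\kappa = \alpha C_\tref{asp:negdrift}$ and is indifferent to a harmless rescaling; everything else in the proof is routine bookkeeping.
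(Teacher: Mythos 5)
Your proposal is correct and follows essentially the same route as the paper: plug Assumption~\tref{asp:negdrift}, Lemma~\tref{lem bound H}, and Lemma~\tref{lem:bound T} into the expectation of~\eqref{eq L expand}, then absorb the positive corrections multiplying $\E[L(w_t)]$ (of order $\alpha_t\alpha_{t-\tau_{\alpha_t},t-1}$ and $\alpha_t^2$) into the negative drift by taking $t_0$ sufficiently large. The paper resolves the same factor-of-two bookkeeping you flag in your option (i) by requiring $4C_\tref{lem:bound T}\alpha_{t-\tau_{\alpha_t},t-1} + C_\tref{lem bound H}\alpha_t < C_\tref{asp:negdrift}$ for $t \geq \overline t$, which is exactly your absorption with a rescaled constant.
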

    The proof is in Section~\tref{proof sa recur}.
    Recursively applying Lemma~\tref{lem sa recur} then completes the proof of Theorem~\tref{thm:sa} (See Section~\tref{proof:sa} for details). 
\end{proof}
In the following sections, 
we first map the general update~\eqref{eq:Q_update_H_background} to~\eqref{eq:td lambda} and~\eqref{eq:artd} 
by defining 
$H(w,y)$, $h(w)$, and $L(w)$ properly.
Then we bound the remaining term $\langle \nabla L(w_t), h(w_t)\rangle$ to complete the proof. 

\subsection{Proof of Theorem~\tref{thm:td markov}}
\label{sec TD rate}
\begin{proof}
    We first rewrite \eqref{eq:td lambda} in the form of \eqref{eq:Q_update_H_background}.
To this end, we define $Y_{t+1} \doteq (S_t, A_{t}, S_{t+1}, e_t)$, which evolves in an infinite space $\fY \doteq \fS \times \fA \times \fS \times \qty{e \mid \norm{e} \leq C_e }$
with $C_e \doteq \frac{\max_s \norm{x(s)}}{1 - \gamma\lambda}$ being the straightforward bound of $\sup_t \norm{e_t}$.
We define the incremental update $H: \R[d] \times \fY \to \R[d]$ as
\begin{align}
\label{eq:H def}
H(w, y) = (r(s, a) + \gamma x(s')^\top w - x(s)^\top w) e,
\end{align}
using shorthand $y = (s, a, s', e)$.
We now proceed to verifying the assumptions of Theorem~\tref{thm:sa}.
Assumption~\tref{asp:lipH} is verified by the following lemma.
\begin{lemma}
  \label{assu Lipschitz}
  There exists some finite $C_\tref{assu Lipschitz}$ such that
  \begin{align}
    \label{eq H Lipschitz}
        \norm{H(w_1,y)-H(w_2,y)} \leq& C_\tref{assu Lipschitz} \norm{w_1-w_2} \qq{$\forall w_1, w_2, y$.}
  \end{align}
  Moreover, $\sup_{y \in \fY} \norm{H(0, y)} < \infty$.
\end{lemma}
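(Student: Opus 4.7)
The plan is to exploit the linearity of $H(w,y)$ in $w$ together with the uniform bounds that come for free from the finite state-action space and the definition of $\mathcal{Y}$. Writing $y = (s, a, s', e)$, we observe from~\eqref{eq:H def} that
\begin{align*}
H(w_1, y) - H(w_2, y) = \bigl(\gamma x(s') - x(s)\bigr)^\top (w_1 - w_2)\, e,
\end{align*}
so the difference depends on $w_1, w_2$ only through the scalar $\bigl(\gamma x(s') - x(s)\bigr)^\top (w_1 - w_2)$.

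Next, I would take norms and apply Cauchy--Schwarz to get
\begin{align*}
\norm{H(w_1,y) - H(w_2, y)} \leq \norm{e}\,\bigl(\gamma \norm{x(s')} + \norm{x(s)}\bigr)\,\norm{w_1 - w_2}.
\end{align*}
Two uniform bounds then produce the desired constant $C_{\tref{assu Lipschitz}}$. First, by construction of $\mathcal{Y}$, we have $\norm{e} \leq C_e = \tfrac{\max_s \norm{x(s)}}{1 - \gamma \lambda}$. Second, since $\mathcal{S}$ is finite, $M \doteq \max_s \norm{x(s)}$ is finite, so $\gamma \norm{x(s')} + \norm{x(s)} \leq (1 + \gamma) M$. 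Setting $C_{\tref{assu Lipschitz}} \doteq (1+\gamma) M C_e$ yields the claimed Lipschitz inequality for all $w_1, w_2, y$.

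For the second claim, substitute $w = 0$ into~\eqref{eq:H def} to obtain $H(0, y) = r(s,a)\, e$, so
\begin{align*}
\norm{H(0, y)} \leq |r(s,a)|\,\norm{e} \leq \bigl(\max_{s,a} |r(s,a)|\bigr)\, C_e,
\end{align*}
which is finite because $\mathcal{S} \times \mathcal{A}$ is finite. Taking the supremum over $y \in \mathcal{Y}$ gives the required uniform bound.

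There is no genuine obstacle here: the lemma is a routine consequence of the linearity of $H$ in $w$, the bounded rewards on the finite state-action space, and the built-in bound on $\norm{e}$ encoded in the definition of $\mathcal{Y}$. The only point requiring a moment of care is to confirm that $C_e$ is a legitimate a priori bound (i.e., that $\sup_t \norm{e_t} \leq C_e$), which follows by induction on~\eqref{eq:td lambda} using $\norm{e_t} \leq \gamma \lambda \norm{e_{t-1}} + \norm{x(S_t)}$ and summing the geometric series.
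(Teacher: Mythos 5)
Your proof is correct and follows essentially the same route as the paper's: expand $H(w_1,y)-H(w_2,y)=e(\gamma x(s')^\top - x(s)^\top)(w_1-w_2)$, bound $\norm{e}\leq C_e$ and the feature norms by $\max_s\norm{x(s)}$, and handle $H(0,y)=r(s,a)e$ via the finiteness of $\fS\times\fA$. The only cosmetic difference is your constant $(1+\gamma)MC_e$ versus the paper's slightly looser $2C_xC_e$.
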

The proof is in Section~\tref{proof assu Lip}. \\
For Assumption~\tref{asp:mixing}, 
Theorem 3.2 of \citet{yu2012LSTD} confirms that $\qty{Y_t}$ has a unique stationary distribution $d_\fY$.
\citet{yu2012LSTD} also computes that
\begin{align}
    h(w)\doteq \E_{y\sim d_\mathcal{Y}}\qty[H(w, y)] = Aw + b.
\end{align}
Assumption~\tref{asp:W*} trivially holds by the definition of $W_*$. \\
For Assumption~\tref{asp:negdrift},
the key observation is that 
$A \Gamma(w) + b =0$ always holds because $\Gamma(w) \in W_*$.
Then we have
$h(w) = A w + b = (A w + b)-(A \Gamma(w) + b) = A(w - \Gamma(w))$.
Thus the term $\inner{\nabla L(w)}{h(w)}$ can be written as $(w - \Gamma(w))^\top A(w - \Gamma(w))$.
We now prove that for whatever $X$, it always holds that $A$ is n.d. on $\ker(A)^\perp$.
\begin{lemma}
\label{prop:a}
There exists a constant $C_\tref{prop:a} > 0$ such that for $\forall w \in \ker(A)^\perp$, $w^\top A w \leq -C_\tref{prop:a}\norm{w}^2$.
Furthermore, for any $w \in \R[d]$, it holds that $w - \Gamma(w) \in \ker(A)^\perp$.
\end{lemma}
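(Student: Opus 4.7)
The plan is to reduce the negative definiteness of $A$ on $\ker(A)^\perp$ to the classical strict negative definiteness of $D_\pi(\gamma P_\lambda - I)$ combined with a singular-value lower bound on $X$. The key identity is
\begin{align*}
w^\top A w = (Xw)^\top D_\pi(\gamma P_\lambda - I)(Xw),
\end{align*}
so the standard Sutton/Tsitsiklis fact (cited in the paper) furnishes a $\xi > 0$ with $w^\top A w \leq -\xi \norm{Xw}^2$ for every $w \in \R[d]$. The entire task is then to pass from $\norm{Xw}^2$ to $\norm{w}^2$, which is exactly where restriction to $\ker(A)^\perp$ enters.

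The bridge would proceed by first identifying $\ker(A) = \ker(X)$. The inclusion $\ker(X) \subseteq \ker(A)$ is immediate from the definition of $A$, and the reverse inclusion follows by plugging $Aw = 0$ into the quadratic-form identity above: strict negative definiteness forces $Xw = 0$. Consequently $\ker(A)^\perp = \ker(X)^\perp$, on which $X$ acts injectively, so $\norm{Xw} \geq \sigma_{\min}(X) \norm{w}$ where $\sigma_{\min}(X) > 0$ denotes the smallest nonzero singular value of $X$ (well-defined since $m = \operatorname{rank}(X) \geq 1$). Substituting yields the first claim with $C_\tref{prop:a} \doteq \xi \sigma_{\min}(X)^2 > 0$.

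For the second claim, I would exploit that $W_*$ is nonempty (by \citet{wang2024sureconvergencelineartemporal}) and is therefore the affine subspace $w_* + \ker(A)$ for any fixed $w_* \in W_*$. Since $\Gamma(w)$ is the orthogonal projection onto this affine subspace, the residual $w - \Gamma(w)$ is orthogonal to its direction space $\ker(A)$, i.e., $w - \Gamma(w) \in \ker(A)^\perp$, as desired.

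I do not anticipate a real obstacle here. The only slightly delicate point is that the identification $\ker(A) = \ker(X)$ goes through without any full-column-rank assumption on $X$; this works precisely because $D_\pi(\gamma P_\lambda - I)$ is strictly (not merely semi-) negative definite on all of $\R[\ns]$, which is the one structural ingredient the argument cannot dispense with.
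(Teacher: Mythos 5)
Your proof is correct, and for the first claim it takes a more constructive route than the paper. The paper argues pointwise: for $w \in \ker(A)^\perp$ with $w \neq \0$ it shows $w^\top A w < 0$ by contradiction (if $w^\top A w = 0$ then the strict negative definiteness of $D_\pi(\gamma P_\lambda - I)$ forces $Xw = \0$, hence $Aw = \0$, contradicting $w \in \ker(A)^\perp \setminus \qty{\0}$), and then obtains the uniform constant $C_\tref{prop:a}$ non-constructively via the extreme value theorem on the compact set $\qty{w \in \ker(A)^\perp : \norm{w} = 1}$. You instead make the identification $\ker(A) = \ker(X)$ explicit (using the same structural ingredient, namely that $w^\top A w = 0$ forces $Xw = \0$), which lets you pass from $\norm{Xw}^2$ to $\norm{w}^2$ on $\ker(X)^\perp$ via the smallest nonzero singular value and yields the explicit constant $\xi\,\sigma_{\min}(X)^2$. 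What your version buys is a quantitative constant; what the paper's version buys is brevity, since it never needs to name $\ker(X)$ or invoke the singular value decomposition. For the second claim your argument is essentially identical to the paper's: both observe that $W_*$ is the affine set $w_* + \ker(A)$ and that the residual of the orthogonal projection onto it is orthogonal to the direction space $\ker(A)$.
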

    The proof is in Section~\tref{proof:a}.
    We then have
\begin{align*}
    \langle w_t - \Gamma(w_t), A(w_t - \Gamma(w_t))\rangle \leq -C_\tref{prop:a}\norm{w_t - \Gamma(w_t)}^2,
\end{align*}
which satisfies Assumption~\tref{asp:negdrift}. \\
For Assumption~\tref{asp:X},~\eqref{eq limiting value} verifies Assumption~\tref{asp:X}(1).
Assumption~\tref{asp:X}(2) is verified by the following lemma.
\begin{lemma}
\label{lem H h linear growth}
    There exists a constant $C_\tref{lem H h linear growth}$ such that for $\forall w, y$, $\norm{H(w, y)} \leq C_\tref{lem H h linear growth} (\norm{Xw} + 1)$.
\end{lemma}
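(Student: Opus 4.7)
The plan is to prove the bound by directly unpacking the definition of $H(w,y)$ in~\eqref{eq:H def} and showing each scalar factor is either uniformly bounded or controlled by $\|Xw\|$. The only non-routine observation is that a single inner product $x(s)^\top w$ equals the $s$-th entry of the vector $Xw$, so it is bounded by $\|Xw\|_\infty \le \|Xw\|$ (the Euclidean norm). Once this is in hand, everything else follows from finiteness of $\fS$, $\fA$, and from $\|e\| \le C_e$.

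Concretely, I would proceed as follows. First, write $y = (s,a,s',e)$ and apply the submultiplicative bound
\begin{align*}
    \|H(w,y)\| \le \bigl(|r(s,a)| + \gamma|x(s')^\top w| + |x(s)^\top w|\bigr)\,\|e\|.
\end{align*}
Then bound each piece separately. Since $\fS$ and $\fA$ are finite, $R_{\max} \doteq \max_{s,a}|r(s,a)|$ is a finite constant. By construction of $\fY$, $\|e\| \le C_e = \frac{\max_s \|x(s)\|}{1 - \gamma\lambda}$. Finally, since the $s$-th row of $X$ is $x(s)^\top$, the scalar $x(s)^\top w$ is exactly the $s$-th entry of the vector $Xw \in \R[\ns]$, hence
\begin{align*}
    |x(s)^\top w| \le \|Xw\|_\infty \le \|Xw\|,
\end{align*}
and the same bound applies to $|x(s')^\top w|$.

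Combining these bounds yields
\begin{align*}
    \|H(w,y)\| \le C_e\bigl(R_{\max} + (1+\gamma)\|Xw\|\bigr) \le C_{\ref{lem H h linear growth}}(\|Xw\| + 1)
\end{align*}
with $C_{\ref{lem H h linear growth}} \doteq C_e \max\{R_{\max},\,1+\gamma\}$, which is finite and independent of $w$ and $y$.

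There is no real obstacle here; the whole argument hinges on recognizing that $|x(s)^\top w|$ is coordinate-wise controlled by $\|Xw\|$ rather than by $\|w\|$. This distinction matters precisely because, in the arbitrary-feature setting, $W_*$ may be unbounded and $\|w\|$ cannot be controlled uniformly, whereas $\|Xw\|$ can (since $Xw_*$ is the same vector $\hat v_\pi$ for every $w_* \in W_*$ by~\eqref{eq limiting value}). This is exactly why Assumption~\tref{asp:X}(2) is stated in terms of $\|Xw\|$ rather than $\|w\|$, and the lemma above is what makes this assumption verifiable for linear TD($\lambda$).
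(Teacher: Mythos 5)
Your proof is correct and follows essentially the same route as the paper's: expand $H(w,y)$ from~\eqref{eq:H def}, bound $\abs{x(s)^\top w}$ by $\norm{Xw}$ via the observation that $x(s)^\top w$ is an entry of $Xw$, and use $\norm{e} \leq C_e$ and the finiteness of the reward. The only cosmetic difference is your choice of constant ($C_e\max\{R_{\max}, 1+\gamma\}$ versus the paper's $C_e(C_R+\gamma+1)$), which is immaterial.
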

The proof is in Section~\tref{proof H h linear growth}. Assumption~\tref{asp:X}(3) is verified following a similar procedure as Lemma~6.7 in \citet{bertsekas1996neuro} (Lemma~\tref{lem:td_mix}).
Invoking Theorem~\tref{thm:sa} then completes the proof.
\end{proof}

\subsection{Proof of Theorem~\tref{thm:ar td markov}}
\label{sec:ar}
\begin{proof}
We recall that in view of Lemma~\tref{lem:z1z2}, $\ker(X_1)$ creates ``duplication'' in value estimation.
We, therefore, define the projection matrix $\Pi \in \R[d \times d]$ that projects a vector into the orthogonal complement of $\ker(X_1)$, i.e.,
$\Pi w \doteq \arg\min_{w' \in \ker(X_1)^\perp} \norm{w - w'}$.
It can be computed that $\Pi = X_1^\dagger X_1$.
We now examine the sequence $\qty{\Pi w_t}$ with $\qty{w_t}$ being the iterates of~\eqref{eq:artd}
and consider the combined parameter vector 
$\tw_t \doteq \textstyle\begin{bmatrix}
    \hat{J}_t\\ \Pi w_t
\end{bmatrix} \in \R[1+d]$.
The following lemma characterizes the evolution of $\tw_t$.
Let $Y_t = (S_t, A_t, S_{t+1}, e_t) \in \fS \times \fA \times \fS \times \qty{e \in \R[d] \mid \norm{e} \leq \frac{\max_s \norm{x(s)}}{1 - \lambda}}$, then
\begin{lemma}
\label{lem:9}
        $\tw_{t+1} = \tw_t + \alpha_t (\widetilde A(Y_t) \tw_t + \widetilde b(Y_t))$,
    where we have, with $y = (s, a, s', e)$,
\begin{align}
\label{eq:def Ab ar}
        \widetilde A(y) = \begin{bmatrix} 
        -c_\beta & \0 \\
        -\Pi e & \Pi e(x(s')^\top - x(s)^\top)
        \end{bmatrix}, \widetilde b(y) = \begin{bmatrix}
        c_\beta r(s, a) \\
        r(s, a) \Pi e
    \end{bmatrix}.
\end{align}
\end{lemma}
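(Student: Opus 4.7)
The plan is to verify the two block rows of the claimed recursion separately, treating $\hat J_t$ and $\Pi w_t$ in turn, and then identify the two matrices $\widetilde A(Y_t), \widetilde b(Y_t)$ by inspection.

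\textbf{First (scalar) row.} From the \eqref{eq:artd} update for $\hat J_t$ and the relation $\beta_t = c_\beta \alpha_t$, one directly gets
\begin{align*}
\hat J_{t+1} = \hat J_t + \alpha_t\bigl(c_\beta r(S_t,A_t) - c_\beta \hat J_t\bigr),
\end{align*}
which is exactly the increment produced by the first row of $\widetilde A(Y_t)\tw_t + \widetilde b(Y_t)$ (the off-diagonal block is $\0$, so the $\Pi w_t$-part contributes nothing).

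\textbf{Second (vector) row.} Applying the linear operator $\Pi$ to the $w_{t+1}$ update and pulling the scalar TD error outside gives
\begin{align*}
\Pi w_{t+1} = \Pi w_t + \alpha_t\bigl(R_{t+1} - \hat J_t + (x(S_{t+1})-x(S_t))^\top w_t\bigr)\,\Pi e_t.
\end{align*}
The key step is to replace $w_t$ by $\Pi w_t$ inside the TD error. Since $I-\Pi$ projects onto $\ker(X_1)$, it suffices to show that $x(S_{t+1})-x(S_t) \in \ker(X_1)^\perp$. By Lemma~\ref{lem:z1z2}, $X = X_1 + \mathbf{1}\theta^\top$, so the $s$-th row of $X$ satisfies $x(s)^\top = x_1(s)^\top + \theta^\top$, where $x_1(s)^\top$ is the $s$-th row of $X_1$. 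Hence $x(s')-x(s) = x_1(s')-x_1(s) \in \col(X_1^\top) = \ker(X_1)^\perp$, so $(x(S_{t+1})-x(S_t))^\top(I-\Pi)w_t = 0$, and therefore $(x(S_{t+1})-x(S_t))^\top w_t = (x(S_{t+1})-x(S_t))^\top \Pi w_t$.

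\textbf{Assembling.} Substituting $R_{t+1}=r(S_t,A_t)$ and grouping terms into a coefficient of $\tw_t$ versus a constant, the second block row becomes
\begin{align*}
\Pi w_{t+1} = \Pi w_t + \alpha_t\bigl[r(S_t,A_t)\Pi e_t - (\Pi e_t)\hat J_t + \Pi e_t(x(S_{t+1})^\top - x(S_t)^\top)\Pi w_t\bigr],
\end{align*}
which matches the second row of $\widetilde A(Y_t)\tw_t + \widetilde b(Y_t)$ with the stated $\widetilde A, \widetilde b$.

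The only nontrivial step is the replacement $w_t \mapsto \Pi w_t$ in the TD error; this is precisely where the decomposition from Lemma~\ref{lem:z1z2} is used, and without it the recursion for $\tw_t$ would not close (the increment would depend on the $\ker(X_1)$-component of $w_t$, which is exactly the "duplication" subspace we are trying to quotient out). Everything else is direct algebra.
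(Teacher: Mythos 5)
Your proof is correct and follows essentially the same route as the paper: the scalar row is direct, and the vector row hinges on replacing $w_t$ by $\Pi w_t$ inside the TD error using the decomposition $X = X_1 + \1\theta^\top$ from Lemma~\ref{lem:z1z2}. The only cosmetic difference is that the paper verifies the replacement by computing that both $x(s')^\top w - x(s)^\top w$ and $x(s')^\top \Pi w - x(s)^\top \Pi w$ equal $x_1(s')^\top w - x_1(s)^\top w$, while you argue equivalently that $x(s')-x(s) = x_1(s')-x_1(s) \in \col(X_1^\top) = \ker(X_1)^\perp$ and hence annihilates $(I-\Pi)w_t$.
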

This view is inspired by \citet{zhang2021finite} and the proof is in Section~\tref{proof:9}.
We now apply Theorem~\tref{thm:sa} to $\qty{\tw_t}$.

The verification of Assumptions~\tref{asp:lipH} and~\tref{asp:mixing} is identical to that in Section~\tref{sec TD rate} and is thus omitted.
For Assumption~\tref{asp:W*},
we define $\widetilde W_* \doteq \left\{\begin{bmatrix} J_\pi \\ \Pi  w \end{bmatrix}\middle| w \in \overline W_*\right\}$.
It is apparently nonempty, closed, and convex. \\
For Assumption~\tref{asp:negdrift},
we define $\widetilde A \doteq \E_{y\sim d_\mathcal{Y}} \qty[\widetilde A(y)]$ and $\widetilde b \doteq \E_{y\sim d_\mathcal{Y}} \qty[\widetilde b(y)]$ and therefore realize the $h$ in~\eqref{eq:Q_update_H_background} as $h(\tw) = \widetilde A \tw + \widetilde b$.
Noticing that $\widetilde A \Gamma(\tw) + \widetilde b = \0$ (Lemma~\tref{lem:gamma ar}),
we then have $h(\widetilde w) = \widetilde A(\widetilde w - \Gamma(\widetilde w))$.
The term $\inner{\nabla L(\widetilde w)}{h(\widetilde w)}$ can thus be written as $(\widetilde w - \Gamma(\widetilde w))^\top \widetilde A(\widetilde w - \Gamma(\widetilde w))$.
Next, we prove that when $c_\beta$ is large enough, $\widetilde A$ is n.d. on $\R \times \ker(X_1)^\perp$.
\begin{lemma}
\label{lem negdef pA}
Let $c_\beta$ be sufficiently large. Then there exists a constant $C_\tref{lem negdef pA}>0$ such that $\forall z \in \R \times \ker(X_1)^\perp$, $z^T \pA z \leq -C_\tref{lem negdef pA} \norm{z}^2$.
\end{lemma}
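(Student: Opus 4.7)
The plan is to explicitly compute $\widetilde A = \mathbb{E}_{y\sim d_\mathcal{Y}}[\widetilde A(y)]$ and then bound the associated quadratic form on $\mathbb{R} \times \ker(X_1)^\perp$ by a Young's-inequality argument. Using stationarity of $\{Y_t\}$ together with the eligibility-trace identity
$\mathbb{E}[e_t(x(S_{t+1}) - x(S_t))^\top] = X^\top D_\pi (I-\lambda P_\pi)^{-1}(P_\pi - I) X = X^\top D_\pi(P_\lambda - I)X$,
and the fact that $(P_\lambda - I)\mathbf{1} = \mathbf{0}$ and $d_\pi^\top(P_\lambda - I) = \mathbf{0}^\top$, I would show that cross terms induced by $X_2 = \mathbf{1}\theta^\top$ cancel, so this expectation equals $X_1^\top D_\pi(P_\lambda - I) X_1 = \overline A$. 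Likewise $\mathbb{E}[e_t] = \frac{1}{1-\lambda} X^\top d_\pi$. Writing $c \doteq -\tfrac{1}{1-\lambda}\Pi X^\top d_\pi$, this yields the clean block form
\begin{equation*}
\widetilde A \;=\; \begin{bmatrix} -c_\beta & \mathbf{0}^\top \\ c & \Pi \overline A \end{bmatrix}.
\end{equation*}

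Next I would establish that $\overline A$ is negative definite on $\ker(X_1)^\perp$, which is the main technical core of the lemma and where the decomposition in Lemma~\ref{lem:z1z2} is essential. For $v \in \ker(X_1)^\perp \setminus \{\mathbf{0}\}$, we have $X_1 v \neq \mathbf{0}$, and $v^\top \overline A v = (X_1 v)^\top D_\pi(P_\lambda - I)(X_1 v)$. By the Perron--Frobenius fact~\eqref{eq nsd solution}, this quantity equals zero only if $X_1 v \in \{c\mathbf{1} : c \in \mathbb{R}\}$; but $\mathbf{1}\notin\col(X_1)$ by Lemma~\ref{lem:z1z2}, forcing $X_1 v = \mathbf{0}$, contradicting $v \in \ker(X_1)^\perp \setminus\{\mathbf{0}\}$. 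A standard compactness argument on the unit sphere of the finite-dimensional subspace $\ker(X_1)^\perp$ then upgrades strict negativity to a uniform bound $v^\top \overline A v \leq -\eta \norm{v}^2$ for some $\eta > 0$.

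Finally I would assemble the pieces. For $z = (u, v)^\top$ with $u\in\mathbb{R}$, $v \in \ker(X_1)^\perp$, symmetry of the orthogonal projection $\Pi$ and $\Pi v = v$ give $v^\top \Pi \overline A v = v^\top \overline A v$, hence
\begin{equation*}
z^\top \widetilde A z \;=\; -c_\beta u^2 + u\, c^\top v + v^\top \overline A v \;\leq\; -c_\beta u^2 + \norm{c}\,|u|\,\norm{v} - \eta \norm{v}^2.
\end{equation*}
Applying Young's inequality $\norm{c}|u|\norm{v} \leq \tfrac{c_\beta}{2} u^2 + \tfrac{\norm{c}^2}{2c_\beta}\norm{v}^2$ and choosing $c_\beta$ large enough that $\tfrac{\norm{c}^2}{2c_\beta} \leq \tfrac{\eta}{2}$ yields $z^\top \widetilde A z \leq -\tfrac{c_\beta}{2} u^2 - \tfrac{\eta}{2}\norm{v}^2 \leq -C_{\ref{lem negdef pA}}\norm{z}^2$ with $C_{\ref{lem negdef pA}} = \min(c_\beta/2,\eta/2)$.

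The main obstacle is step two: proving the uniform negative-definiteness of $\overline A$ on $\ker(X_1)^\perp$ without any rank assumption on $X$. The decomposition $X = X_1 + \mathbf{1}\theta^\top$ with $\mathbf{1}\notin\col(X_1)$ is exactly what is needed to preclude the degeneracy $X_1 v \in \mathrm{span}(\mathbf{1})$ and convert the semidefinite Perron--Frobenius bound into a definite one; the two other steps (the expectation computation and the Young-inequality absorption of the cross term) are then essentially routine.
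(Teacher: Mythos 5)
Your proposal is correct and follows essentially the same route as the paper: the same block decomposition of $\widetilde A$, the same reduction to negative definiteness of $\overline A = X_1^\top D_\pi(P_\lambda - I)X_1$ on $\ker(X_1)^\perp$ via the Perron--Frobenius fact and $\1 \notin \col(X_1)$ with a compactness argument, and the same absorption of the cross term by taking $c_\beta$ large. The only cosmetic difference is that you finish with an explicit Young's-inequality bound giving $C_{\tref{lem negdef pA}} = \min(c_\beta/2, \eta/2)$, whereas the paper completes the square, argues strict negativity case by case, and then invokes the extreme value theorem a second time.
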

The proof is in Section~\ref{proof:negdef pA}. 
By definition, we have $\tw_t \in \R \times \ker(X_1)^\perp$ and $\Gamma(\tw)\in \R \times \ker(X_1)^\perp$. So $\tw - \Gamma(\tw) \in \R \times \ker(X_1)^\perp$, yielding
\begin{align*}
    \langle \widetilde w_t - \Gamma(\widetilde w_t), \widetilde A(\widetilde w_t - \Gamma(\widetilde w_t))\rangle \leq -C_\tref{lem negdef pA}\norm{\widetilde w_t - \Gamma(\widetilde w_t)}^2,
\end{align*}
which verifies Assumption~\tref{asp:negdrift}. \\
For Assumption~\tref{asp:X},
we define $\widetilde{X}=\mqty[ 1 & \mathbf{0}^\top \\ \mathbf{0} & X]$. 
Assumption~\tref{asp:X}(1) is verified below.
\begin{lemma}
\label{lem gamma bound ar}
There exists a positive constant $C_\tref{lem gamma bound ar}$, such that for any $\tw \in \widetilde W_*$, $\textstyle \norm{\widetilde{X}\tw} = C_\tref{lem gamma bound ar}$.
\end{lemma}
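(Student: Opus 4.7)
The plan is to show that $\widetilde{X}\widetilde{w}$ is actually \emph{the same vector} for every $\widetilde{w} \in \widetilde{W}_*$, so that its norm is trivially a constant. By definition, any $\widetilde{w} \in \widetilde{W}_*$ has the form $\widetilde{w} = \begin{bmatrix} J_\pi \\ \Pi w \end{bmatrix}$ for some $w \in \overline{W}_*$, so
\begin{align*}
    \widetilde{X}\widetilde{w} = \begin{bmatrix} J_\pi \\ X \Pi w \end{bmatrix},
\end{align*}
and the first coordinate $J_\pi$ is a fixed constant of the MDP. Hence the whole task reduces to showing that $X \Pi w$ does not depend on the choice of $w \in \overline{W}_*$.

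To this end, I will invoke the characterization $\overline{W}_* = \{\overline{w}_*\} + \ker(X_1)$ given by Lemma~\tref{lem:fix_points} (for any fixed reference point $\overline{w}_* \in \overline{W}_*$). For any $w \in \overline{W}_*$, write $w = \overline{w}_* + k$ with $k \in \ker(X_1)$. Since $\Pi = X_1^\dagger X_1$ is the orthogonal projection onto $\ker(X_1)^\perp$, it annihilates $\ker(X_1)$, so $\Pi k = \0$ and therefore
\begin{align*}
    \Pi w = \Pi \overline{w}_* \quad \text{for every } w \in \overline{W}_*.
\end{align*}
Multiplying on the left by $X$ gives $X\Pi w = X\Pi \overline{w}_*$, a vector that depends only on the fixed choice of $\overline{w}_*$ (in fact, by the identity $X_1 X_1^\dagger X_1 = X_1$ together with $X = X_1 + \mathbf{1}\theta^\top$ from Lemma~\tref{lem:z1z2}, it equals $X_1 \overline{w}_* + (\theta^\top \Pi \overline{w}_*)\mathbf{1}$, although for the lemma we do not need to unpack it further).

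Combining the two coordinates, I conclude that $\widetilde{X}\widetilde{w}$ is the same vector for every $\widetilde{w} \in \widetilde{W}_*$, and I define $C_\tref{lem gamma bound ar} \doteq \norm{\widetilde{X}\widetilde{w}}$ for any (equivalently, every) such $\widetilde{w}$. The only step requiring some care is the appeal to Lemma~\tref{lem:fix_points}, i.e., making sure I have the precise affine structure of $\overline{W}_*$; beyond this, the argument is a direct consequence of the defining property of the projection $\Pi$ onto $\ker(X_1)^\perp$, so no genuine obstacle is expected.
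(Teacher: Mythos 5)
Your argument is correct and is essentially identical to the paper's proof: both invoke Lemma~\tref{lem:fix_points} to write any element of $\overline W_*$ as $\overline w_* + w_\0$ with $w_\0 \in \ker(X_1)$, and both use the fact that $\Pi$ annihilates $\ker(X_1)$ to conclude that $\widetilde X \tw$ is the same fixed vector for every $\tw \in \widetilde W_*$. No gaps.
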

The proof is in Section~\ref{proof gamma bound ar}.
With $H(\tw,y) = \widetilde A(y)\tw+\widetilde b(y)$,
the verification of Assumption~\tref{asp:X}(2) and (3) is similar to Lemmas~\tref{lem H h linear growth} and \tref{lem:td_mix} and is thus omitted.
Invoking Theorem~\ref{thm:sa} then yields the convergence rate of $\E \qty[L(\tw_t)]$,
i.e.,
the convergence rate of $d(\tw_t,\widetilde{W}_*)^2$ by the definition of $L$.
The next key observation is that 
$d(\tw_t, \widetilde W_*)^2 = (\hat{J}_t-J_\pi)^2+d(w_t, \overline W_*)^2$ (Lemma~\ref{lem:dd}),
which completes the proof. 
\end{proof}

\section{Experiments}
\label{sec:experiment}
We now empirically examine linear TD with linearly dependent features.
Following the practice of \citet{sutton2018reinforcement}, 
we use diminishing learning rates $\alpha_t = \frac{\alpha}{t+t_0}$ and $\beta_t= \frac{\beta}{t+t_0}$ which closely match our Assumption~\tref{assu lr} with $\xi =1$ and $t_0 = 10^7$.
We use a variant of Boyan's chain \citep{boyan1999least} with 15 states ($|\mathcal{S}|=15$) and 5 actions ($|\mathcal{A}|=5$) under a uniform policy $\pi(a|s) = 1/|\mathcal{A}|$, where the feature matrix $X \in \mathbb{R}^{15 \times 5}$ is designed to be of rank 3 (more details in Section~\ref{sec:detail experiments}).\footnote{The code for this paper is available at \url{https://github.com/WennyXie/LinearTDLambda}.} 
The weight convergence to a set is indeed observed.
It is within expectation that different $\lambda$ requires different $\alpha, \beta$. 
\begin{figure}[h]
    \centering
    \includegraphics[width=1.0\textwidth]{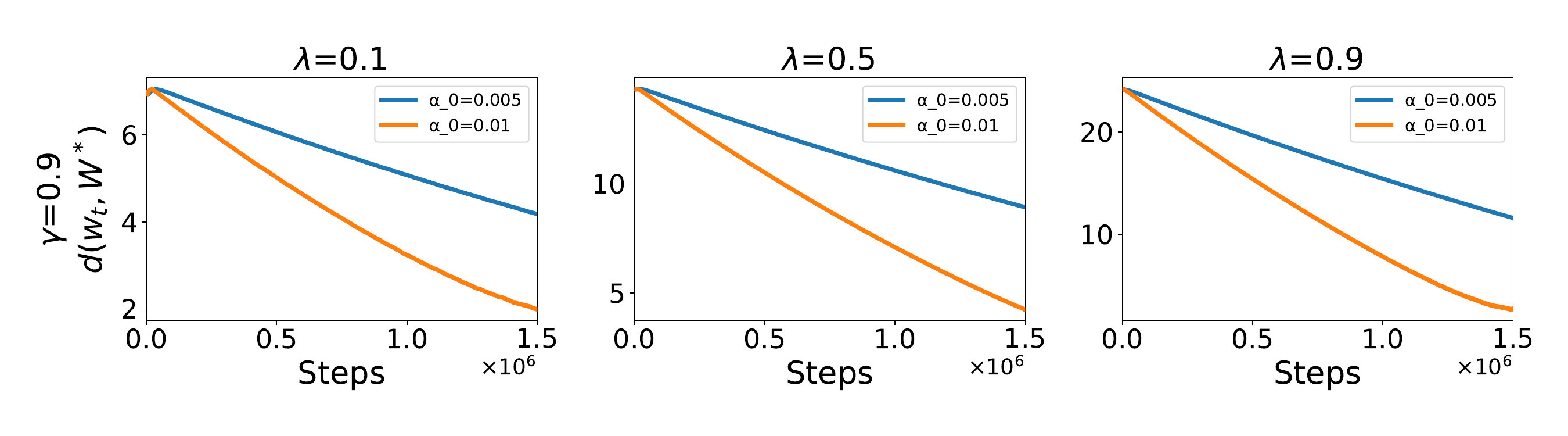}
    \caption{Convergence of~\eqref{eq:td lambda} with $\gamma = 0.9, \alpha_0 \in \qty{0.005, 0.01}$.
    Curves are averaged over 10 runs with shaded regions (too small to be visible) indicating standard errors.
    }
    \label{fig:convergence_discounted}
\end{figure}
\begin{figure}[h]
    \centering
    \includegraphics[width=1.0\textwidth]{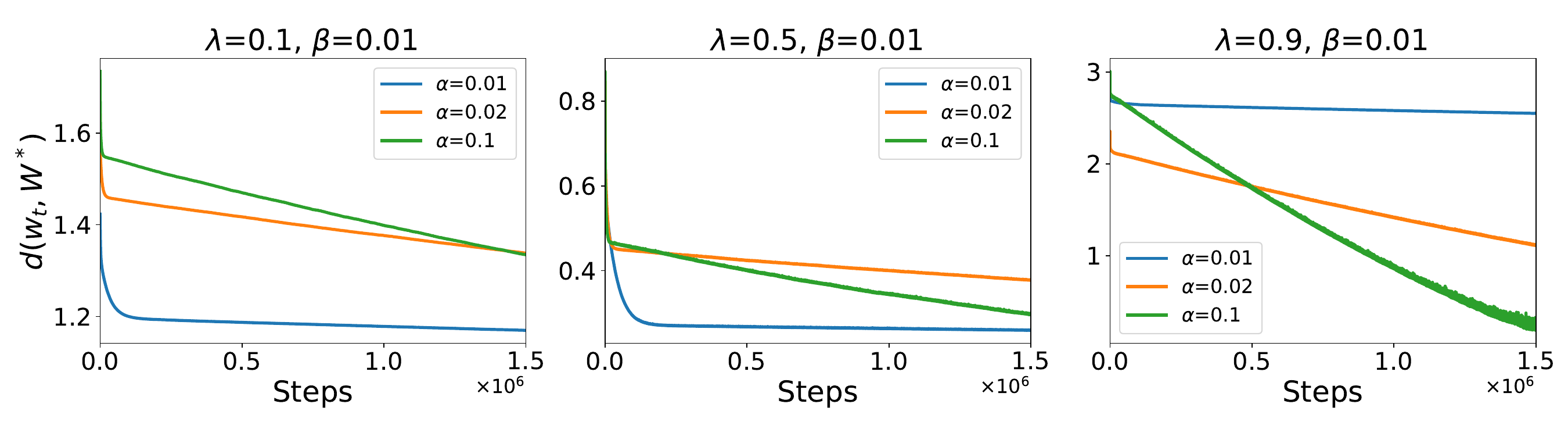}
    \caption{Convergence of~\eqref{eq:artd} with $\beta_0 = 0.01, \alpha_0 \in \qty{0.01, 0.02, 0.1}$.
    Curves are averaged over 10 runs with shaded regions (too small to be visible) indicating standard errors.
     }
    \label{fig:convergence_average_reward}
\end{figure}

\section{Conclusion}
This paper provides the first finite sample analysis of linear TD with arbitrary features in both discounted and average reward settings,
fulfilling the long standing desiderata of \citet{dayan1992convergence,tsitsiklis1997analysis,tsitsiklis1999average},
enabled by a novel stochastic approximation result concerning the convergence rate to a set.
The key methodology contributions include a novel Lyapunov function based on the distance to a set and a novel decomposition of the feature matrix for the average-reward setting.
We envision the techniques developed in this work can easily transfer to the analyses of other linear RL algorithms.
That being said,
one limitation of the work is its focus on linear function approximation.
Extension to neural networks with neural tangent kernels (cf. \citet{cai2023neural}) is a possible future work.
Another limitation is that this work considers only $L^2$ convergence rates but the convergence mode of random variables are versatile.
Establishing almost sure convergence rates, $L^p$ convergence rates, and high probability concentration bounds (cf. \citet{qian2024almost}) is also a possible future work. Finally, another promising direction is the integration of Polyak-Ruppert averaging (cf. \citet{patil2023finite, naskar2024federated}), which potentially leads to parameter-free convergence rates.

\section*{Acknowledgments and Disclosure of Funding}
This work is supported in part by the US National Science Foundation under the awards III-2128019, SLES-2331904, and CAREER-2442098, the Commonwealth Cyber Initiative's Central Virginia Node under the award VV-1Q26-001, and an Nvidia academic grant program award.
{\small
\bibliography{bibliography}
}

\clearpage
\appendix
\section{Auxiliary Lemmas and Notations}
\begin{lemma}
  [Discrete Gronwall Inequality, Lemma 8 in Section 11.2 of \citet{borkar2009stochastic}]
\label{lem discrete_gronwall}
    For non-negative real sequences $\qty{x_n, n \geq 0}$ and $\qty{a_n, n \geq 0}$ and scalar  $L \geq 0$,  it holds
    \begin{align}
        \textstyle x_{n+1} \leq C + L\sum_{i=0}^n a_ix_i \quad \forall n
    \implies
        \textstyle x_{n+1} \leq (C+x_0)\exp({L\sum_{i=0}^n a_i}) \quad \forall n.
    \end{align}
\end{lemma}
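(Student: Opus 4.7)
The plan is to prove the inequality by strong induction on $n$, using the elementary estimate $e^r - 1 \geq r$ for $r \geq 0$ to convert an arithmetic sum of exponentials into a telescoping sum. For convenience I will introduce the shorthand $b_j \doteq L\sum_{i=0}^{j-1} a_i$ with the convention $b_0 = 0$, so that the desired conclusion can be restated in the uniform form
\begin{equation*}
  x_j \leq (C + x_0)\, e^{b_j} \qquad \text{for every } j \geq 0.
\end{equation*}
This reformulation is the first key step: it absorbs the slightly awkward asymmetry in the hypothesis (which bounds $x_{n+1}$ in terms of $x_0,\dots,x_n$) into a single clean statement valid for all indices, including $j=0$.

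Next I would verify the base case $j = 0$, which is immediate since $C \geq 0$ gives $x_0 \leq C + x_0 = (C+x_0) e^{b_0}$. For the inductive step, assume the bound holds for all indices $0,1,\dots,n$. Plugging the inductive hypothesis into the assumed recursion yields
\begin{equation*}
  x_{n+1} \,\leq\, C + L\sum_{i=0}^n a_i x_i \,\leq\, C + (C+x_0)\, L\sum_{i=0}^n a_i\, e^{b_i},
\end{equation*}
using non-negativity of $a_i$. The task reduces to bounding the sum $L\sum_{i=0}^n a_i e^{b_i}$ by something that telescopes.

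The crucial estimate is the inequality $e^{b_{i+1}} - e^{b_i} = e^{b_i}\bigl(e^{L a_i} - 1\bigr) \geq L a_i\, e^{b_i}$, which holds because $L a_i \geq 0$ and $e^r - 1 \geq r$ for all $r \geq 0$. Summing from $i=0$ to $n$ gives the telescoping bound
\begin{equation*}
  L\sum_{i=0}^n a_i\, e^{b_i} \;\leq\; \sum_{i=0}^n \bigl(e^{b_{i+1}} - e^{b_i}\bigr) \;=\; e^{b_{n+1}} - 1.
\end{equation*}
Substituting back, I obtain
\begin{equation*}
  x_{n+1} \leq C + (C + x_0)\bigl(e^{b_{n+1}} - 1\bigr) = (C + x_0)\, e^{b_{n+1}} - x_0 \leq (C + x_0)\, e^{b_{n+1}},
\end{equation*}
where the final inequality uses $x_0 \geq 0$. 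This closes the induction and yields the claimed bound $x_{n+1} \leq (C+x_0)\exp\bigl(L\sum_{i=0}^n a_i\bigr)$.

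There is no real obstacle here beyond bookkeeping: the statement is a textbook discrete analogue of Gronwall's inequality. The only subtle point worth flagging is the handling of the $x_0$ term, which does not appear on the right-hand side of the hypothesis; it enters the final bound precisely because the base case of the induction forces the constant to be enlarged from $C$ to $C + x_0$, and this enlargement is then propagated through the telescoping argument.
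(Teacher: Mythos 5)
Your proof is correct. Note, however, that the paper does not prove this lemma at all: it is imported verbatim as Lemma~8 of Section~11.2 of Borkar's book, so there is no in-paper argument to compare against. Your strong induction with the telescoping bound $L\sum_{i=0}^n a_i e^{b_i} \leq e^{b_{n+1}} - 1$ (via $e^r - 1 \geq r$) is a standard and complete self-contained derivation, and the reformulation $x_j \leq (C+x_0)e^{b_j}$ for all $j \geq 0$ cleanly handles the fact that $x_0$ is not constrained by the hypothesis. The only point worth flagging is that your base case silently uses $C \geq 0$, which the statement as printed does not assert (it only quantifies $L \geq 0$); this is the standard hypothesis in Borkar's version and holds in every application the paper makes of the lemma, but strictly speaking your proof establishes the result under the additional (and essentially necessary) assumption $C \geq 0$.
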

\begin{lemma}[Lemma 11 of \citet{zhang2022globaloptimalityfinitesample}]
\label{bound_tau_alpha}
For sufficiently large $t_0$, it holds that
    \begin{equation}
        \tau_{\alpha_t} = \fO\qty( \log (t+t_0)), \quad \alpha_{t-\tau_{\alpha_t}, t-1} = \fO\qty(\frac{\log (t+t_0)}{(t+t_0)^{\xi}}).
    \end{equation}
\end{lemma}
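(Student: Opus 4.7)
The claim is purely a deterministic, calculus-style statement about the two explicit quantities $\tau_{\alpha_t}$ and $\alpha_{t-\tau_{\alpha_t},t-1}$ under Assumption~\tref{assu lr}. My plan is to handle the two bounds separately, deriving the first from the defining inequality of $\tau_\alpha$ and then using that first bound as input to a one-line supremum argument for the second.

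\textbf{Step 1: Logarithmic bound on $\tau_{\alpha_t}$.} By definition~\eqref{eq:tau_alpha}, $\tau_{\alpha_t}$ is the smallest integer $n\ge 0$ with $C_\tref{asp:X}\tau^n \le \alpha_t$. Since $\tau\in[0,1)$, taking logarithms converts this into the explicit expression
\begin{align*}
    \tau_{\alpha_t} = \left\lceil \frac{\log(C_\tref{asp:X}/\alpha_t)}{\log(1/\tau)}\right\rceil^{+},
\end{align*}
where $\lceil\cdot\rceil^{+}$ denotes the nonnegative part of the ceiling. Plugging in $\alpha_t = \alpha/(t+t_0)^\xi$ gives $\log(C_\tref{asp:X}/\alpha_t) = \log(C_\tref{asp:X}/\alpha) + \xi \log(t+t_0)$, so $\tau_{\alpha_t} \le 1 + \frac{\xi}{\log(1/\tau)}\log(t+t_0) + \frac{\log(C_\tref{asp:X}/\alpha)}{\log(1/\tau)} = \fO(\log(t+t_0))$, which is the first bound.

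\textbf{Step 2: Controlling $\alpha_{t-\tau_{\alpha_t},t-1}$.} Because $\alpha_t$ is monotonically decreasing in $t$, I bound the sum by the number of terms times its largest element:
\begin{align*}
    \alpha_{t-\tau_{\alpha_t},t-1} \;=\; \sum_{i=t-\tau_{\alpha_t}}^{t-1}\alpha_i \;\le\; \tau_{\alpha_t}\cdot \alpha_{t-\tau_{\alpha_t}} \;=\; \tau_{\alpha_t}\cdot\frac{\alpha}{(t-\tau_{\alpha_t}+t_0)^\xi}.
\end{align*}
The only subtlety is to show the denominator is of the same order as $(t+t_0)^\xi$. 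By Step~1, $\tau_{\alpha_t}/(t+t_0)\to 0$, so choosing $t_0$ large enough ensures $\tau_{\alpha_t}\le (t+t_0)/2$ for all $t\ge 0$, and hence $t-\tau_{\alpha_t}+t_0 \ge (t+t_0)/2$. Substituting this back yields
\begin{align*}
    \alpha_{t-\tau_{\alpha_t},t-1} \;\le\; \frac{2^\xi \alpha\,\tau_{\alpha_t}}{(t+t_0)^\xi} \;=\; \fO\!\left(\frac{\log(t+t_0)}{(t+t_0)^\xi}\right),
\end{align*}
which is the second bound.

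\textbf{Main obstacle.} Neither step is technically deep; the only real care is in the ``sufficiently large $t_0$'' quantifier. One must check that the threshold on $t_0$ needed to ensure $\tau_{\alpha_t}\le (t+t_0)/2$ is uniform in $t$. This follows because the logarithmic upper bound on $\tau_{\alpha_t}$ from Step~1 depends on $t_0$ only through the additive constant $\log(C_\tref{asp:X}/\alpha)/\log(1/\tau)$, whereas the right-hand side $(t+t_0)/2$ grows linearly in $t_0$; hence a single threshold on $t_0$ (depending on $\alpha,\xi,\tau,C_\tref{asp:X}$) suffices for all $t\ge 0$. With that uniformity established, the two bounds follow from the elementary estimates above.
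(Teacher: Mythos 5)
Your proof is correct. The paper does not actually prove this lemma---it imports it verbatim as Lemma~11 of \citet{zhang2022globaloptimalityfinitesample}---and your two-step argument (solve $C_\tref{asp:X}\tau^n\le\alpha_t$ for $n$ to get the logarithmic bound, then bound the sum by the number of terms times the largest term and absorb the shift $t-\tau_{\alpha_t}+t_0\ge(t+t_0)/2$ for large $t_0$) is exactly the standard derivation behind that cited result, with the uniformity-in-$t$ issue handled correctly. The only cosmetic omission is that for the sum $\alpha_{t-\tau_{\alpha_t},t-1}$ to start at a nonnegative index one also needs $t\ge\tau_{\alpha_t}$, which the paper addresses separately by restricting to $t\ge\overline t$; this does not affect the asymptotic statement you prove.
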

Lemma~\ref{bound_tau_alpha} ensures that there exists some $\overline t > 0$ (depending on $t_0$) such that for all $t \geq \overline t$,
it holds that $t \geq \tau_{\alpha_t}$.
Also, it ensures that for sufficiently large $t_0$, we have $\alpha_{t-\tau_{\alpha_t}, t-1}<1$.
Throughout the appendix, 
we always assume $t_0$ is sufficiently large and $t \geq \overline t$.
We will refine (i.e., increase) $\overline t$ along the proof when necessary.

\section{Proofs in Section~\tref{thms}}
\label{sec proof main}
\subsection{Proof of Lemma~\tref{lem:z1z2}}
\label{proof:z1z2}
\begin{proof}
    Let $x_i \in \R[d]$ denote the $i$-th column of $X$.
    Without loss of generity, let the first $m$ columns be linearly independent. \\
    \textbf{Case 1:} When $\1 \in \col(X)$, there must exist $m$ scalars $\qty{c_i}$ such that
    $\sum_{i=1}^m c_i x_i = \1$. 
    Apparently, at least one of $\qty{c_i}$ must be nonzero.
    Without loss of generity, let $x_m \neq 0$.
    We then have
    \begin{align}
        x_m = \frac{1}{c_m}(\1 - \sum_{i=1}^{m-1} c_i x_i).
    \end{align}
    In other words, $x_m$ can be expressed as linear combination of $\qty{x_1, \dots, x_{m-1}}$ and $\1$.
    Since $X$ has a column rank $m$,
    we are able to express $\qty{x_{m+1}, \dots, x_d}$ by linear combination $\qty{x_1, \dots, x_m}$ and thus further by linear combination of $\qty{x_1, \dots, x_{m-1}}$ and $\1$.
    Let $Z_1 \doteq \mqty[x_1, \dots, x_{m-1}]$ be the first $m-1$ columns of $X$ and $Z_2 \doteq \mqty[x_m, \dots, x_d]$ be the rest.
    We now know that there exists some $C \in \R[(m-1) \times (d-m+1)]$ (i.e., coefficients of the lienar combination) such that
    \begin{align}
        Z_2 = Z_1 C + \mqty[\theta_m \1, \dots, \theta_d \1],
    \end{align}
    where $\theta_m, \dots \theta_d$ are scalars (i.e., ``coordinates'' along the $\1$-axis), e.g., $\theta_m = \frac{1}{c_m}$.
    This means that we can express $X$ as
    \begin{align}
    \label{eq:defx1x2}
        X = \mqty[Z_1 & Z_1C] + \mqty[\theta_1 \1, \dots, \theta_d \1]
    \end{align}
    with $\theta_1 = \dots = \theta_{m-1} = 0$.
    Now define
    \begin{align}
        X_1 \doteq \mqty[Z_1 & Z_1C], \, X_2 \doteq \mqty[\theta_1 \1, \dots, \theta_d \1].
    \end{align}
    We note that $\1 \notin \col(Z_1)$.
    Otherwise, there would exist scalars $\qty{c_i'}$ such that $\sum_{i=1}^{m-1} c_i' x_i = \1$.
    Then we get $\sum_{i=1}^{m-1} (c_i - c_i')x_i + c_m x_m = 0$,
    which is impossible because $\qty{x_i}_{i=1,\cdots, m}$ are linearly independent.
    Since $\col(X_1) = \col(Z_1)$, we then have $\1 \notin \col(X_1)$. \\
    \textbf{Case 2:} When $\1 \notin \col(X)$, we can trivially define $X_1 = X$ and $X_2 = 0$.
    Additionally,
    we can still further decompose $X_1$ as 
    \begin{align}
    \label{eq:defx1x2 2nd}
      X_1 = \mqty[Z_1 & Z_1C],
    \end{align}
    where $Z_1$ is now the first $m$ columns of $X$.
    Apparently, we still have $\1 \notin \col(X_1)$.
\end{proof}

\begin{lemma}
    \label{lem:A bar}
Let Assumption~\ref{assum:markov} hold. Then
    $\overline A = X_1 D_\pi (P_\lambda - I)X_1, \overline b = X_1^\top D_\pi (r_\lambda - \frac{J_\pi}{1 - \lambda} \1)$.
\end{lemma}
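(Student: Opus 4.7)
The plan is to substitute the decomposition $X = X_1 + X_2$ from Lemma~\tref{lem:z1z2} into the definitions of $\overline A$ and $\overline b$ given in~\eqref{eq:bar W*}, expand the resulting expressions, and show that every term involving $X_2$ vanishes. Because $X_2 = \1 \theta^\top$, each such term reduces to an inner product of the form $\1^\top D_\pi (P_\lambda - I) X_1$, $X_1^\top D_\pi (P_\lambda - I) \1$, or $\1^\top D_\pi (r_\lambda - \frac{J_\pi}{1-\lambda} \1)$, which I would argue are individually zero using a few structural identities.

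The three identities I need are: first, $P_\lambda \1 = \1$; second, $d_\pi^\top P_\lambda = d_\pi^\top$; and third, $d_\pi^\top r_\lambda = \frac{J_\pi}{1 - \lambda}$. The first follows because $P_\pi \1 = \1$ (a stochastic matrix preserves the all-ones vector), so $P_\pi^{m+1}\1 = \1$ for every $m \geq 0$, and the geometric-series definition $P_\lambda = (1-\lambda)\sum_{m \geq 0} \lambda^m P_\pi^{m+1}$ (specialized to the average-reward case $\gamma = 1$) gives $P_\lambda \1 = (1-\lambda)\sum_{m \geq 0} \lambda^m \1 = \1$. The second uses that Assumption~\tref{assum:markov} makes $d_\pi$ the unique stationary distribution, so $d_\pi^\top P_\pi = d_\pi^\top$ and hence $d_\pi^\top P_\pi^{k} = d_\pi^\top$ for every $k \geq 1$; the same geometric sum then gives $d_\pi^\top P_\lambda = d_\pi^\top$. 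The third uses $r_\lambda = \sum_{k \geq 0} \lambda^k P_\pi^k r_\pi$ together with the observation that $d_\pi^\top r_\pi = J_\pi$ (the average reward under the stationary distribution), so $d_\pi^\top r_\lambda = \sum_{k \geq 0} \lambda^k J_\pi = \frac{J_\pi}{1-\lambda}$.

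With these three identities in hand, the computation becomes a routine expansion. For $\overline A$, I would write $(X_1 + X_2)^\top D_\pi (P_\lambda - I)(X_1 + X_2)$ and observe that the first identity kills $(P_\lambda - I) X_2 = (P_\lambda - I)\1 \theta^\top = 0$, eliminating both the pure $X_2$ term and the $X_1^\top D_\pi (P_\lambda - I) X_2$ cross term; the remaining cross term $X_2^\top D_\pi (P_\lambda - I) X_1 = \theta \1^\top D_\pi (P_\lambda - I) X_1$ is killed by the second identity. For $\overline b$, the $X_2$ contribution factors as $\theta \qty(d_\pi^\top r_\lambda - \frac{J_\pi}{1-\lambda})$, which vanishes by the third identity.

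I do not anticipate a substantive obstacle: the decomposition of Lemma~\tref{lem:z1z2} is designed precisely so that the $\1$-direction in the features is orthogonal (in the $D_\pi$-weighted sense) to both the range and the cokernel of $P_\lambda - I$, and the arithmetic is bookkeeping around the three identities above. The only mild care is making sure we are in the $\gamma = 1$ specialization of the $P_\lambda, r_\lambda$ definitions given in Section~\ref{sec:background}, so that $P_\lambda$ is genuinely a convex combination of powers of $P_\pi$ and inherits $P_\pi$'s row-stochastic and stationarity properties cleanly.
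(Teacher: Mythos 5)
Your proposal is correct and follows essentially the same route as the paper's proof: substitute the decomposition $X = X_1 + X_2$, and kill all $X_2$ terms using $(P_\lambda - I)\1 = \0$, $\1^\top D_\pi(P_\lambda - I) = \0$, and $d_\pi^\top r_\lambda = \frac{J_\pi}{1-\lambda}$ (the paper derives this last identity via $d_\pi^\top(I - \lambda P_\pi)^{-1} = \frac{1}{1-\lambda}d_\pi^\top$ rather than the geometric series, but these are the same computation). Your care about the $\gamma = 1$ specialization matches what the paper implicitly does.
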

\begin{proof}
Apply the decomposition shown in Lemma~\ref{lem:z1z2}, we can get
\begin{align}
    \overline A
    =& (X_1+X_2)^\top D_\pi(P_\lambda-I)(X_1+X_2)\\
    =&X_1^\top D_\pi(P_\lambda-I)X_1+X_2^\top D_\pi(P_\lambda-I)X_1+ X_1^\top D_\pi(P_\lambda-I)X_2+X_2^\top D_\pi(P_\lambda-I)X_2\\
    =&X_1^\top D_\pi(P_\lambda-I)X_1,
\end{align}
where the last equality holds because 
$(P_\lambda-I)\1=0$ and $\1^\top D_\pi(P_\lambda-I)=d_\pi^\top P_\lambda - d_\pi^\top=0$. 
Similarly, for $\overline b$ we can obtain
\begin{align}
    \overline b
    =& (X_1+X_2)^\top D_\pi (r_\lambda - \frac{J_\pi}{1 - \lambda} \1)\\
    =& X_1^\top D_\pi (r_\lambda - \frac{J_\pi}{1 - \lambda} \1) + X_2^\top D_\pi (r_\lambda - \frac{J_\pi}{1 - \lambda} \1)\\
    =& X_1^\top D_\pi (r_\lambda - \frac{J_\pi}{1 - \lambda} \1) + \theta^\top (d_\pi^\top (I-\lambda P_\pi)^{-1}r_\pi - \frac{J_\pi}{1 - \lambda})\\
    =& X_1^\top D_\pi (r_\lambda - \frac{J_\pi}{1 - \lambda} \1) + \theta^\top (\frac{1}{1-\lambda}d_\pi^\top r_\pi - \frac{J_\pi}{1 - \lambda})\\
    =& X_1^\top D_\pi (r_\lambda - \frac{J_\pi}{1 - \lambda} \1).
\end{align}
Here, the fourth inequality holds because $d_\pi^\top(I-\lambda P_\pi) = (1 - \lambda) d_\pi^\top$, which gives us $d_\pi^\top = (1 - \lambda) d_\pi^\top (I-\lambda P_\pi)^{-1}$. The last inequality holds since $J_\pi = d_\pi^\top r_\pi$. This completes the proof.
\end{proof}

\begin{lemma}
\label{lem:w*}
    Let Assumption~\ref{assum:markov} hold. Then $\overline W_*$ is nonempty.
\end{lemma}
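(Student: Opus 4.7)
The plan is to reduce the question ``$\overline W_* \neq \emptyset$'' to the surjectivity of a projected operator acting on $\col(X_1)$, and then exploit the decomposition from Lemma~\ref{lem:z1z2} together with the characterization \eqref{eq nsd solution} of the kernel of $D_\pi(P_\lambda - I)$ as a quadratic form.

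Concretely, I would first invoke Lemma~\ref{lem:A bar} to rewrite $\overline A = X_1^\top D_\pi (P_\lambda - I) X_1$ and $\overline b = X_1^\top D_\pi v$, where $v \doteq r_\lambda - \frac{J_\pi}{1-\lambda}\1$. Note (this computation already appears inside the proof of Lemma~\ref{lem:A bar}) that $\1^\top D_\pi v = 0$. Let $\Pi_D$ denote the $D_\pi$-orthogonal projection onto $\col(X_1)$. The key observation is that $X_1^\top D_\pi u = X_1^\top D_\pi \Pi_D u$ for every $u$, so the equation $\overline A w + \overline b = \0$ is equivalent to asking for some $y = X_1 w \in \col(X_1)$ satisfying the projected Bellman-like equation $\Pi_D(I - P_\lambda) y = \Pi_D v$.

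It thus suffices to show that the linear map $T : \col(X_1) \to \col(X_1)$, $T y \doteq \Pi_D (I - P_\lambda) y$, is surjective. Since $\col(X_1)$ is finite-dimensional, surjectivity is equivalent to injectivity. Suppose $T y = \0$ for some $y \in \col(X_1)$. Taking $D_\pi$-inner product with $y$ (and using $\Pi_D y = y$) gives $y^\top D_\pi (I - P_\lambda) y = 0$, so by \eqref{eq nsd solution} we must have $y \in \operatorname{span}\{\1\}$. However Lemma~\ref{lem:z1z2} guarantees $\1 \notin \col(X_1)$, forcing $y = \0$. Hence $T$ is bijective, so there exists $y = X_1 w$ with $T y = \Pi_D v$, and the corresponding $w$ lies in $\overline W_*$.

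The main obstacle is precisely pinpointing where the decomposition $X = X_1 + X_2$ is needed: starting from the raw $X$, the analogue of $T$ on $\col(X)$ would have $\1 \in \col(X)$ in its kernel (when that inclusion holds), destroying injectivity. Lemma~\ref{lem:z1z2} has been designed exactly to strip off that ``$\1$-direction'' into $X_2$ while preserving $\overline A$ and $\overline b$, and the present argument is the first place where this pay-off is visible. Beyond that, the only technical care is in handling the projection $\Pi_D$ under the weighted inner product, which is standard.
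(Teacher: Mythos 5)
Your proof is correct, but it takes a genuinely different route from the paper's. The paper constructs an explicit solution: it reuses the finer decomposition $X_1 = [Z_1,\, Z_1C]$ from the proof of Lemma~\ref{lem:z1z2} (with $Z_1$ of full column rank and $\1 \notin \col(Z_1)$), splits $w$ into $(w_1, w_2)$, sets $w_2 = \0$, and invokes Lemma~7 of \citet{tsitsiklis1999average} to conclude that $Z_1^\top D_\pi(P_\lambda - I)Z_1$ is negative definite and hence invertible, which yields $w_1$ in closed form; the second block of equations is just $C^\top$ times the first and is therefore automatically satisfied. You instead work coordinate-free on $\col(X_1)$: you recast $\overline A w + \overline b = \0$ as the projected equation $\Pi_D(I-P_\lambda)y = \Pi_D v$ for $y = X_1 w$, and prove that $T = \Pi_D(I-P_\lambda)$ restricted to $\col(X_1)$ is injective --- via the quadratic-form characterization \eqref{eq nsd solution} and $\1 \notin \col(X_1)$ --- hence surjective by finite dimensionality. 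Your reduction steps check out: $X_1^\top D_\pi u = \0$ iff $u$ is $D_\pi$-orthogonal to $\col(X_1)$ iff $\Pi_D u = \0$ (using that $D_\pi$ is positive definite under Assumption~\ref{assum:markov}), and pairing $Ty = \0$ with $y$ in the $D_\pi$-inner product indeed gives $y^\top D_\pi(P_\lambda - I)y = 0$. Both arguments ultimately rest on the same two ingredients --- Lemma~\ref{lem:z1z2}'s guarantee that $\1 \notin \col(X_1)$ and the Perron--Frobenius fact --- but yours avoids both the sub-decomposition into $Z_1$ and the appeal to the external negative-definiteness lemma, at the cost of not producing an explicit formula for a member of $\overline W_*$.
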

\begin{proof}
    In view of~\eqref{eq:defx1x2} and~\eqref{eq:defx1x2 2nd},
we have $X_1 = \mqty[Z_1 & Z_1C]$.
Notably, $Z_1$ has a full column rank and $\1 \notin \col(Z_1)$.
Decompose $w \doteq \begin{bmatrix}
    w_1 \\ w_2
\end{bmatrix}$ accordingly and recall \eqref{eq:bar W*} and Lemma~\tref{lem:A bar}, we can rewrite $\overline Aw + \overline b = 0$ as
\begin{align}
    \begin{bmatrix}
    Z_1^\top \\ (Z_1C)^\top
\end{bmatrix}D_\pi(P_\lambda-I)\mqty[Z_1 & Z_1C]\begin{bmatrix}
    w_1 \\ w_2
\end{bmatrix} = \begin{bmatrix}
    -Z_1^\top D_\pi (r_\lambda - \frac{J_\pi}{1 - \lambda} \1) \\ -(Z_1C)^\top D_\pi (r_\lambda - \frac{J_\pi}{1 - \lambda} \1)
\end{bmatrix},
\end{align}
which thus gives us the following simultaneous equations
\begin{equation}
    \begin{cases}
    Z_1^\top D_\pi(P_\lambda-I)Z_1w_1+Z_1^\top D_\pi(P_\lambda-I) Z_1Cw_2 = -Z_1^\top D_\pi (r_\lambda - \frac{J_\pi}{1 - \lambda} \1)\\
    (Z_1C)^\top D_\pi(P_\lambda-I)Z_1w_1+(Z_1C)^\top D_\pi(P_\lambda-I) Z_1Cw_2 = -(Z_1C)^\top D_\pi (r_\lambda - \frac{J_\pi}{1 - \lambda} \1)
\end{cases}.
\end{equation}
We now prove the claim by constructing a solution.
Choose any $w_2\in \ker(Z_1C)$ (e.g., $w_2 = 0$), the equations then become
\begin{align}
\begin{cases}
    Z_1^\top D_\pi(P_\lambda-I)Z_1w_1=-Z_1^\top D_\pi (r_\lambda - \frac{J_\pi}{1 - \lambda} \1)\\
    C^\top Z_1^\top D_\pi(P_\lambda-I)Z_1w_1=-C^\top Z_1^\top D_\pi (r_\lambda - \frac{J_\pi}{1 - \lambda} \1).
\end{cases} 
\end{align}
Since $Z_1$ is full rank and $\1 \notin Z_1$, 
Lemma~7 of \citet{tsitsiklis1999average} shows $Z_1^\top D_\pi(P_\lambda-I)Z_1$ is n.d. and thus invertible. 
Choose $w_1 =- (Z_1^\top D_\pi(P_\lambda-I)Z_1)^{-1}Z_1^\top D_\pi (r_\lambda - \frac{J_\pi}{1 - \lambda} \1)$ then satisfies the equations.
This completes the proof.

\end{proof}

\begin{lemma}
\label{lem:fix_points}
    Let Assumption~\ref{assum:markov} hold. Then \\$\overline W_*= \{\overline w_*\} + \ker(X_1)$ and $\ker(X_1) = \qty{w | Xw = c\1, c \in \R}$.
\end{lemma}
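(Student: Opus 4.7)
}
The plan is to prove the two claims in sequence, handling the kernel characterization first and then using it to describe $\overline W_*$. For the identity $\ker(X_1) = \qty{w \mid Xw = c\1, c \in \R}$, I will use the decomposition $X = X_1 + X_2 = X_1 + \1\theta^\top$ guaranteed by Lemma~\tref{lem:z1z2}, together with the key property $\1 \notin \col(X_1)$ also established there. The forward direction is a one-line computation: if $X_1 w = \0$ then $Xw = (\theta^\top w)\1$, so taking $c \doteq \theta^\top w$ does the job. For the reverse direction, suppose $Xw = c\1$; then $X_1 w = (c - \theta^\top w)\1$, and since $\1 \notin \col(X_1)$ we must have $c - \theta^\top w = 0$ and hence $X_1 w = \0$. (The degenerate case $X_1 = 0$ is handled trivially since then both sets are all of $\R[d]$.)

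For the characterization $\overline W_* = \qty{\overline w_*} + \ker(X_1)$, the plan is to first show $\ker(X_1) = \ker(\overline A)$. The inclusion $\ker(X_1) \subseteq \ker(\overline A)$ is immediate from $\overline A = X_1^\top D_\pi (P_\lambda - I) X_1$ (Lemma~\tref{lem:A bar}). The reverse inclusion is the main obstacle and is where the Perron--Frobenius machinery enters: if $\overline A w = \0$, then $w^\top \overline A w = 0$, i.e., $(X_1 w)^\top D_\pi (P_\lambda - I)(X_1 w) = 0$. Invoking~\eqref{eq nsd solution}, this forces $X_1 w = c\1$ for some $c \in \R$. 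Since $\1 \notin \col(X_1)$, we conclude $c = 0$ and hence $w \in \ker(X_1)$.

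With $\ker(X_1) = \ker(\overline A)$ in hand, the remainder is a standard affine-subspace argument. Lemma~\tref{lem:w*} provides an $\overline w_* \in \overline W_*$, so for any other $w \in \overline W_*$ we subtract the defining equations to get $\overline A(w - \overline w_*) = \0$, placing $w - \overline w_* \in \ker(\overline A) = \ker(X_1)$. Conversely, any $w = \overline w_* + w_\0$ with $w_\0 \in \ker(X_1) = \ker(\overline A)$ satisfies $\overline A w + \overline b = \overline A \overline w_* + \overline b = \0$, so $w \in \overline W_*$. The only subtlety to keep an eye on is ensuring the $\1 \notin \col(X_1)$ property is applied correctly in both Case 1 and Case 2 of Lemma~\tref{lem:z1z2}; apart from that, the argument is a clean combination of the feature decomposition with the Perron--Frobenius-type identity.
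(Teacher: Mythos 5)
Your proposal is correct and follows essentially the same route as the paper: both arguments rest on the Perron--Frobenius identity~\eqref{eq nsd solution} combined with the property $\1 \notin \col(X_1)$ from Lemma~\tref{lem:z1z2}, the only cosmetic difference being that you apply~\eqref{eq nsd solution} directly to $X_1 v$ via Lemma~\tref{lem:A bar} (routing through $\ker(X_1)=\ker(\overline A)$), whereas the paper applies it to $Xv$ and then converts to $X_1 v$ using $X_2 = \1\theta^\top$. If anything, your write-up is slightly more complete, since you explicitly verify the reverse inclusion $\qty{\overline w_*} + \ker(X_1) \subseteq \overline W_*$, which the paper leaves implicit.
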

\begin{proof}
    For any solution $w_*, w_{**} \in \overline W_*$, according to the definition of $\overline W_*$ in \eqref{eq:bar W*}, we have $\overline A w_*+ \overline b=\0$ and $\overline A w_{**}+ \overline b=\0$. 
    That is $\overline A(w_*- w_{**})= \0$. By multiplying $(w_*- w_{**})^\top$ on both side we can get
    \begin{align}
        (w_*-  w_{**})^\top X^\top D_\pi(P_\lambda-I)X(w_*- w_{**})=0.
    \end{align}
    According to the Perron-Frobenius theorem with Assumption~\ref{assum:markov}, $v^\top D_\pi(P_\lambda-I) v = 0$ if and only if $v = c\1$ for some $c \in \R$. 
    Therefore, we must have $X( w_*-w_{**})=c\1$ for some $c\in \R$. That is, $(X_1 + X_2)( w_*- w_{**})=c\1$. 
    Recall the definition of $X_2$ in \eqref{eq:defx1x2}, 
    we have $X_2(w_* - w_{**}) = (\theta^\top (w_* - w_{**})) \1$.
    This means
    $X_1(w_*- w_{**})=c'\1$ with $c' = c - \theta^\top ( w_* - w_{**})$. 
    Since $\1 \notin \col(X_1)$, we must have $c'=0$. 
    That is, $w_*- w_{**} \in \ker(X_1)$.
    Thus, we have established that $\overline W_* = \qty{\overline w_*} + \ker(X_1)$.
    
    Furthermore, 
    if $w \in \ker(X_1)$,
    we have $Xw = (X_1 + X_2)w = (\theta^\top w)\1$.
    If $Xw = c \1$,
    we have $X_1w = c \1 - X_2 w = (c - \theta^\top w) \1$.
    But $\1 \notin \col(X_1)$.
    So we must have $c - \theta^\top w = 0$, i.e., $w \in \ker(X_1)$.
    This completes the proof of $\ker(X_1) = \qty{w | Xw = c\1, c \in \R}$.
\end{proof}

\section{Proofs in Section~\tref{sec:thm_sa}}
\label{sec proof sa}
    
\begin{lemma}
    \label{bound_wt_wt1}
    For sufficiently large $t_0$,
    there exists a constant $C_\tref{bound_wt_wt1}$ such that the following statement holds.
    For any $t \geq \overline t$ and any $i \in [t - \tau_{\alpha_t}, t]$,
    it holds that
        \begin{align}
            & \norm{w_i - w_{t-\tau_{\alpha_t}}} \leq C_\tref{bound_wt_wt1}\alpha_{t-\tau_{\alpha_t},i-1}(\norm{w_i-\Gamma(w_i)} + 1)\label{eq:1}.
        \end{align}
    \end{lemma}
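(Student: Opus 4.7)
The plan is to telescope the update rule over the interval $[t-\tau_{\alpha_t}, i-1]$, anchor everything to the earlier point $w_{t-\tau_{\alpha_t}}$ via a discrete Gronwall argument, and then convert the resulting bound back to an $\|w_i - \Gamma(w_i)\|$-based bound by absorbing a small term. Concretely, from~\eqref{eq:Q_update_H_background} I would write
\begin{align*}
    w_i - w_{t-\tau_{\alpha_t}} = \sum_{j=t-\tau_{\alpha_t}}^{i-1} \alpha_j H(w_j, Y_{j+1}),
\end{align*}
and combine Assumption~\tref{asp:X}(2) with Lemma~\tref{lem wt bound} to obtain a constant $C$ with $\|H(w_j, Y_{j+1})\| \leq C(\|w_j - \Gamma(w_j)\| + 1)$.

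Let $u_j \doteq \|w_j - w_{t-\tau_{\alpha_t}}\|$. Since $\Gamma(w_j)$ is the closest point of $W_*$ to $w_j$ and $\Gamma(w_{t-\tau_{\alpha_t}}) \in W_*$, a triangle inequality gives $\|w_j - \Gamma(w_j)\| \leq \|w_j - \Gamma(w_{t-\tau_{\alpha_t}})\| \leq u_j + \|w_{t-\tau_{\alpha_t}} - \Gamma(w_{t-\tau_{\alpha_t}})\|$. Plugging this into the telescoped bound yields a recursion of the form
\begin{align*}
    u_{j+1} \leq (1+C\alpha_j)u_j + C\alpha_j\bigl(\|w_{t-\tau_{\alpha_t}} - \Gamma(w_{t-\tau_{\alpha_t}})\| + 1\bigr).
\end{align*}
Applying the discrete Gronwall inequality (Lemma~\tref{lem discrete_gronwall}) and using that $\prod_j(1+C\alpha_j) \leq \exp(C\alpha_{t-\tau_{\alpha_t}, i-1})$ is bounded by an absolute constant (because $\alpha_{t-\tau_{\alpha_t},t-1}<1$ for large $t_0$ by Lemma~\tref{bound_tau_alpha}), one gets
\begin{align*}
    u_i \leq C_1\, \alpha_{t-\tau_{\alpha_t}, i-1}\bigl(\|w_{t-\tau_{\alpha_t}} - \Gamma(w_{t-\tau_{\alpha_t}})\| + 1\bigr).
\end{align*}

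The final step is to convert the anchor from $t-\tau_{\alpha_t}$ to $i$. By the same projection-triangle argument applied in the reverse direction, $\|w_{t-\tau_{\alpha_t}} - \Gamma(w_{t-\tau_{\alpha_t}})\| \leq \|w_i - \Gamma(w_i)\| + u_i$. Substituting and rearranging yields
\begin{align*}
    u_i\bigl(1 - C_1\alpha_{t-\tau_{\alpha_t}, i-1}\bigr) \leq C_1\alpha_{t-\tau_{\alpha_t}, i-1}\bigl(\|w_i - \Gamma(w_i)\| + 1\bigr).
\end{align*}
By Lemma~\tref{bound_tau_alpha}, for sufficiently large $t_0$ we have $\alpha_{t-\tau_{\alpha_t}, i-1} \leq \alpha_{t-\tau_{\alpha_t}, t-1} \leq \tfrac{1}{2C_1}$, so the factor on the left is at least $1/2$, giving the claim with $C_\tref{bound_wt_wt1} \doteq 2C_1$.

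\textbf{Main obstacle.} The subtle point is that the natural forward iteration produces a bound in $\|w_{t-\tau_{\alpha_t}} - \Gamma(w_{t-\tau_{\alpha_t}})\|$, whereas the lemma demands a bound in $\|w_i - \Gamma(w_i)\|$ at the \emph{later} time. Swapping these via the 1-Lipschitzness of $w \mapsto \|w-\Gamma(w)\|$ introduces an extra $u_i$ term on the right-hand side, which must be absorbed into the left; this is only possible once we exploit Lemma~\tref{bound_tau_alpha} to make the step-size sum $\alpha_{t-\tau_{\alpha_t}, t-1}$ small by enlarging $t_0$. This sufficient-largeness condition on $t_0$ is what the statement quietly requires, and it is what makes the implicit inequality solvable.
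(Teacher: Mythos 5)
Your proof is correct and follows essentially the same route as the paper's: telescope the update, apply the discrete Gronwall inequality, and then swap the anchor from $t-\tau_{\alpha_t}$ to $i$ by absorbing the resulting small term using the smallness of $\alpha_{t-\tau_{\alpha_t},t-1}$ for large $t_0$. The only cosmetic difference is that the paper carries $\norm{Xw_k}$ as the intermediate quantity and converts to $\norm{w_i-\Gamma(w_i)}$ via Lemma~\tref{lem wt bound} at the very end, whereas you convert immediately and use the $1$-Lipschitzness of $w \mapsto \norm{w-\Gamma(w)}$; both are equivalent given Lemma~\tref{lem wt bound}.
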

    \begin{proof}
        In this proof, to simplify notations,
        we define shorthand $t_1 \doteq t - \tau_{\alpha_t}$ and $C_x \doteq \max_{s} \norm{x(s)}$.
        Given Lemma~\ref{bound_tau_alpha},
        we can select a sufficiently large $t_0$ such that for any $t \geq \overline t$,
        \begin{align}
\exp(C_\tref{asp:X}C_x \alpha_{t- \tau_{\alpha_t} t - 1}) <& 3, \\
C_\tref{asp:X}C_x \alpha_{t- \tau_{\alpha_t} t - 1} <& \frac{1}{6}.
        \end{align}
       We then bound $\norm{w_i - w_{t_1}}$ as 
        \begin{align}
            \norm{w_i - w_{t_1}} &\leq \sum_{k=t_1}^{i-1}\norm{\alpha_k H(w_k, Y_{k+1})}\\
            &\leq \sum_{k=t_1}^{i-1}\alpha_k C_\tref{asp:X}(\norm{Xw_k-Xw_{t_1}}+\norm{Xw_{t_1}}+1)\quad\text{(Assumption~\tref{asp:X})}\\
            &\leq \sum_{k=t_1}^{i-1}\alpha_k C_\tref{asp:X}(\norm{Xw_{t_1}}+1) + \sum_{k=t_1}^{i-1}\alpha_k C_\tref{asp:X}(\norm{Xw_k-Xw_{t_1}})\\
            &\leq \sum_{k=t_1}^{i-1}\alpha_k C_\tref{asp:X}(\norm{Xw_{t_1}}+1) + \sum_{k=t_1}^{i-1}\alpha_k C_{\tref{bound_wt_wt1},1}(\norm{w_k-w_{t_1}})\\
            &\leq C_\tref{asp:X}\alpha_{t_1,i-1} (\norm{Xw_{t_1}} + 1)\exp(C_{\tref{bound_wt_wt1},1}\alpha_{t_1, t - 1}), \qq{(Lemma~\ref{lem discrete_gronwall})}
        \end{align}
        where $C_{\tref{bound_wt_wt1},1}\doteq C_\tref{asp:X}C_x$.
        We then have
        \begin{align}
            &\norm{w_i - w_{t_1}}\\ 
            \leq& C_\tref{asp:X}\alpha_{t_1,i-1} (\norm{Xw_i - Xw_{t_1}} + \norm{Xw_i} + 1)\exp(C_{\tref{bound_wt_wt1},1}\alpha_{t_1, t - 1}) \\
            \leq& C_\tref{asp:X}C_x\exp(C_{\tref{bound_wt_wt1},1}\alpha_{t_1, t - 1})\alpha_{t_1,i-1} \norm{w_i - w_{t_1}} + \exp(C_{\tref{bound_wt_wt1},1}\alpha_{t_1, t - 1})(\norm{Xw_i} + 1)C_\tref{asp:X}\alpha_{t_1,i-1} \\
            \leq& \frac{1}{2}\norm{w_{i}-w_{t_1}} + C_{\tref{bound_wt_wt1},2}\alpha_{t_1,i-1}(\norm{Xw_{i}} + 1),
        \end{align}
        where $C_{\tref{bound_wt_wt1},2} \doteq 3 C_\tref{asp:X}$.
        Thus, we have
        \begin{align}
            \norm{w_{i} - w_{t_1}} &\leq 2C_{\tref{bound_wt_wt1},2}\alpha_{t_1,i-1}(\norm{Xw_{i}} + 1)\\
            &\leq 2C_{\tref{bound_wt_wt1},2}\alpha_{t_1,i-1}(C_\tref{lem wt bound}(\norm{w_i-\Gamma(w_i)}+1)+1)\\
            &\leq C_\tref{bound_wt_wt1}\alpha_{t_1,i-1}(\norm{w_i-\Gamma(w_i)} + 1),
        \end{align}
        where $C_\tref{bound_wt_wt1} \doteq 2C_{\tref{bound_wt_wt1},2}(C_\tref{lem wt bound}+1)$.
        This completes the proof.
    \end{proof}

\subsection{Proof of Lemma~\tref{lem wt bound}}
\label{proof wt bound}
\begin{proof}
    \begin{align}
        \norm{Xw} &= \norm{Xw-X\Gamma(w)+X\Gamma(w)}\\
        &\leq \norm{X(w-\Gamma(w))}+\norm{X\Gamma(w)}\\
        &\leq \norm{X}\norm{w-\Gamma(w)}+C_\tref{asp:X}\quad\text{(Assumption~\tref{asp:X})}
    \end{align}
\end{proof}

\subsection{Proof of Lemma~\tref{lem bound H}}
\label{proof bound H}
\begin{proof}
    According to the definition of $H(w_t, Y_t)$ in \eqref{eq:H def},
    \begin{align*}
        &\norm{H(w_t, Y_t)}^2 \\
        \leq& C_\tref{asp:X}^2(\norm{Xw_t}+1)^2 \quad\text{(By Assumption~\tref{asp:X})}\\
        \leq& 2C_\tref{asp:X}^2(\norm{Xw_t}^2+1)\\
        \leq& 2C_\tref{asp:X}^2(C_\tref{lem wt bound}^2(\norm{w_t-\Gamma(w_t)}+1)^2+1)\\
        \leq& 2C_\tref{asp:X}^2(2C_\tref{lem wt bound}^2(\norm{w_t-\Gamma(w_t)}^2+1)+1)\\
        \leq& C_{\tref{lem bound H}}(\norm{w_t-\Gamma(w_t)}^2+1),
    \end{align*}
    where $C_{\tref{lem bound H}}\doteq 2C_\tref{asp:X}^2(2C_\tref{lem wt bound}^2+1)$. This completes the proof.
\end{proof}

\subsection{Proof of Lemma~\ref{lem:bound T}}
\label{proof:bound T}
\begin{proof}
    We first decompose $\langle w_t-\Gamma(w_t), H(w_t,Y_t)-h(w_t)\rangle$ into three components similarly to \citet{srikant2019finite} as
    \begin{align}
    \label{eq:T lambda}
        &\langle w_t-\Gamma(w_t), H(w_t,Y_t)-h(w_t)\rangle\\
        =& \underbrace{\langle  (w_{t} - \Gamma (w_{t})) -(w_{t-\tau_{\alpha_t}} - \Gamma(w_{t-\tau_{\alpha_t}})), H(w_t, Y_t)-h(w_{t})\rangle}_{T_1} \notag\\
        &+ \underbrace{\langle  w_{t-\tau_{\alpha_t} } - \Gamma (w_{t-\tau_{\alpha_t} } ), H(w_t, Y_t)-H(w_{t-\tau_{\alpha_t} }, Y_t) + h(w_{t-\tau_{\alpha_t} }) - h(w_t)\rangle}_{T_2} \notag\\
        &+\underbrace{\langle w_{t-\tau_{\alpha_t}} -\Gamma (w_{t-\tau_{\alpha_t}} ), H(w_{t-\tau_{\alpha_t} } , Y_t ) - h(w_{t-\tau_{\alpha_t} } ) \rangle}_{T_3}.
    \end{align}
    We leverage Lemma~\tref{lem wt bound} and \eqref{eq:tau_alpha} to bound them one by one as follows.\\
    \tb{Bounding $T_1$:}
    \begin{align*}
        T_1\leq\underbrace{\norm{(w_t-\Gamma(w_t)) - (w_{t-\tau_{\alpha_t}} - \Gamma(w_{t-\tau_{\alpha_t}}))}}_{T_{11}} \cdot \underbrace{\norm{H(w_t,Y_t)-h(w_t)}}_{T_{12}}.
    \end{align*}
    For the first term, we have
    \begin{align}
    \label{eq:T11}
        T_{11}&= \norm{w_t-\Gamma(w_t)-w_{t-\tau_{\alpha_t}}-\Gamma(w_{t-\tau_{\alpha_t}})}\notag\\
        &\leq \norm{w_t-w_{t-\tau_{\alpha_t}}}+\norm{\Gamma(w_t)-\Gamma(w_{t-\tau_{\alpha_t}})}\notag\\
        &\leq 2\norm{w_t-w_{t-\tau_{\alpha_t}}}\notag\quad\text{(Since $W_*$ is convex)}\\
        &\leq 2C_\tref{bound_wt_wt1}\alpha_{t-\tau_{\alpha_t},t-1}(\norm{w_t-\Gamma(w_t)} +1)\quad\text{(Lemma~\tref{bound_wt_wt1})}
    \end{align}
    For the second term, we have
    \begin{align}
            T_{12} &\leq C_\tref{asp:X}(\norm{Xw_t}+1) + C_\tref{asp:X}(\norm{Xw_t}+1)\\
            &\leq 2C_\tref{asp:X}(C_\tref{lem wt bound}(\norm{w_t-\Gamma(w_t)}+1)+1)\notag\\
            &\leq C_{\tref{lem:bound T},1}(\norm{w_t-\Gamma(w_t)}+1),
    \end{align}
    where $C_{\tref{lem:bound T},1}\doteq 2C_\tref{asp:X}(C_\tref{lem wt bound}+1)$.
    Therefore, we can get
    \begin{align*}
        T_1&\leq 2C_\tref{bound_wt_wt1}C_{\tref{lem:bound T},1}\alpha_{t-\tau_{\alpha_t},t-1}(\norm{w_t-\Gamma(w_t)} + 1)^2.
    \end{align*}
    Choosing $C_{\tref{lem:bound T},a} \doteq 4C_\tref{bound_wt_wt1}C_{\tref{lem:bound T},1}$ then yields the bound
    \begin{align*}
        T_1 \leq C_{\tref{lem:bound T},a}\alpha_{t-\tau_{\alpha_t},t-1}(\norm{w_t-\Gamma(w_t)}^2 + 1). 
    \end{align*}
\textbf{Bounding $T_2$:}
\begin{align*}
    T_2 &= \langle w_{t-\tau_{\alpha_t} } - \Gamma (w_{t-\tau_{\alpha_t} } ), H(w_t, Y_t)-H(w_{t-\tau_{\alpha_t} }, Y_t) + h(w_{t-\tau_{\alpha_t} }) - h(w_t)\rangle \\
    &\leq \underbrace{\norm{w_{t-\tau_{\alpha_t} } - \Gamma (w_{t-\tau_{\alpha_t} } )}}_{T_{21}} \cdot \underbrace{\norm{H(w_t, Y_t)-H(w_{t-\tau_{\alpha_t} }, Y_t) + h(w_{t-\tau_{\alpha_t} }) - h(w_t)\rangle}}_{T_{22}}.
\end{align*}

For the first term, we have:
\begin{align}
\label{eq:T21}
    \notag T_{21}
    &= \norm{(w_{t-\tau_{\alpha_t} } - \Gamma (w_{t-\tau_{\alpha_t} } )) - (\Gamma (w_t )- \Gamma (w_t ))} \notag\\
    \notag&\leq \norm{w_{t-\tau_{\alpha_t} } - \Gamma (w_t)} + \norm{\Gamma (w_t )- \Gamma (w_{t-\tau_{\alpha_t} } )} \notag\\
    &\leq \norm{w_{t-\tau_{\alpha_t} } - \Gamma (w_t)} + \norm{w_t - w_{t-\tau_{\alpha_t}}}\notag\\
    &\leq \norm{w_t - \Gamma (w_t)+w_{t-\tau_{\alpha_t} } - w_t} + \norm{w_t - w_{t-\tau_{\alpha_t}}}\notag \\
    &\leq \norm{w_t - \Gamma (w_t)} + 2\norm{w_t - w_{t-\tau_{\alpha_t}}}\notag  \\
    &\leq \norm{w_t - \Gamma (w_t)} + 2C_\tref{bound_wt_wt1}\alpha_{t-\tau_{\alpha_t},t-1}(\norm{w_t-\Gamma(w_t)} + 1) \quad\text{(Lemma~\tref{bound_wt_wt1})}\notag\\
    &\leq C_{\tref{lem:bound T},2}(\norm{w_t-\Gamma(w_t)} + 1).\quad\text{(Lemma~\tref{bound_tau_alpha})}
\end{align}

For the second term, we have:
\begin{align}
    T_{22} \leq &\norm{H(w_t, Y_t)-H(w_{t-\tau_{\alpha_t} }, Y_t)} + \norm{h(w_t)-h(w_{t-\tau_{\alpha_t} )}} \notag \\
    \leq& 2C_\tref{asp:lipH}\norm{w_{t-\tau_{\alpha_t} } - w_t}\notag \\
    \leq & C_{\tref{lem:bound T},3}C_\tref{bound_wt_wt1}\alpha_{t-\tau_{\alpha_t},t-1}(\norm{w_t -\Gamma (w_t )} + 1) .\quad\text{(Lemma~\tref{bound_wt_wt1})}\label{eq:T22}
\end{align}
Combine the result in \eqref{eq:T21} and \eqref{eq:T22}, we have:
\begin{align*}
    T_2 &\leq C_{\tref{lem:bound T},2}C_{\tref{lem:bound T},3}C_\tref{bound_wt_wt1}\alpha_{t-\tau_{\alpha_t},t-1}(\norm{w_t-\Gamma(w_t)} + 1)^2.
\end{align*}
Choosing $C_{\tref{lem:bound T},b} \doteq 2C_{\tref{lem:bound T},2}C_{\tref{lem:bound T},3}C_\tref{bound_wt_wt1}$  then yield the bound
    \begin{align*}
        T_2 \leq C_{\tref{lem:bound T},b}\alpha_{t-\tau_{\alpha_t},t-1}(\norm{w_t-\Gamma(w_t)}^2 + 1). 
    \end{align*}


\textbf{Bounding $T_3$:}
\begin{align*}
    T_3 &= \indot{w_{t-\tau_{\alpha_t} } -\Gamma (w_{t-\tau_{\alpha_t} } )}{H(w_{t-\tau_{\alpha_t} } , Y_t ) - h(w_{t-\tau_{\alpha_t} } )}.
\end{align*}
Take expectation on both sides, we can get
\begin{align*}
    \E \qty[T_3] &= \E\qty[\langle w_{t-\tau_{\alpha_t} } - \Gamma (w_{t-\tau_{\alpha_t} } ), H(w_{t-\tau_{\alpha_t} }, Y_t ) - h(w_{t-\tau_{\alpha_t} })\rangle ]\\
    &= \E\qty[\E\qty[\langle w_{t-\tau_{\alpha_t} } - \Gamma (w_{t-\tau_{\alpha_t} } ), H(w_{t-\tau_{\alpha_t} }, Y_t ) - h(w_{t-\tau_{\alpha_t} })\rangle ] \big|_{Y_{t-\tau_{\alpha_t}}}^{w_{t-\tau_{\alpha_t}}} ] \\
    &=\E\qty[\langle w_{t-\tau_{\alpha_t} } - \Gamma (w_{t-\tau_{\alpha_t} } ), \E\qty[H(w_{t-\tau_{\alpha_t} }, Y_t ) - h(w_{t-\tau_{\alpha_t} }) \big|_{Y_{t-\tau_{\alpha_t}}}^{w_{t-\tau_{\alpha_t}}}] \rangle ] \\
    &\leq \E\qty[\underbrace{\norm{w_{t-\tau_{\alpha_t} } - \Gamma (w_{t-\tau_{\alpha_t} })}}_{T_{31}} \cdot \underbrace{\norm{\E\qty[ H(w_{t-\tau_{\alpha_t} }, Y_t ) - h(w_{t-\tau_{\alpha_t} })|_{Y_{t-\tau_{\alpha_t}}}^{w_{t-\tau_{\alpha_t}}} ]}}_{T_{32}} ].
\end{align*}
We have 
\begin{align}
    T_{32} \leq& \alpha_t (\norm{Xw_{t-\tau_{\alpha_t} }}+1)\qq{(By \eqref{eq:uniform_mixing_in_A1} and \eqref{eq:tau_alpha})}\\
    \leq &\alpha_t (\norm{Xw_{t-\tau_{\alpha_t} } - Xw_t} +\norm{Xw_t} +1) \\
    \leq &\alpha_t (\norm{Xw_{t-\tau_{\alpha_t} } - Xw_t} + C_\tref{lem wt bound}(\norm{w_t-\Gamma(w_t)}+1)+1) \\
    \leq &\alpha_t (C_\tref{bound_wt_wt1}C_x\alpha_{t-\tau_{\alpha_t},t-1}(\norm{w_t-\Gamma(w_t)} +1)+C_\tref{lem wt bound}\norm{w_t-\Gamma(w_t)}+C_\tref{lem wt bound} +1)\\
    \leq& C_{\tref{lem:bound T},4}\alpha_t(\norm{w_t -\Gamma (w_t)}+1).  
\end{align}
Thus, together with \eqref{eq:T21}, 
we obtain
\begin{align*}
    \E \qty[T_3] &\leq C_{\tref{lem:bound T},c}\alpha_t(\norm{w_t-\Gamma(w_t)}^2 + 1),
\end{align*}
  where $C_{\tref{lem:bound T},c} \doteq C_{\tref{lem:bound T},2}C_{\tref{lem:bound T},4}$.
  Finally, denote $C_{\tref{lem:bound T}} \doteq C_{\tref{lem:bound T},a} + C_{\tref{lem:bound T},b} + C_{\tref{lem:bound T},c}$ then completes the proof.
\end{proof}

\subsection{Proof of Lemma~\tref{lem sa recur}}
\label{proof sa recur}
\begin{proof}
    We recall that
    \begin{align*}
        \norm{w_t - \Gamma(w_t)}^2 = 2L(w_t).
    \end{align*}
     Aligning Assumption~\tref{asp:negdrift}, Lemmas~\ref{lem bound H} and \ref{lem:bound T} with \eqref{eq L expand},
    we get 
    \begin{align*}
    &\E\qty[L(w_{t+1})] \\
    \leq &(1-C_\tref{asp:negdrift} \alpha_t)\E\qty[L(w_t)] + C_\tref{lem:bound T} \alpha_t \alpha_{t-\tau_{\alpha_t},t-1} \qty( 2\E\qty[L(w_t)] + 1) + \frac{C_{\tref{lem bound H}}}{2} \alpha_t^2  + C_{\tref{lem bound H}} \alpha_t^2 \E\qty[L(w_t)]\\
    \leq& \qty(1-2C_\tref{asp:negdrift} \alpha_t +2C_\tref{lem:bound T}\alpha_t\alpha_{t-\tau_{\alpha_t} ,t-1} + C_{\tref{lem bound H}}\alpha_t^2)\E\qty[L(w_t)]+ C_\tref{lem:bound T}\alpha_t\alpha_{t-\tau_{\alpha_t},t-1} +  \frac{C_{\tref{lem bound H}}}{2}\alpha_t^2.
    \end{align*}
    
    Furthermore, we aim to derive an upper bound for $\E\qty[L(w_t)]$ that depends on the initial expected loss $\E\qty[L(w_0)]$ and decreases over time. First, let's denote the coefficients as $C_t$ and $D_t$:
    \begin{align*}
        &C_t \doteq 1-2C_\tref{asp:negdrift} \alpha_t +2C_\tref{lem:bound T}\alpha_t\alpha_{t-\tau_{\alpha_t} ,t-1} + C_{\tref{lem bound H}}\alpha_t^2,\\
        &D_t \doteq C_\tref{lem:bound T}\alpha_t\alpha_{t-\tau_{\alpha_t},t-1} + \frac{C_{\tref{lem bound H}}}{2}\alpha_t^2.
    \end{align*}
    For sufficiently large $t_0$ and $t \geq \overline t$ 
    , we obtain $4C_\tref{lem:bound T}\alpha_{t-\tau_{\alpha_t} ,t-1} + C_{\tref{lem bound H}}\alpha_t < C_\tref{asp:negdrift}$.
    Thus, the recursive inequality further becomes:
    \begin{align*}
        \E\qty[L(w_{t+1})] \leq (1 - C_\tref{asp:negdrift} \alpha_t) \E\qty[L(w_t)] + D_t,
    \end{align*}
    where $D_t=\fO(\alpha_t\alpha_{t-\tau_{\alpha_t},t-1})$.
\end{proof}

\subsection{Proof of Theorem~\tref{thm:sa}}
\label{proof:sa}
\begin{proof}
To express $\E\qty[L(w_t)]$ in terms of $\E\qty[L(w_0)]$, we recursively apply the inequality:
\begin{align*}
    \E\qty[L(w_t)] \leq \prod_{i=\overline t}^{t} (1 - C_\tref{asp:negdrift} \alpha_i) \E\qty[L(w_{\overline t})] + \sum_{j=\overline t}^{t} \qty( \prod_{i=j+1}^{t} (1 - C_\tref{asp:negdrift} \alpha_i)) D_j.
\end{align*}
 Denote $E_1 \doteq \prod_{i=\overline t}^{t} (1 - C_\tref{asp:negdrift} \alpha_i) \E[L(w_{\overline t})]$, $E_2 \doteq \sum_{j=\overline t}^{t} \left( \prod_{i=j+1}^{t} (1 - C_\tref{asp:negdrift} \alpha_i) \right) \frac{\ln(j+t_0)}{(j+t_0)^{2\xi}}$, and $\kappa = C_\tref{asp:negdrift} \alpha$. Recall we have $\alpha_t=\frac{\alpha}{(t+t_0)^\xi}$.
For $E_1$, set $t_0 >\kappa= C_\tref{asp:negdrift} \alpha$, we have
\begin{align*}
    \prod_{i=\overline t}^{t} (1 - C_\tref{asp:negdrift} \alpha_i) \E\qty[L(w_{\overline t})] =&\prod_{i=\overline t}^{t} \qty(1 - \frac{C_\tref{asp:negdrift} \alpha}{(i+t_0)^\xi}) \E\qty[L(w_{\overline t})] \\
    \leq& \prod_{i=\overline t}^{t} \qty(1 - \frac{\kappa}{i+t_0}) \E\qty[L(w_{\overline t})] \\
    =& \E\qty[L(w_{\overline t})] \prod_{i=\overline t}^{t} \frac{i+t_0-\kappa}{i+t_0} \\
    \leq& \E\qty[L(w_{\overline t})] \qty(\frac{\overline t+ t_0}{t+t_0-\kappa})^{\lfloor \kappa \rfloor}. 
\end{align*}

For $E_2$, we have
\begin{align*}
    E_2=& \sum_{j={\overline t}}^{t} \qty( \prod_{i=j+1}^{t} \frac{i+t_0-\kappa}{i+t_0}) \frac{\ln(j+t_0)}{(j+t_0)^{2\xi}}\\
    =& \sum_{j={\overline t}}^{t-\lfloor \kappa \rfloor} \qty( \prod_{i=j+1}^{t} \frac{i+t_0-\kappa}{i+t_0}) \frac{\ln(j+t_0)}{(j+t_0)^{2\xi}}+ \sum_{j=t-\lfloor \kappa \rfloor+1}^{t} \qty( \prod_{i=j+1}^{t} \frac{i+t_0-\kappa}{i+t_0}) \frac{\ln(j+t_0)}{(j+t_0)^{2\xi}}\\
    \leq& \sum_{j={\overline t}}^{t-\lfloor \kappa \rfloor} \qty( \frac{j+1+t_0}{t+t_0-\kappa})^{\lfloor \kappa \rfloor} \frac{\ln(j+t_0)}{(j+t_0)^{2\xi}} + \lfloor \kappa \rfloor \frac{\ln(t+t_0)}{(t-\lfloor \kappa \rfloor+1+t_0)^{2\xi}}\\
    \leq& \frac{\ln(t+t_0)}{(t+t_0-\kappa)^{\lfloor \kappa \rfloor}}C_\text{Thm\ref{thm:sa},1}\sum_{j={\overline t}}^{t-\lfloor \kappa \rfloor} (j+t_0)^{\lfloor \kappa \rfloor-2\xi} + \lfloor \kappa \rfloor \frac{\ln(t+t_0)}{(t-\kappa+1+t_0)^{2\xi}}
\end{align*}
\textbf{Case 1: $\lfloor \kappa \rfloor-2\xi>0$}
\begin{align*}
    E_2 \leq& \frac{\ln(t+t_0)}{(t+t_0-\kappa)^{\lfloor \kappa \rfloor}}C_\text{Thm\ref{thm:sa},2}(t-\lfloor \kappa \rfloor+t_0)^{\lfloor \kappa \rfloor-2\xi +1} + \lfloor \kappa \rfloor \frac{\ln(t+t_0)}{(t-\kappa+1+t_0)^{2\xi}}\\
    \leq& \frac{\ln(t+t_0)}{(t+t_0-\kappa)^{2\xi-1}}C_\text{Thm\ref{thm:sa},3} + \lfloor \kappa \rfloor \frac{\ln(t+t_0)}{(t-\kappa+1+t_0)^{2\xi}}\\
    \leq& C_\text{Thm\ref{thm:sa},4}\qty(\frac{\ln(t+t_0)}{(t+t_0)^{2\xi-1}}).
\end{align*}
\textbf{Case 2: $\lfloor \kappa \rfloor-2\xi\leq0$}
\begin{align*}
    E_2 \leq& \frac{\ln(t+t_0)}{(t+t_0-\kappa)^{\lfloor \kappa \rfloor}}C_\text{Thm\ref{thm:sa},1}(t-\lfloor \kappa \rfloor+1) + \lfloor \kappa \rfloor \frac{\ln(t+t_0)}{(t-\kappa+1+t_0)^{2\xi}}\\
    \leq& \frac{\ln(t+t_0)}{(t+t_0-\kappa)^{\lfloor \kappa \rfloor-1}}C_\text{Thm\ref{thm:sa},5} + \lfloor \kappa \rfloor \frac{\ln(t+t_0)}{(t-\kappa+1+t_0)^{2\xi}}\\
    \leq& C_\text{Thm\ref{thm:sa},6}\qty(\frac{\ln(t+t_0)}{(t+t_0)^{\lfloor \kappa \rfloor-1}}).
\end{align*}

Starting from the update of $w_{t+1}$, we have 
\begin{align}
    \norm{w_{t+1}}\leq \norm{w_t} + \alpha_t \norm{H(w_t,Y_{t+1})}
    \leq \norm{w_t} + \alpha_t C_\tref{asp:lipH}(\norm{w_t}+1).
\end{align}
That is, $\norm{w_{t+1}} \leq \alpha_0 C_\tref{asp:lipH} + \sum_{i=0}^t(\alpha_0 C_\tref{asp:lipH}+1)\norm{w_i}$. Applying discrete Gronwall inequality, we obtain
$\norm{w_{\overline{t}}} \leq (C_\tref{asp:lipH} + \norm{w_0}) \exp(\sum_{t=0}^{\overline{t}-1} (1+\alpha_0 C_\tref{asp:lipH})) = (C_\tref{asp:lipH} + \norm{w_0}) \exp(\overline{t}+\overline{t}\alpha_0 C_\tref{asp:lipH})$. 

Denoting $C_\text{Thm\ref{thm:sa},1}\doteq\exp(2\overline{t}+2\overline{t}\alpha_0 C_\tref{asp:lipH})$ and $C_\text{Thm\ref{thm:sa},2}\doteq 2\max(C_\text{Thm\ref{thm:sa},4}, C_\text{Thm\ref{thm:sa},6})$
then completes the proof.
\end{proof}

\section{Proofs in Section~\tref{sec TD rate}}
\label{sec proof td lambda}

\subsection{Proof of Lemma~\tref{assu Lipschitz}}
\label{proof assu Lip}
\begin{proof}
Let $y = (s, a, s', e) \in \fY$ and $C_x \doteq \max_s \norm{x(s)}$.
We have
\begin{align*}
    \norm{H(w,y) - H(w',y)} &= \norm{e(\gamma x(s')^\top - x(s)^\top)(w - w')} \leq 2C_xC_e\norm{w-w'}.
\end{align*}
Furthermore,
\begin{align}
  \sup_{y \in \mathcal{Y}} \norm{H(0, y)} = \sup_{y \in \mathcal{Y}} \norm{ r(s, a)e } \leq \max_{s, a}\abs{r(s, a)} C_e,
\end{align}
which completes the proof.
\end{proof}

\subsection{Proof of Lemma~\tref{lem:td_mix}}
\begin{lemma}
    \label{lem:td_mix}
    There exist a constant $C_{\tref{lem:td_mix}}$ and $\tau \in [0,1)$ such that $\forall w$
    \begin{align}
        \norm{\E[H(w,Y_{t+n})\, | \, Y_t]-h(w)}\leq C_{\tref{lem:td_mix}}\tau^n (\norm{Xw}+1).
    \end{align}
\end{lemma}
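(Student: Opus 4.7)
The plan is to exploit two independent sources of geometric contraction present in $\{Y_t\}$: the eligibility trace forgets its initial condition at rate $\gamma\lambda$, and the underlying state chain $\{S_t\}$ mixes geometrically by Assumption~\tref{assum:markov}. Unrolling $e_t=\gamma\lambda e_{t-1}+x(S_t)$ one obtains, for any $n\ge 1$,
\begin{align}
    e_{t+n} = (\gamma\lambda)^n e_t + \sum_{k=1}^{n}(\gamma\lambda)^{n-k}x(S_{t+k}),
\end{align}
so that, conditionally on $Y_t$, the trace splits into a deterministic piece of magnitude $O((\gamma\lambda)^n)$ and a random piece that depends on the future only through the Markov chain started at $S_{t+1}\in Y_t$.

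Next, I would introduce a two-block split $n=2m$ and a truncated trace $\widehat e \doteq \sum_{k=m+1}^{n}(\gamma\lambda)^{n-k}x(S_{t+k})$, together with a truncated update $\widehat H$ obtained by replacing $e_{t+n}$ in $H(w,Y_{t+n})$ by $\widehat e$. Using $\norm{x(s)}\le C_x$, the uniform bound $\abs{r(s,a)}\le C_r$, and the linear-in-$\norm{Xw}$ control already exploited in Lemma~\tref{lem H h linear growth}, a direct estimate gives
\begin{align}
    \norm{H(w,Y_{t+n})-\widehat H}\le C_1(\gamma\lambda)^m(\norm{Xw}+1),
\end{align}
and the same bound (now under the stationary law $d_\fY$) yields $\norm{h(w)-\E_{d_\fY}[\widehat H]}\le C_1(\gamma\lambda)^m(\norm{Xw}+1)$. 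Consequently it suffices to estimate $\norm{\E[\widehat H\mid Y_t]-\E_{d_\fY}[\widehat H]}$.

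For this last piece I would invoke the uniform geometric ergodicity of $\{S_t\}$: Assumption~\tref{assum:markov} (irreducibility, aperiodicity, finite state space) implies $\sup_s\norm{P_\pi^m(s,\cdot)-d_\pi}_\TV\le C_2\rho^m$ for some $\rho\in[0,1)$. Since $\widehat H$ is a measurable function only of the block $(S_{t+m+1},A_{t+m+1},\dots,S_{t+n+1})$, while $Y_t$ influences this block only through $S_{t+1}$, an $m$-step coupling between the $Y_t$-conditioned chain and the stationary chain gives
\begin{align}
    \norm{\E[\widehat H\mid Y_t]-\E_{d_\fY}[\widehat H]}\le 2\rho^m\, \esssup\norm{\widehat H}\le C_3\rho^m(\norm{Xw}+1),
\end{align}
where the essential-sup bound follows from $\norm{\widehat e}\le C_x/(1-\gamma\lambda)$ and the linear growth of the bracketed scalar factor in $H$ in terms of $\norm{Xw}+1$.

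Finally I would combine the three pieces and set $\tau\doteq\max\{\sqrt{\gamma\lambda},\sqrt{\rho}\}\in[0,1)$ with $n=2m$ (handling odd $n$ by absorbing one extra factor of $\tau$ into the constant) to obtain the desired $C_\tref{lem:td_mix}\tau^n(\norm{Xw}+1)$ bound. The main subtlety, and the reason a direct one-step coupling on $\{Y_t\}$ does not work, is that $Y_t$ lives on a non-compact augmented space and retains nonzero memory of the entire past through $e_t$, which a pure state-chain mixing argument cannot kill on its own; the block-split cleanly separates the old contribution (contracted by $(\gamma\lambda)^m$) from the new contribution (contracted by $\rho^m$), which is precisely the device behind Lemma~6.7 of \citet{bertsekas1996neuro}.
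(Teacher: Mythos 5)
Your proof is correct, and it rests on the same two pillars as the paper's: the eligibility trace forgets the pre-$t$ past at rate $\gamma\lambda$, and the state chain mixes geometrically under Assumption~\tref{assum:markov}. The execution differs, though. The paper (following Lemma~6.7 of \citet{bertsekas1996neuro}) compares the conditioned chain with a two-sided stationary chain and splits the error into three pieces: $f_0(n)$, which is identified with $Aw+b=h(w)$; $f_2(n)$, the infinite-past tail of the stationary trace, bounded by $(\gamma\lambda)^n$; and $f_1(n)$, a lag-by-lag total-variation comparison summed against the trace weights, giving $\sum_{k=0}^n(\gamma\lambda)^{n-k}\tau_1^k\le (n+1)\max\{\gamma\lambda,\tau_1\}^n$, whose polynomial prefactor must then be absorbed into a slightly larger geometric base. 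Your midpoint block split replaces the lag-by-lag comparison with a single $m$-step coupling applied to the truncated functional $\widehat H$, which is cleaner in two respects: it avoids the polynomial prefactor entirely, and it does not require the explicit identification of the stationary two-sided expectation with $Aw+b$, since you compare directly to $h(w)=\E_{d_\fY}[H(w,\cdot)]$ (modulo the $O((\gamma\lambda)^m)$ cost of truncating the stationary trace, which you account for). The price is a worse advertised base, $\tau=\max\{\sqrt{\gamma\lambda},\sqrt{\rho}\}$ versus the paper's (essentially) $\max\{\gamma\lambda,\tau_1\}$, though this is immaterial for the lemma as stated. Two cosmetic points: $Y_t=(S_{t-1},A_{t-1},S_t,e_{t-1})$, so the conditioning passes through $S_t$ rather than $S_{t+1}$, and the trace appearing in $H(w,Y_{t+n})$ is $e_{t+n-1}$; neither affects the argument.
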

\begin{proof}
    Given the Markov property, we only need to prove the case of $t=1$.
    Recall that we use $y = (s, a, s', e)$.
    Define shorthand
    \begin{align}
        \delta((s, a, s'), w) \doteq& r(s, a) + \gamma x(s')^\top w - x(s)^\top w, \\
        \delta_{n+1}(w) \doteq& \delta((S_n, A_n, S_{n+1}), w).
    \end{align} 
By \eqref{eq:H def}, we can get
\begin{align}
    H(w, Y_{n+1}) =& \delta_{n+1}(w)e_n.
\end{align}
By expanding $e_n$, 
we get
\begin{align}
    &\E[H(w,Y_{n+1})\, | \, Y_1] \\
    =&\E\qty[\delta_{n+1}(w) e_{n} \, | \, Y_1] \\
    =& \E\qty[\delta_{n+1}(w)\sum_{k=0}^{n}(\gamma \lambda)^{n-k} x(S_k)\, | \, S_0].
\end{align}
Now define a two-sided Markov chain $\qty{\bar S_t, \bar A_t}_{t=\dots, -2, -1, 0, 1, 2, \dots}$ such that $\Pr(\bar S_t = s) = d_\pi(s), \Pr(\bar A_t = a | \bar S_t = s) = \pi(a|s)$, i.e., the new chain always stay in the stationary distribution of the original chain.
Similarly,
define
\begin{align}
    \bar \delta_{n+1}(w) \doteq \delta((\bar S_n, \bar A_n, \bar S_{n+1}), w).
\end{align}
We then have 
\begin{align}
    &\E\qty[\delta_{n+1}(w)\sum_{k=0}^{n}(\gamma \lambda)^{n-k} x(S_k)\, | \, S_0] \\
    =& \underbrace{\E\qty[\bar \delta_{n+1}(w)\sum_{k=-\infty}^{n}(\gamma \lambda)^{n-k} x(\bar S_k)]}_{f_0(n)} \\
    &+ \underbrace{\E\qty[\delta_{n+1}(w)\sum_{k=0}^{n}(\gamma \lambda)^{n-k} x(S_k)\, | \, S_0] - \E\qty[\bar \delta_{n+1}(w)\sum_{k=0}^{n}(\gamma \lambda)^{n-k} x(\bar S_k)]}_{f_1(n)} \\
    &- \underbrace{\E\qty[\bar \delta_{n+1}(w)\sum_{k=-\infty}^{-1}(\gamma \lambda)^{n-k} x(\bar S_k)]}_{f_2(n)}.
\end{align}
In the proof of Lemma 6.7 of \citet{bertsekas1996neuro},
it is proved that
\begin{align}
    f_0(n) = Aw + b,
\end{align}
which coincides with $h(w)$.
Thus the rest of the proof is dedicated to proving that $f_1(n)$ and $f_2(n)$ decay geometrically.
For $f_2(n)$, 
we have $\norm{\bar \delta_{n+1}(w) x(\bar S_k)} \leq C_{\tref{lem:td_mix},1}(\norm{Xw}+1)$ for some $C_{\tref{lem:td_mix},1}$ (cf.~\eqref{eq bound h with xw}).
We then have
\begin{align}
    \norm{f_2(n)} \leq& C_{\tref{lem:td_mix},1}(\norm{Xw}+1)\sum_{k=-\infty}^{-1} (\gamma\lambda)^{n-k} \\
    =& C_{\tref{lem:td_mix},1}(\norm{Xw}+1) (\gamma\lambda)^n \sum_{k=1}^\infty (\gamma\lambda)^k.
\end{align}
For $f_1(n)$,
since $\qty{S_t}$ adopts geometric mixing,
there exists some $\tau_1 \in [0, 1)$ and $C_{\tref{lem:td_mix},2} > 0$ such that
\begin{align}
    \sum_{s}\abs{\Pr(S_k = s) - \Pr(\bar S_k = s)} \leq C_{\tref{lem:td_mix},2}\tau_1^k.
\end{align}
Then we have
\begin{align}
    &\E\qty[\delta_{n+1}(w) x(S_k) | S_0] - \E\qty[\bar \delta_{n+1}(w) x(\bar S_k)] \\
    =& \sum_s \Pr(S_k = s | S_0) x(S_k) \E\qty[\delta_{n+1}(w) | S_k = s] - \sum_s d_\pi(s) x(\bar S_k) \E\qty[\bar \delta_{n+1}(w) | \bar S_k = s].
\end{align}
Noticing that $\E\qty[\delta_{n+1}(w) | S_k = s] = \E\qty[\bar \delta_{n+1}(w) | \bar S_k = s]$ due to the Markov property,
we obtain
\begin{align}
    \norm{\E\qty[\delta_{n+1}(w) x(S_k) | S_0] - \E\qty[\bar \delta_{n+1}(w) x(\bar S_k)]} \leq C_{\tref{lem:td_mix},2}\tau_1^k C_{\tref{lem:td_mix},1}(\norm{Xw}+1).
\end{align}
This means 
\begin{align}
    \norm{f_2(n)} \leq C_{\tref{lem:td_mix},2} C_{\tref{lem:td_mix},1}(\norm{Xw}+1) \sum_{k=0}^n (\gamma \lambda)^{n-k} \tau_1^k.
\end{align}
Noticing that 
\begin{align}
    \sum_{k=0}^n (\gamma \lambda)^{n-k} \tau_1^k \leq n \max\qty{\gamma\lambda, \tau_1}^n
\end{align}
then completes the proof.
\end{proof}

\subsection{Proof of Lemma~\ref{prop:a}}
\label{proof:a}
\begin{proof}
We start with proving $\forall w \in \ker(A)^\perp, w^\top A w \leq -C_\tref{prop:a}\norm{w}^2$.
This is apparently true if $w = \0$.
Now fix any $w \in \ker(A)^\perp$ and $w \neq \0$,
which implies that $Aw \neq \0$.
Now we prove by contradiction that $w^\top A w \neq 0$.
Otherwise,
if $w^\top A w = 0$,
we have $w^\top X^\top D_\pi(\gamma P_\lambda - I) Xw = 0$.
Since $D_\pi (\gamma P_\lambda - I)$ is n.d.,
we then get $Xw = \0$,
further implying $Aw = \0$,
which is a contradiction.
We have now proved that $w^\top Aw \neq 0$.
We next prove that $w^\top A w < 0$.
This is from the fact that $A$ is n.d., i.e., for $\forall z \in \R[d], z^\top A z \leq 0$.
But $w^\top A w \neq 0$.
So we must have $w^\top A w < 0$.
Finally, we use an extreme theorem argument to complete the proof.
Define $Z \doteq \qty{w | w \in \ker(A)^\perp, \norm{w} = 1}$.
Because $z \in Z$ implies $z \in \ker(A)^\perp$ and $z \neq 0$,
we have $\forall z \in Z, z^\top A z < 0$.
Since $Z$ is clearly compact,
the extreme value theorem confirms that the function $z \mapsto z^\top A z$ obtains its minimum value in $Z$,
denoted as $-C_\tref{prop:a} < 0$, i.e.,
we have 
\begin{align}
    \label{eq evt trick}
    \forall z \in Z, z^\top A z \leq -C_\tref{prop:a}.
\end{align}
For any $w \in \ker(A)^\perp$ and $w \neq \0$,
we have $\frac{w}{\norm{w}} \in Z$,
so $w^\top A w \leq -C_\tref{prop:a} \norm{w}^2$, which completes the proof of the first part.

We now prove that $\forall w \in \R[d], w - \Gamma(w) \in \ker(A)^\perp$.
We recall that $\Gamma$ is the orthogonal projection to $W_* = \qty{w \mid Aw + b = 0}$.
Since $\Gamma$ is the orthogonal projection to $W_*$,
we know $w - \Gamma(w) \in W_*^{\perp}$.
Fix any $w_* \in W_*$ and
let $z \in \ker(A)$,
we then have $A(w_* + z) + b = \0$ so $w_* + z \in W_*$.
We then have
\begin{align}
    \inner{w - \Gamma(w)}{z} = \inner{w - \Gamma(w)}{w_* + z} - \inner{w - \Gamma(w)}{w_*} = 0 - 0 = 0,
\end{align}
confirming that $w - \Gamma(w) \in \ker(A)^\perp$,
which completes the proof.

\end{proof}
\subsection{Proof of Lemma~\tref{lem H h linear growth}}
\label{proof H h linear growth}
\begin{proof}
    Let $y = (s, a, s', e) \in \fY$,
    since $\abs{x(s)^\top w} \leq \max_{s \in S} \abs{x(s)^\top w} \leq \norm{Xw}$,
    according to \eqref{eq:H def}, we have
    \begin{align}
        \label{eq bound h with xw}
        \norm{H(w, y)} &= \norm{e\qty(r(s,a) + \gamma x(s')^\top w - x(s)^\top w)}\\
        &\leq C_e(\abs{r(s,a)} + \gamma\abs{x(s')^\top w} + \abs{x(s)^\top w})\\
        &\leq C_e(C_R + (\gamma+1)\norm{X w})\\
        &\leq C_{\tref{lem H h linear growth}}(\norm{X w}+1),
    \end{align}
    where $C_{\tref{lem H h linear growth}}\doteq C_e(C_R + \gamma+1)$.
    For $\norm{h(w)}$, we have
    \begin{align}
        \norm{h(w)} = \norm{\E_{y\sim d_\mathcal{Y}}\qty[H(w, y)]} \leq  E_{y\sim d_\mathcal{Y}}[\norm{H(w, y)}] \leq C_{\tref{lem H h linear growth}}(\norm{X w}+1),
    \end{align}
    which completes the proof.
\end{proof}

\section{Proofs in Section~\ref{sec:ar}}
\label{sec proof tdar}

\subsection{Proof of Lemma~\tref{lem:9}}
\label{proof:9}
\begin{proof}
The update to $\qty{\hat J_t}$ in~\eqref{eq:artd} is 
\begin{align}
    \hat{J}_{t+1} = \hat{J}_t + \alpha_t \left( c_\beta R_{t+1} - c_\beta \hat{J}_t \right).
\end{align}
This matches the first row of 
\begin{align}
    \widetilde{A}(Y_t) \widetilde{w}_t + \widetilde{b}(Y_t) = \begin{bmatrix} -c_\beta & 0 \\ -\Pi  e_t & \Pi  e_t (x(S_{t+1})^\top - x(S_t)^\top) \end{bmatrix} \begin{bmatrix} \hat{J}_t \\ \Pi  w_t \end{bmatrix} + \begin{bmatrix} c_\beta R_{t+1} \\ R_{t+1} \Pi  e_t \end{bmatrix}.
\end{align}
Now consider the update for $w_t$
\begin{align}
    w_{t+1} = w_t + \alpha_t \left( R_{t+1} - \hat{J}_t + x(S_{t+1})^\top w_t - x(S_t)^\top w_t \right) e_t.
\end{align}
Applying the projection matrix $\Pi$ on both sides yields
\begin{align}
    \Pi w_{t+1} - \Pi  w_t=& \alpha_t \Pi  \left( \left( R_{t+1} - \hat{J}_t + x(S_{t+1})^\top w_t - x(S_t)^\top w_t \right) e_t \right)\\
    =& \left( R_{t+1} - \hat{J}_t + x(S_{t+1})^\top w_t - x(S_t)^\top w_t \right) \Pi  e_t\\
    =& \left( R_{t+1} - \hat{J}_t + x(S_{t+1})^\top \Pi w_t - x(S_t)^\top \Pi w_t \right) \Pi  e_t.
\end{align}
To see the last equality,
we recall Lemma~\ref{lem:z1z2} and
recall $\Pi = X_1^\dagger X_1$.
We then have
\begin{align}
    X \Pi w =& X_1 \Pi w + \1\theta^\top \Pi w \\
    =& X_1 w + \1\theta^\top \Pi w.
\end{align}
This means that
\begin{align}
    x(s')^\top \Pi w - x(s)^\top \Pi w = x_1(s')^\top w - x_1(s)^\top w,
\end{align}
where we use $x_1(s)$ to denote the $s$-th row of $X_1$.
We also have
\begin{align}
    x(s')^\top w - x(s)^\top w =& (x_1(s') + \theta)^\top w - (x(s) + \theta)^\top w \\
    =&x_1(s')^\top w - x_1(s)^\top w,
\end{align}
which confirms the last equality and then completes the proof.
\end{proof}

\subsection{Proof of Lemma~\tref{lem:gamma ar}}
\begin{lemma}
\label{lem:gamma ar}
    $\widetilde A \Gamma(\tw) + \widetilde b = \0$
\end{lemma}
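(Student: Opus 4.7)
The plan is to verify the identity by choosing an explicit representative of the projection and then reducing the second block to the known equation $\overline{A} w_* + \overline{b} = \mathbf{0}$ on $\overline{W}_*$.

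First I would unpack $\Gamma(\widetilde{w})$. By the definition of $\widetilde{W}_*$, any element of $\widetilde{W}_*$ has the form $\bigl[J_\pi;\, \Pi w_*\bigr]$ for some $w_* \in \overline{W}_*$. Since $\Gamma(\widetilde{w}) \in \widetilde{W}_*$, I can therefore write $\Gamma(\widetilde{w}) = \bigl[J_\pi;\, \Pi w_*\bigr]$ for a particular $w_* \in \overline{W}_*$, and the claim reduces to showing that $\widetilde{A} \bigl[J_\pi;\, \Pi w_*\bigr] + \widetilde{b} = \mathbf{0}$.

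Next I would compute $\widetilde{A} = \E_{y \sim d_{\mathcal{Y}}}[\widetilde{A}(y)]$ and $\widetilde{b} = \E_{y \sim d_{\mathcal{Y}}}[\widetilde{b}(y)]$ block by block using \eqref{eq:def Ab ar}. The first block is immediate: the top row of $\widetilde{A}\Gamma(\widetilde{w}) + \widetilde{b}$ equals $-c_\beta J_\pi + c_\beta \E[r(S_t,A_t)] = 0$ since $\E[r(S_t,A_t)] = J_\pi$ in the stationary regime. For the bottom block, factoring out $\Pi$ gives
\begin{align*}
\Pi \cdot \E\bigl[ (r(S_t,A_t) - J_\pi + (x(S_{t+1})^\top - x(S_t)^\top)\Pi w_*)\, e_t \bigr].
\end{align*}

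The main obstacle, though a minor one, is handling the interaction between $\Pi$ and the features inside the expectation so that the bracketed quantity matches the average-reward TD expected update. The key observation is the identity $X_1 \Pi = X_1 X_1^{\dagger} X_1 = X_1$, which combined with the decomposition $X = X_1 + \mathbf{1}\theta^\top$ (Lemma~\tref{lem:z1z2}) yields
\begin{align*}
(x(s')^\top - x(s)^\top)\Pi w_* = (x_1(s')^\top - x_1(s)^\top)\Pi w_* = (x_1(s')^\top - x_1(s)^\top) w_* = (x(s')^\top - x(s)^\top) w_*,
\end{align*}
where the first and last equalities use that the $\theta$ component cancels in the difference. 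With this substitution the bracketed expectation becomes exactly the expected average-reward TD increment, whose stationary expectation equals $\overline{A} w_* + \overline{b}$. Because $w_* \in \overline{W}_*$ this vanishes, and applying $\Pi$ to $\mathbf{0}$ gives $\mathbf{0}$, completing the proof.
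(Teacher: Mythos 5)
Your proposal is correct and follows essentially the same route as the paper: write $\Gamma(\tw) = [J_\pi;\, \Pi w_*]$ with $w_* \in \overline W_*$, kill the first block via $\E[r] = J_\pi$, and reduce the second block to $\Pi(\overline A w_* + \overline b) = \0$. The only cosmetic difference is that you absorb the $\Pi$ via the pointwise identity $(x(s')^\top - x(s)^\top)\Pi w_* = (x(s')^\top - x(s)^\top)w_*$ (the same fact, via $X_1\Pi = X_1$, that the paper uses in Lemma~\tref{lem:9}), whereas the paper argues at the matrix level through $\overline A \Pi w = \overline A w$ using Lemma~\tref{lem:A bar}.
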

\begin{proof}
    According to the definition of $\Gamma(\tw)$, $\Gamma(\tw) \in \widetilde W_* \doteq \left\{\begin{bmatrix} J_\pi \\ \Pi  w \end{bmatrix}\middle| w \in \overline W_*\right\}$.
    We have
    \begin{align}
    \label{eq:expA}
        \notag &\widetilde{A} = \E_{y\sim d_\mathcal{Y}} \qty[\pA(y)]
        = \E_{(s, a, s', e)\sim d_\mathcal{Y}} \begin{bmatrix}
        -c_\beta & \0\\
        -\Pi e & \Pi \qty(e(x(s')^\top - x(s)^\top)) \end{bmatrix}
        =\begin{bmatrix} -c_\beta & \0 \\ -\Pi \E_{d_\fY}[e] & \Pi \overline A \end{bmatrix}, \\
        &\widetilde b = \E_{y \sim d_\fY} \qty[\pB(y)]
        = \E_{(s, a, s', e)\sim d_\mathcal{Y}} \begin{bmatrix}
            c_\beta r(s, a) \\
            r(s, a) \Pi e \end{bmatrix}
        =\begin{bmatrix}
            c_\beta J_\pi \\
            \Pi \E_{d_\fY}[e]J_\pi + \Pi \overline b \end{bmatrix}.
    \end{align}
    Therefore, for the first row of $\widetilde A \Gamma(\tw) + \widetilde b$ , we get $c_\beta (J_\pi-J_\pi) = 0$. 
    For the second row, we can get 
    \begin{align}
        &-\Pi \E_{d_\fY}[e]J_\pi + \Pi \overline A \Pi w + \Pi \E_{d_\fY}[e]J_\pi + \Pi \overline b \\
        =&\Pi(\overline A \Pi w+\overline b) \\
        =& \Pi (X_1^\top D_\pi(P_\lambda-I)X_1 \Pi w + \overline b)\\
        =& \Pi (X_1^\top D_\pi(P_\lambda-I)X_1 w + \overline b)\\
        =& \Pi (\overline A w+\overline b)\\
        =&\0,
    \end{align}
    where the second equality comes with the definition of $\Pi$. 
    This completes the proof.
\end{proof}

\subsection{Proof of Lemma~\ref{lem negdef pA}}
\label{proof:negdef pA}
\begin{proof}
    If $z = 0$, the lemma trivially holds. So now let 
Let $z = \begin{bmatrix} z_1 \\ z_2 \end{bmatrix} \in \mathbb{R} \times \ker(X_1)^\perp$, $z \neq 0$. 
With \eqref{eq:expA}, we have
\begin{align}
    \widetilde{A} 
    =\begin{bmatrix} -c_\beta & \0 \\ -\Pi \E_{(s, a, s', e) \sim d_\fY}[e] & \Pi \overline A \end{bmatrix}
    =\begin{bmatrix} -c_\beta & \0 \\ -\Pi \E_{d_\fY}[e] & \Pi X_1^\top D_\pi(P_\lambda - I)X_1 \end{bmatrix} \qq{(Lemma~\ref{lem:A bar})}.
\end{align}
For simplicity, define $q \doteq \E_{d_\fY}[e], B \doteq X_1^\top D_\pi(P_\lambda - I)X_1$.
We then have
\begin{align}
    z^\top \widetilde{A} z = \begin{bmatrix} z_1 & z_2^\top \end{bmatrix} \begin{bmatrix} -c_\beta z_1 \\ \Pi  (-q z_1 + B z_2) \end{bmatrix} = -c_\beta z_1^2 + z_2^\top \Pi  (-q z_1 + B z_2).
\end{align}
Recall that $\Pi = X_1^\dagger X_1$ and it is symmetric, we can get
\begin{align}
    z_2^\top \Pi  (-q z_1 + B z_2) = (\Pi z_2)^\top (-q z_1 + B z_2) = z_2^\top (-q z_1 + B z_2),
\end{align}
where the last equality holds because $z_2\in \ker(X_1^\perp)$.
Thus,
    \begin{equation}
        z^\top \widetilde{A} z = -c_\beta z_1^2 - z_2^\top q z_1 + z_2^\top B z_2.
    \end{equation}
We now characterize $z_2^\top B z_2$.
Apparently, $z_2^\top B z_2 \leq 0$ always holds because $D_\pi(P_\lambda - I)$ is n.s.d.
In view of~\eqref{eq nsd solution},
the equality holds only if $X_1 z_2 = c\1$.
But $\1 \notin \col(X_1)$ and $z_2 \in \ker(X_1)^\perp$.
So the equality holds only when $z_2 = 0$.
Now we have proved that $\forall z_2 \in \ker(X_1)^\perp, z_2 \neq 0$, it holds that $z_2^\top B z_2 < 0$.
Using the normalization trick and the extreme value theorem again (cf.~\eqref{eq evt trick}),
we confirm that there exists some constant $C_{\tref{lem negdef pA}, 1} > 0$ such that $\forall z_2 \in \ker(X_1)^\perp$,
\begin{align}
    z_2^\top B z_2 \leq - C_{\tref{lem negdef pA}, 1} \norm{z_2}^2.
\end{align}

Since $z \neq 0$, we now discuss two cases. \\
\tb{Case 1: $z_1 = 0, z_2 \neq 0$.} In this case, we have $z^\top \widetilde A z = z_2^\top B z_2 < 0$. \\
\tb{Case 2: $z_1 \neq 0$.}
In this case, we have
\begin{align}
    z^\top \widetilde A z = -c_\beta z_1^2 + z_1 z_2^\top q + z_2^\top B z_2
    \leq -c_\beta z_1^2 + \abs{z_1} \norm{z_2} \norm{q} - C_{\tref{lem negdef pA}, 1} \norm{z_2}^2.
\end{align}
By completing squares,
it is easy to see that when $c_\beta$ is sufficiently large (depending on $\norm{q}$ and $C_{\tref{lem negdef pA}, 1}$),
it holds $z^\top \widetilde A z < 0$ because $z_1 \neq 0$.

Combining both cases,
we have proved that $\forall z \in \R \times \ker(X_1)^\perp, z \neq \0$,
it holds that
\begin{align}
    z^\top \widetilde A z < 0.
\end{align}
Using the normalization trick and the extreme value theorem again (cf.~\eqref{eq evt trick}) then completes the proof.
\end{proof}

\subsection{Proof of Lemma~\tref{lem gamma bound ar}}
\label{proof gamma bound ar}
\begin{proof}
By definition, $\widetilde W_* = \left\{ \begin{bmatrix}
    J_\pi \\ \Pi  w
\end{bmatrix} \bigg| w \in \overline W_* \right\}$. 
In view of Lemma~\ref{lem:fix_points},
let $\overline w_*$ be any fixed vector in $\overline W_*$.
Then any $\tw_* \in \widetilde W_*$ can be written as
\begin{align}
    \tw_* = \mqty[J_\pi \\ \Pi(\overline w_* + w_\0)]
\end{align}
with some $w_\0 \in \ker(X_1)$.
We then have
\begin{align}
    \widetilde X \tw_* = \mqty[J_\pi \\ X\Pi(\overline w_* + w_\0)] = \mqty[J_\pi \\ X\Pi\overline w_*],
\end{align}
where the last equality holds because $\Pi$ is the orthogonal projection to $\ker(X_1)^\perp$.
This means that $\widetilde X \tw_*$ is a constant regardless of $\tw_*$,
which completes the proof.
\end{proof}

\subsection{Proof of Lemma~\ref{lem:dd}}
\begin{lemma}
    \label{lem:dd}
        $(\hat{J_t}-J_\pi)^2+d(w_t, \overline W_*)^2 = d(\tw_t, \widetilde W_*)^2$.
    \end{lemma}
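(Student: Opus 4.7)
\textbf{Proof proposal for Lemma~\ref{lem:dd}.} The plan is to expand $d(\tw_t, \widetilde W_*)^2$ according to the block structure of $\tw$ and the characterization $\overline W_* = \{\overline w_*\} + \ker(X_1)$ from Lemma~\ref{lem:fix_points}, then use two orthogonality observations to peel off the two pieces $(\hat J_t - J_\pi)^2$ and $d(w_t, \overline W_*)^2$ separately.

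First, by definition of $\tw_t = \begin{bmatrix} \hat J_t \\ \Pi w_t \end{bmatrix}$ and of $\widetilde W_*$, any $\tw_* \in \widetilde W_*$ has the form $\tw_* = \begin{bmatrix} J_\pi \\ \Pi w \end{bmatrix}$ for some $w \in \overline W_*$. The two blocks live in orthogonal subspaces of $\R \times \R[d]$, so
\begin{align*}
    \norm{\tw_t - \tw_*}^2 = (\hat J_t - J_\pi)^2 + \norm{\Pi w_t - \Pi w}^2.
\end{align*}
Taking the infimum over $\tw_* \in \widetilde W_*$ (equivalently, over $w \in \overline W_*$) gives
\begin{align*}
    d(\tw_t, \widetilde W_*)^2 = (\hat J_t - J_\pi)^2 + \inf_{w \in \overline W_*} \norm{\Pi w_t - \Pi w}^2.
\end{align*}

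Next, I would use Lemma~\ref{lem:fix_points} to write every $w \in \overline W_*$ as $w = \overline w_* + k$ with $k \in \ker(X_1)$. Since $\Pi = X_1^\dagger X_1$ is the orthogonal projection onto $\ker(X_1)^\perp$, $\Pi k = 0$, so $\Pi w = \Pi \overline w_*$ for every $w \in \overline W_*$. Hence the infimum in the previous display equals $\norm{\Pi w_t - \Pi \overline w_*}^2$, independent of which $\overline w_* \in \overline W_*$ is chosen.

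It therefore remains to show $d(w_t, \overline W_*)^2 = \norm{\Pi w_t - \Pi \overline w_*}^2$. Decompose both $w_t$ and $\overline w_*$ into their $\ker(X_1)^\perp$ and $\ker(X_1)$ components using $\Pi$ and $I - \Pi$. For any $w = \overline w_* + k \in \overline W_*$, orthogonality of the two subspaces gives
\begin{align*}
    \norm{w_t - w}^2 = \norm{\Pi w_t - \Pi \overline w_*}^2 + \norm{(I-\Pi)w_t - (I-\Pi)\overline w_* - k}^2.
\end{align*}
The first term is fixed; the second is minimized to zero by choosing $k = (I-\Pi)w_t - (I-\Pi)\overline w_*$, which lies in $\ker(X_1)$. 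Thus $d(w_t, \overline W_*)^2 = \norm{\Pi w_t - \Pi \overline w_*}^2$, and combining with the expansion above completes the proof.

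There is really no obstacle here: the argument is entirely linear-algebraic bookkeeping once one has the decomposition $\overline W_* = \{\overline w_*\} + \ker(X_1)$ and the fact that $\Pi$ is the orthogonal projection onto $\ker(X_1)^\perp$. The only mildly subtle point to emphasize is that $\Pi w$ is constant on $\overline W_*$ (so the two displayed infima compute the same quantity), and that the Pythagorean split requires decomposing with respect to the pair $(\ker(X_1)^\perp, \ker(X_1))$ rather than anything involving $X$ directly.
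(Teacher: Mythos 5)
Your proposal is correct and follows essentially the same route as the paper's proof: both split $d(\tw_t,\widetilde W_*)^2$ block-wise, use the characterization $\overline W_* = \{\overline w_*\} + \ker(X_1)$ from Lemma~\ref{lem:fix_points}, and apply the Pythagorean decomposition with respect to $\ker(X_1)^\perp$ and $\ker(X_1)$ to identify both infima with $\norm{\Pi w_t - \Pi\overline w_*}^2$. The only cosmetic difference is that the paper routes through the intermediate quantity $d(\Pi w, \Pi\overline W_*)$, whereas you observe directly that $\Pi w$ is constant on $\overline W_*$.
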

    \begin{proof}
        We recall that $\Pi$ is the orthogonal projection to $\ker(X_1)^\perp$.
        Let $\Pi'$ be the orthogonal projection to $\ker(X_1)$. 
        We recall from Lemma~\ref{lem:fix_points} that $\overline W_* = \qty{\overline w_*} + \ker(X_1)$ with $\overline w_*$ being any fixed point in $\overline W_*$.
        Thus for any $w_* \in \overline W_*$,
        we can write it as $\overline w_* + w_\0$ with some $w_\0 \in \ker(X_1)$.
        Then for any $w \in \R[d]$,
        we have
        \begin{align}
            d(w, \overline W_*)^2 =& \inf_{w_* \in \overline W_*} \norm{w - w_*}^2 \\
            =& \inf_{w_\0 \in \ker(X_1)} \norm{w - \overline w_* - w_\0}^2 \\
            =& \inf_{w_\0 \in \ker(X_1)} \norm{\Pi w + \Pi' w - \Pi \overline w_* - \Pi' \overline w_* - w_\0}^2 \\
            =& \inf_{w_\0 \in \ker(X_1)} \norm{\Pi w - \Pi \overline w_*}^2 + \norm{\Pi' w  - \Pi' \overline w_* - w_\0}^2 \\
            =& \norm{\Pi w - \Pi \overline w_*}^2,
        \end{align}
        where the last equality holds because we can select $w_\0 = \Pi' w  - \Pi' \overline w_*$.
        Define $\Pi \overline W_* \doteq \qty{\Pi w | w \in \overline W_*}$.
        Then we have
        \begin{align}
            d(\Pi w, \Pi \overline W_*) =& \inf_{w_* \in \overline W_*} \norm{\Pi w - \Pi w_*} \\
            =& \inf_{w_\0 \in \ker(X_1)} \norm{\Pi w - \Pi(\overline w_* + w_\0) } \\
            =& \norm{\Pi w - \Pi\overline w_*},
        \end{align}
        where the last equality holds because $w_\0 \in \ker(X_1)$ and $\Pi$ is the projection to $\ker(X_1)^\perp$ so $\Pi w_\0 = 0$.
        We now have $\forall w, d(w, \overline W_*) = d(\Pi w, \Pi\overline W_*)$.
        Then we have 
        \begin{align}
            &d(\widetilde w_t, \widetilde W_*)^2 \\
            =& (\hat{J_t}-J_\pi)^2 + d(\tw_t, \Pi \overline W_*)^2 \\ 
            =& (\hat{J_t}-J_\pi)^2 + d(\Pi w_t, \Pi \overline W_*)^2 \\ 
            =& (\hat{J_t}-J_\pi)^2 + d(w_t, \overline W_*)^2,
        \end{align}
        which completes the proof.
    \end{proof}

\section{Details of Experiments}
\label{sec:detail experiments}
We use a variant of Boyan's chain \citep{boyan1999least} with 15 states ($s_0, s_1, \dots, s_{14}$) and 5 actions ($a_0, \dots, a_4$). 
The chain has deterministic transitions.
For $s_2, \dots, s_{14}$, 
the action $a_0$ goes to $s_{i-1}$ and the actions $a_1$ to $a_4$ go to $s_{i-2}$; $s_1$ always transitions to $s_0$; $s_0$ transitions uniformly randomly to any state. 
The reward function is
\begin{align}
    r(s, a) = \begin{cases}
        1 \qq{if} s=s_0 \\
        0 \qq{otherwise}
    \end{cases}.
\end{align}
We use a uniform random policy $\pi(a|s) = 0.5$.
The feature matrix $X \in \mathbb{R}^{15 \times 5}$ is designed to be of rank 3.
\begin{align*}
X = 
\begin{bmatrix}
0.07 & 0.11 & 0.18 & 0.14 & 0.61 \\
0.13 & 0.19 & 0.32 & 0.26 & 0.45 \\
0.11 & 0.17 & 0.28 & 0.22 & 0.39 \\
0.24 & 0.36 & 0.60 & 0.48 & 0.84 \\
0.18 & 0.28 & 0.46 & 0.36 & 1.00 \\
0.20 & 0.30 & 0.50 & 0.40 & 1.06 \\
0.31 & 0.47 & 0.78 & 0.62 & 1.45 \\
0.29 & 0.45 & 0.74 & 0.58 & 1.39 \\
0.42 & 0.64 & 1.06 & 0.84 & 1.84 \\
0.40 & 0.62 & 1.02 & 0.80 & 1.78 \\
0.47 & 0.73 & 1.20 & 0.94 & 2.39 \\
0.53 & 0.81 & 1.34 & 1.06 & 2.23 \\
0.58 & 0.9 & 1.48 & 1.16 & 2.78 \\
0.60 & 0.92 & 1.52 & 1.20 & 2.84 \\
0.67 & 1.03 & 1.70 & 1.34 & 3.45
\end{bmatrix}
\end{align*}
Each experiment runs for $1.5\times10^6$ steps, averaged over $10$ runs. These experiments were conducted on a server equipped with an AMD EPYC 9534 64-Core Processor, with each run taking approximately $1$ minute to complete. Memory requirements are negligible.

\end{document}